\def\eqref#1{equation~\ref{#1}}
\def\1{\bm{1}}
\DeclareMathAlphabet{\mathsfit}{\encodingdefault}{\sfdefault}{m}{sl}
\SetMathAlphabet{\mathsfit}{bold}{\encodingdefault}{\sfdefault}{bx}{n}
\newcommand{\Enc}{\mathrm{Enc}}
\newcommand{\EncTF}{\mathrm{EncTF}}
\newcommand{\Id}{\mathrm{Id}}
\newcommand{\te}{\mathrm{TE}}
\newcommand{\pe}{\mathrm{PE}}
\newcommand{\sa}{\mathrm{SA}}
\newcommand{\mha}{\mathrm{MHA}}
\newcommand{\ff}{\mathrm{FF}}
\newcommand{\proj}{\mathrm{PROJ}}
\newcommand{\F}{\mathcal{H}}
\newcommand{\op}{\mathcal{T}}
\newcommand{\act}{\mathrm{ACT}}
\newcommand{\dfnb}{\mathcal{P}}
\newcommand{\aha}{\texttt{aha}}
\newcommand{\efasp}{\texttt{E-FASP}}
\newcommand{\bipe}{\texttt{BiPE}}
\newcommand{\SEQ}{\texttt{SEQ}}
\newcommand{\seq}[1]{\overline{#1}}
\newcommand{\PRAM}{\textsf{PRAM}}
\newcommand{\MDM}{\textsf{MDM}}
\newcommand{\C}{\mathcal{C}}      
\newcommand{\ctrl}{c}              
\newcommand{\ctrlvec}{\mathbf{c}} 
\newcommand{\softmax}{\mathrm{softmax}}
\DeclareMathOperator*{\argmax}{arg\,max}
\newcommand{\EREW}{\textsf{EREW}}
\newcommand{\CREW}{\textsf{CREW}}
\newcommand{\CRCWCommon}{\textsf{CRCW\text{-}Common}}
\newcommand{\CRCWPriority}{\textsf{CRCW\text{-}Priority}}
\newcommand{\CRCWArbitrary}{\textsf{CRCW\text{-}Arbitrary}}
\providecommand{\Reg}{\mathsf{Reg}}
\providecommand{\Imm}{\mathsf{Imm}}
\providecommand{\Lab}{\mathsf{Lab}}
\providecommand{\Instr}{\mathsf{Instr}}
\providecommand{\addr}{\mathrm{addr}}
\providecommand{\pc}{\mathrm{pc}}        
\newcommand{\TDyck}{\mathrm{TDyck}}
\newtheorem{theorem}{Theorem}
\newtheorem{lemma}[theorem]{Lemma}
\newtheorem{definition}{Definition}
\newtheorem{conjecture}[theorem]{Conjecture}
\newcommand{\red}[1]{\textcolor{BrickRed}{#1}}
\newcommand{\blue}[1]{\textcolor{NavyBlue}{#1}}
\newcommand{\purple}[1]{\textcolor{DarkOrchid}{#1}}
\newcommand{\yellow}[1]{\textcolor{Dandelion}{#1}}
\setlist[itemize,1]{leftmargin=10pt, labelindent=5pt, itemsep=3pt, parsep=3pt}
\setlist[enumerate,1]{leftmargin=10pt, labelindent=5pt, itemsep=3pt, parsep=3pt}
\setlist[itemize,2]{leftmargin=15pt, itemsep=3pt, parsep=3pt}
\setlist[enumerate,2]{leftmargin=15pt, itemsep=3pt, parsep=3pt}
\newcommand{\f}{f}
\newcommand{\nxt}{\pi}
\newcommand{\mask}{\textsf{M}}
\newcommand{\unmask}{\purple{\textbf{unmask}}}
\newcommand{\remask}{\blue{\textbf{remask}}}
\newcommand{\delete}{\red{\textbf{delete}}}
\newcommand{\inser}{\yellow{\textbf{insert}}}
\useunder{\uline}{\ul}{}
\newcommandx{\cmt}[2][1=Comment]{\vspace{2pt}\todo[inline,backgroundcolor=black!5]{\textit{(#1)} \;#2}}
\crefname{section}{$\mathsection$}{$\mathsection\mathsection$}
\Crefname{section}{$\mathsection$}{$\mathsection\mathsection$}
\title{On Powerful Ways to Generate: Autoregression, Diffusion, and Beyond}
\author{
  Chenxiao Yang$^\dagger$~~~~Cai Zhou$^\S$~~~~David Wipf~~~~Zhiyuan Li$^\dagger$\\
  $\dagger$ Toyota Technological Institute at Chicago~~~~$\S$ Massachusetts Institute of Technology\\
    \small\texttt{\{chenxiao,zhiyuanli\}@ttic.edu}
}
\begin{document}

\maketitle


\begin{abstract}
Diffusion language models have recently emerged as a competitive alternative to autoregressive language models. Beyond next-token generation, they are more efficient and flexible by enabling parallel and any-order token generation. However, despite empirical successes, their computational power and fundamental limitations remain poorly understood. In this paper, we formally study whether non-autoregressive generation in Masked Diffusion Models (MDM) enables solving problems beyond the reach of Auto-Regressive Models (ARM). Our results show that MDM with sufficiently large context length is computationally universal with decoding steps matching the optimal parallel time complexity in PRAM. However, when controlling for other factors, MDM's flexibility to generate in any-order does not expand what ARM can already solve. To address this, we propose a new form of generation called any-process generation, which extends MDM with capabilities to remask, insert and delete tokens, allowing self-correction, length-variable editing, and adaptive parallelism. Theoretically and empirically, we demonstrate these capabilities enable scalability to significantly harder reasoning problems that are otherwise intractable for ARM and vanilla MDM. Additionally, they prove essential for generation tasks where objects naturally evolve through non-sequential processes, crucial for extending current LLMs beyond natural language to domains such as coding and science.\footnote{Code is available at \url{https://github.com/chr26195/AP-MDM}.}
\end{abstract}

\vspace{-5pt}
\section{Introduction}
\vspace{-5pt}

The underlying generation process of almost everything in nature follows a unidirectional arrow of time. Perhaps most representative of all, spoken language is produced through a sequential process where each word builds upon preceding context in causal temporal order. This generic inductive bias has been encoded into \emph{Auto-Regressive Models (ARM)}~\citep{shannon1951prediction}, through next-token generation. Despite its simplicity, ARM when scaled through training on vast corpora, has produced remarkably powerful models capable of general-purpose task completion and reasoning, like GPT~\citep{radford2018improving,radford2019language,brown2020language,achiam2023gpt}. 

Yet reality might be more convoluted. Humans, when tackling challenging tasks, naturally undergo a non-sequential process of searching for solutions, evaluating and refining them, backtracking when needed, and iterating until answers are found. Such complexity is not fully captured by current ARM. While it is debatable if human intelligence fundamentally follows this left-to-right process~\citep{lecun2023large,malach2023auto,bachmann2024pitfalls,berglund2023reversal,nagarajan2025roll}, ARM appears increasingly ill-suited when we venture beyond natural language.

For example, code generation must subject to global constraints like balanced parentheses and well-typedness. Maintaining validity at each intermediate step makes transitions from one state to another easier, thus naturally involving updates such as inserting functions, adding branches, or changing input types. In biology, many domains remain largely beyond the reach of current LLMs, as molecular structures such as proteins and genes are combinatorial objects that can be modeled as graphs, trees, or strings that satisfy physical constraints. Their generation proceeds most naturally through structure-aware edits, e.g., swapping protein domains, inserting binding motifs into sequence graphs, or recombining DNA/RNA segments~\citep{wang2023scientific}.

Given the long-standing pursuit of building foundation models powerful enough to handle increasingly complex reasoning tasks and general enough to work across diverse domains beyond natural language, it becomes important and timely to rethink generation process itself, as a mechanism separate from architectural specifics, by formally asking:

\vspace*{-0.1cm}
\begin{center}
\textit{How do we formally compare various ways to generate, and what opportunities may lie beyond next-token generation?}
\end{center}
\vspace*{-0.1cm}

Recent work suggests that next-token generation is not the only viable path. \emph{Masked Diffusion Models (MDM)}~\citep{hoogeboom2021argmax,austin2021structured,lou2024discrete,sahoo2024simple,shi2024simplified} offer a compelling alternative  procedure that, instead of causally generating tokens one by one, permits any-order generation and produces multiple tokens in parallel, with recent large-scale intantiations~\citep{deepmind2025gemini,inceptionlabs2025mercury,nie2025large,ye2025dream} showing comparable performance with AR-based LLMs. Interestingly, besides faster decoding (up to $10\times$ speedups), MDM's generation process brings empirical improvements on some order-sensitive tasks such as reversed-order poem completion~\citep{nie2025large,berglund2023reversal} and Sudoku puzzles~\citep{kim2025train,shah2024causal}. This motivates us to formally study it and compare with ARM.

Perhaps counterintuitively, we find \textbf{while MDM is indeed more powerful than ARM in terms of parallelism and efficiency for simple tasks, the benefits of seemingly greater flexibility are surprisingly limited.} Like ARM~\citep{merrill2023expresssive,feng2024towards,li2024chain}, MDM also achieves Turing-completeness, but does so more efficiently with optimal parallel time complexity {(\Cref{thm:main_mdm})}, thus enabling \emph{exponential} speedups for simple parallelizable problems. However, for harder reasoning tasks, MDM faces similar fundamental limitations as ARM: both struggle with problems requiring backtracking and rewriting capabilities, and cannot handle them given realistic space resources {(\Cref{thm:main_constrained})}. Moreover, when controlling for other factors including degree of parallelism and architecture, any-order generation itself does not expand what ARM can already handle {(\Cref{thm:any_order})}, since any computation performed by MDM can be reorganized into left-to-right order to align with the underlying arrow of time. Therefore we ask: 

\vspace*{-0.1cm}
\begin{center}
\textit{What are provably more powerful ways to generate?}
\end{center}
\vspace*{-0.1cm}

\begin{figure}[t]
	\centering
	\includegraphics[width=\textwidth]{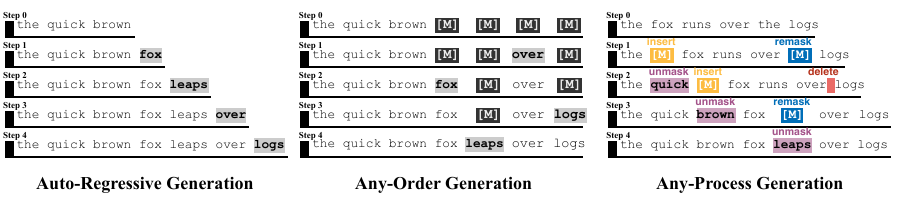}
    \vspace{-20pt}
	\caption{Comparison between autoregressive generation, any-order generation (standard MDM) and any-process generation (our MDM). \vspace{-10pt}}
    \label{fig:ap_mdm_compare}
\end{figure} 

As an initial step, we propose \textbf{Any-Process Generation}, inspired by natural generative mechanisms found across domains. It extends standard MDM beyond its existing $\unmask$ capability with three additional operations (see \Cref{fig:ap_mdm_compare}): $\remask$ (converting decoded tokens back to masks), $\inser$ (adding new mask tokens at any position), and $\delete$ (removing mask tokens), all learned end-to-end from data without architectural changes. Freed from conventional physics-inspired diffusion frameworks, any-process generation removes unnecessary restrictions on mask ratios, decoding steps, sequence lengths and stopping criteria, enabling structural editing and test-time scaling. With these modifications, we show that MDM brings significant promise with both encouraging theoretical and empirical results as follows.

\textbf{Scalability to Hard Problems:}~ The capability to rewrite and backtrack breaks the non-erasable limitations of ARM and standard MDM, enabling our model to achive both optimal parallel time and space complexity (\Cref{thm:main_apmdm}), thus solving many $\textsf{NP}$-hard problems with polynomial space through test-time scaling, i.e. an exponential improvement from $\textsf{P}$ achieved by ARM and standard MDM. Empirically, on Sudoku puzzles (\Cref{fig:sudoku_example}), our model achieves 99.28\% accuracy using only 100 training instances, outperforming ARM (87.18\%) and any-order MDM (89.49\%) with $5\times$ parameters trained on 1.8M instances, which is orders of magnitude more.


\textbf{Generality to Non-Sequential Objects:}~ The flexibility to rewrite, insert, and delete tokens enables structure-aware generation processes for objects that inherently resist sequential construction, such as DNA recombination in biology (\Cref{fig:dna_example}) and 2D graph generation (\Cref{fig:graph_example}). This advantage can be formalized through a simple task of matched parentheses generation (two-sided Dyck-k language), i.e. one of the most basic constraints in coding. \Cref{thm:apmdm_simulation} proves that ARM cannot cover the entire support beyond a length threshold, because enforcing the generation into a left-to-right order demands global foresight of future tokens that is beyond the computational power of constant-depth Transformers. In contrast, AP-MDM can easily do so at arbitrary length using insert operations (\Cref{fig:dyck_example}). Empirically, we verify the structural generation capability of AP-MDM on a challenging graph editing task. The results show that our approach maintains perfect accuracy for increasingly larger graphs, while ARM performance degrades significantly as graph size increases.

\textbf{Learning and (OOD) Generalization:}~ Our approach enables learning previously-impossible simpler algorithms that significantly improve learning and generalization. For parity checking (\Cref{fig:parity_example}), our model achieves 100\% generalization to arbitrary lengths after training on only length-2 sequences, while even the latest GPT models struggle on this embarrassingly simple task.

\begin{figure}[t]
	\centering
	{\setlength{\subfigcapskip}{-5pt}
	\subfigure[\textbf{Sudoku} (NP-Complete Problem). Scaling up inference-time computes to solve significantly harder problems by allowing rewrites and backtracking using the $\remask$ operation (\Cref{sec:advantage_1}).]{\includegraphics[width=\textwidth]{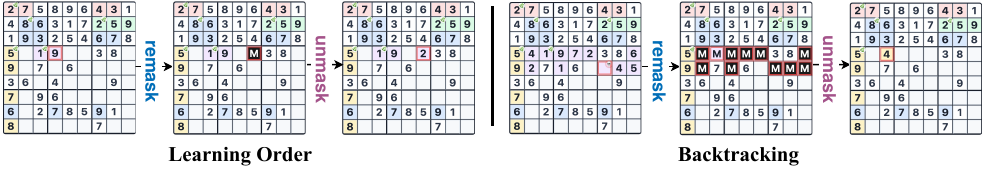}\label{fig:sudoku_example}}
	
    \vspace{-4pt}
	\subfigure[\textbf{Coding} (Matched Parentheses / Dyck-$k$). Generating any-sized two-sided Dyck-$k$ is impossible for ARM, while our model can easily do so with the $\inser$ operation (\Cref{sec:advantage_2}).]{\includegraphics[width=\textwidth]{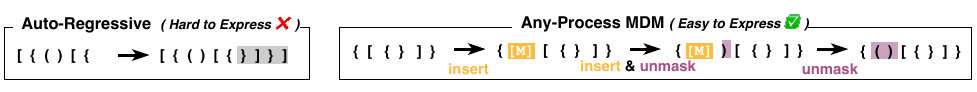}\label{fig:dyck_example}}

    \vspace{-4pt}
	\subfigure[\textbf{Parity} (Counting 1s). Length generalizing parity and counting problems are enabled by learning a simple elimination algorithm with the $\remask$ and $\delete$ operations (\Cref{sec:advantage_3}).]{\includegraphics[width=\textwidth]{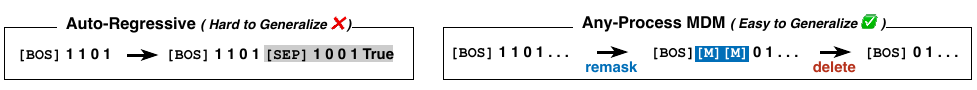}\label{fig:parity_example}}
	
    \vspace{-4pt}
	\subfigure[\textbf{DNA Recombination} (Splicing System). An example in science / biology where DNA segments are spliced and pasted using combinations of operations, which is hard for ARM (\Cref{sec:advantage_2}).]{\includegraphics[width=\textwidth]{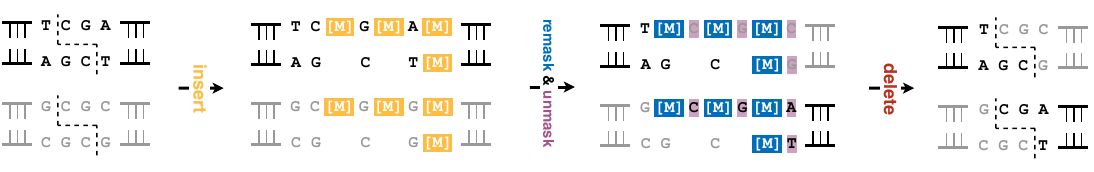}\label{fig:dna_example}}

    \vspace{-4pt}
	\subfigure[\textbf{Graph Editing.} Editing combinatorial structures where feature / structural evolution and parallel computation are naturally integrated (\Cref{sec:advantage_3}). Using ARM to simulate is hard (\Cref{sec:advantage_4}).]{\includegraphics[width=\textwidth]{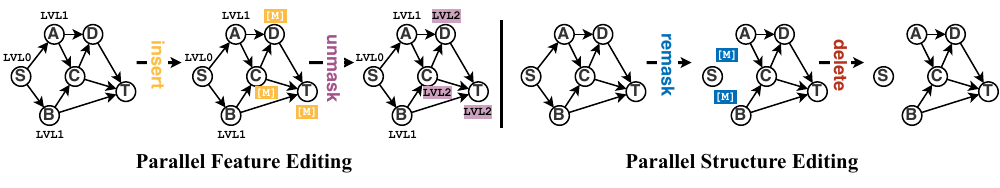}\label{fig:graph_example}}
	}
	\vspace{-10pt}
	\caption{Examples of any-process generation for different tasks.\vspace{-10pt}}
	\label{fig:all_examples}
\end{figure} 

Finally, envisioning a future with access to data of the underlying generation processes of objects we wish to generate, such as code revisions, math proof drafts, or molecular formation processes, any-process MDM is theoretically and empirically more suitable than ARM (\Cref{thm:apmdm_simulation2}) since AP-MDM is hard to be simulated by ARM due to the complexity introduced by editing operations.


\vspace{-5pt}
\section{Preliminary} \label{sec:preliminary}
\vspace{-5pt}

\textbf{Auto-Regressive Model (ARM)}~~~ Let $\Sigma$ be a finite-sized vocabulary and $\nxt: \Sigma^* \rightarrow \Sigma$ be a next-token predictor, which maps a sequence $\mathbf{x} = (x_1, x_2, \cdots, x_n) \in \Sigma^n$ to a token $x_{n+1} \in \Sigma$. An autoregressive model (ARM) is defined based on a sequence-to-sequence mapping $\f: \Sigma^* \rightarrow \Sigma^*$, concatenating input sequence $\mathbf{x}$ and the next token $\nxt(\mathbf{x})$, i.e. $f(\mathbf{x}) = (\mathbf{x}, \nxt(\mathbf{x}))$. ARM formulates generation as an iterative process by repeatedly applying $\f$ to the current sequence. In practice, $\f$ is typically parameterized by a Transformer~\citep{vaswani2017attention} with causal attention and learnable parameters $\theta$. The notion of ARM here also aligns with \emph{Chain-of-Thought (CoT)}~\citep{wei2022chain} in many other works and we will use them interchangeably throughout this paper.

\textbf{Masked Diffusion Model (MDM)}~~~ Let $\bar{\Sigma} = \Sigma \cup \{\mask\}$ be the extended vocabulary where $\mask$ is an absorbing mask token~\citep{austin2021structured}. Consider sequences $\mathbf{x}_t = (x_{t,1}, x_{t,2}, \ldots, x_{t,S}) \in \bar{\Sigma}^S$ indexed by time $t\in[T]$, where $S$ is the maximum context length, $T$ is the number of decoding steps, $\mathbf{x}_0 = \{\mask\}^S$ is the fully masked sequence and $\mathbf{x}_T \in \Sigma^S$ is the target clean sequence.\footnote{Unlike convention in diffusion model where larger $t$ denotes earlier inference steps, we use $t$ following an intuitive feed-forward ordering during inference, i.e. the focus of this paper.} A masked diffusion model (MDM)~\citep{lou2024discrete,sahoo2024simple,shi2024simplified} also relies on a sequence-to-sequence mapping $\f: \bar{\Sigma}^S \rightarrow \bar{\Sigma}^S$ with $\mathbf{x}_{t+1} = \f(\mathbf{x}_t)$, formulating generation as an iterative process by repeatedly applying $\f$ to progressively unmasks tokens from the all-mask state.

Among many MDM variants, we consider the following standard design choices from recent large language diffusion models~\citep{nie2025large}: \textbf{1)} linear noise schedule with $S = P \cdot T$ for integer $P$, where each step reveals exactly $P$ tokens; \textbf{2)} confidence-based adaptive decoding~\citep{chang2022maskgit} rather than random token selection; \textbf{3)} encoder-only Transformer architecture without timestep embedding; \textbf{4)} conditional generation where input prompt $\mathbf{x}$ of length $n$ is a prefix of $\mathbf{x}_0$, with $n$ calculated within context length $S$, aligned with reasoning problem setup for ARM. Detailed MDM introduction and encoder-only Transformer definition are in \Cref{appendix:mdm} and \Cref{appendix:encoder}, respectively.

\vspace{-5pt}
\section{A Theory of Masked Diffusion} \label{sec:theory}
\vspace{-5pt}

The generation process in MDM is unique in two different ways: it generates multiple tokens in parallel and permits any-order generation. We now investigate whether and how exactly these properties, in their own right, translate into concrete advantages.

\vspace{-3pt}
\subsection{Power of Parallelism} \label{sec:parallelism}
\vspace{-3pt}

Prior work~\citep{merrill2023expresssive,feng2024towards,li2024chain} has shown that ARM with sufficiently many intermediate steps is Turing-complete and thus can solve any computable problem. Analogous to the role of intermediate steps in ARM, two governing resources determine MDM's power: \textbf{1)} number of decoding (denoising) steps $T(n)$, and \textbf{2)} maximum context length $S(n)$ (equivalently, the maximum number of tokens available to decode). 

\begin{definition}[\MDM]
Let $\MDM(S(n), T(n))$ be the class of decision problems solvable by MDM (\Cref{sec:preliminary}) with maximum context length $S(n)$ and at most $T(n)$ decoding steps, using some constant depth and $\log(n)$ embedding size encoder-only Transformer. Also, let $\MDM(S(n))  = \bigcup\limits_{T(n)} \MDM(S(n), T(n))$.
\end{definition}
\vspace{-10pt}

To formally characterize MDM's expressivity in relation to $T(n)$ and $S(n)$, we establish a connection with the canonical parallel computation model called \emph{Parallel Random Access Machine (PRAM)}~\citep{fortune1978parallelism,jaja1992parallel}, which is the RAM model extended to multiple processors executing over shared memory. See detailed introduction and a formal definition of the variant we use in \Cref{appendix:pram}.

\begin{definition}[\PRAM]
    Let $P(n)$ be the number of processors budget, and $w(n)=\Theta(\log n)$ the word size. Define $\PRAM(P(n),T(n))$ as the class of decision problems solvable by a \emph{uniform} CREW PRAM (see \Cref{appendix:pram} for CREW specification) using at most $P(n)$ processors in at most $T(n)$ parallel time.
\end{definition}

\begin{theorem}[MDM Simulation of PRAM, Informal] \label{thm:main_mdm}
    For any PRAM program that runs on input $\mathbf{x} \in \Sigma^n$  in at most $T(n)$ parallel time with $P(n)$ maximum processors, there exists an MDM on input $\mathbf{x}$, padded to $S(n) = \mathcal O(P(n) \cdot T(n))$, that matches the PRAM output in $\mathcal O(T(n))$ decoding steps, i.e. $\PRAM(P(n),T(n)) \subseteq \MDM(\mathcal O(P(n) \cdot T(n)), \mathcal O(T(n)))$. See formal statement in \Cref{thm:main_mdm_formal}.
\end{theorem}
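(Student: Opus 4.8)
The plan is to simulate the \CREW{} \PRAM{} step-by-step inside the MDM's decoding trajectory, using the encoder's bidirectional attention to implement the content-addressed memory accesses that a \RAM{}-style processor performs. I would first fix a layout of the padded context of length $S(n) = \mathcal{O}(P(n)\cdot T(n))$ into $T(n)+1$ consecutive blocks of width $\mathcal{O}(P(n))$, reserving block $t$ to encode the configuration produced after the $t$-th parallel \PRAM{} step. Crucially, since each of the $P(n)$ processors touches only $\mathcal{O}(1)$ cells per step, the \emph{total} memory traffic is $\mathcal{O}(P(n)\,T(n))$, so I do not store a full memory image per block (which would blow up to $\Theta(P T^2)$ and exceed $S$); instead block $t$ holds only the $P(n)$ processor-local register states at time $t$ together with the $\mathcal{O}(P(n))$ write-records (address, value, time index) generated at step $t$. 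Positional encodings mark each token's block index and within-block role, which fit in the $\mathcal{O}(\log n)$ embedding budget alongside word-sized ($\mathcal{O}(\log n)$-bit) addresses and values.

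Next I would pin down the decoding schedule. Under the linear schedule with $S = P\cdot T$, the MDM unmasks exactly $P(n)$ tokens per step, and I would arrange the construction so that at step $t$ precisely the positions of block $t$ become confident (e.g.\ by having $\f$ emit a deliberately low-confidence prediction that keeps a position masked whenever its prerequisite block $t-1$ is still masked), so that confidence-based adaptive decoding reveals blocks in the order $0,1,\dots,T$. It then suffices to build a single constant-depth, $\mathcal{O}(\log n)$-width encoder-only transformer $\f$ that, given blocks $0,\dots,t-1$ (all visible through bidirectional attention), correctly fills block $t$. Because the \PRAM{} is uniform, the same transition program applies at every step, so one fixed set of weights (depending only on the program, not on $n$) handles all $t$, preserving uniformity.

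The heart of the argument is implementing one parallel \PRAM{} step with $\f$, which I would decompose into three bounded-depth sub-phases. \textbf{(Read)} Each processor-token computes its read address from its registers and retrieves the \emph{most recent} prior write to that address: an attention head whose queries carry the target address and whose keys carry each write-record's (address, time) can, via a two-scale logit that first enforces exact address match and then favors the largest time index, concentrate its softmax on the correct record, while concurrent reads are free since many query tokens may attend the same key. \textbf{(Compute)} Each processor applies its local word operation to its registers and the fetched value; this is a fixed finite computation on $\mathcal{O}(\log n)$-bit words, realizable by the feed-forward sublayers $\ff$. \textbf{(Write)} Each processor emits a write-record for block $t$, and \CREW{} exclusivity guarantees at most one write per cell, so no contention resolution is needed. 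The accept/reject bit is then read off a designated cell of block $T(n)$.

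The main obstacle I anticipate is the Read phase: realizing indirect, content-addressed access over the \emph{entire} revealed history with a constant number of attention layers and only $\mathcal{O}(\log n)$ bits of precision. The delicate points are (i) making softmax attention behave as hard selection, so that exact address equality is detectable and the logit gap dominates the residual softmax mass, which forces a careful choice of temperature/scaling and a numerically robust equality indicator within the $\mathcal{O}(\log n)$-bit budget; and (ii) the ``most recent write'' reduction, where address-matching and time-maximization must be composed inside one bounded-depth head without a second round of global communication. Everything else — the block layout, the uniform reuse of $\f$, the local word arithmetic in $\ff$, and the schedule enforcement — is routine once this associative-memory primitive is in place, and the resource accounting then yields $\PRAM(P(n),T(n)) \subseteq \MDM(\mathcal{O}(P(n)\cdot T(n)), \mathcal{O}(T(n)))$ directly.
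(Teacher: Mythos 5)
Your proposal matches the paper's construction essentially step for step: the paper also lays out the context as one block per parallel round containing only the $P(n)$ processor states plus per-step write-records (an 8-token tuple $\langle\texttt{PC},\texttt{R}_1,\dots,\texttt{R}_5,\texttt{Addr},\texttt{Val}\rangle$ per processor, delimited by \texttt{[SEP]}), forces the confidence-based scheduler to reveal blocks in order by emitting uniform (zero-logit) predictions at not-yet-eligible positions, and resolves your flagged ``most recent write'' obstacle with a rightmost-exact-match average-hard attention head (saturated softmax) inside its $\efasp$ framework, with local word arithmetic in constant-depth polylog-width feed-forward layers. The approach and resource accounting are the same, so the proposal is correct.
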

\vspace{-3pt}

\textbf{This demonstrates that MDM can simulate any PRAM algorithm with optimal parallel time complexity, thereby it is not only Turing-complete as ARM already achieves, but can also solve problems significantly faster with parallelization, something ARM cannot offer.} The speedup can be \emph{exponential} compared to ARM's serial time complexity: for efficiently parallelizable problems in $\textsf{NC}$~\citep{arora2009computational},\footnote{$\textsf{NC}$ is the complexity class for efficiently parallelizable problems, those that are solvable in $\text{polylog}(n)$ time using $\text{poly}(n)$ processors; $\textsf{NC} \subseteq \textsf{P}$ and it is open whether $\textsf{NC} = \textsf{P}$~\citep{greenlaw1995limits}. PRAM is the canonical model for this notion as a Turing machine is for \textsf{P}.} graph connectivity can be solved in $\mathcal{O}(\log n)$ decoding steps versus ARM's linear complexity, and context-free languages including Dyck-k require only $\mathcal{O}(\log^2 n)$ steps. These tasks have been demonstrated hard or inefficient for ARM in previous literature~\citep{strobl2024formal,zhu2025reasoning}.

\vspace{-3pt}
\subsection{(Un)Scalability to Hard Tasks} \label{sec:inherently_hard}
\vspace{-3pt}

While noteworthy, the computational power described above comes with a non-negligible cost: solving a problem requires context length $S(n)$ to scale as $\mathcal{O}(T(n) \cdot P(n))$ (the total parallel work), a quantity at least as large as the serial time complexity (with $P(n) = 1$), per Brent's Theorem~\citep{jaja1992parallel}. Particularly, in resource-constrained regimes, we have:

\begin{theorem} \label{thm:main_constrained}
$\MDM(S(n)) \subseteq \PRAM(1, \tilde{\mathcal{O}}(S^3(n)))$, where logarithmic factors are hidden in $\tilde{\mathcal{O}}$.
\end{theorem}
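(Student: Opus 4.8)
The plan is to show that a single-processor RAM can replay the entire MDM decoding trajectory in sequential time $\tilde{\mathcal{O}}(S^3(n))$, by bounding separately the number of decoding steps and the per-step simulation cost and then multiplying the two. Recall that $\PRAM(1,\cdot)$ is just the RAM model with $\Theta(\log n)$-bit words, so the goal is exactly a $\tilde{\mathcal{O}}(S^3(n))$-time sequential simulation.

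First I would bound the number of decoding steps. In standard MDM the map $f$ only ever \emph{unmasks}: a position moves from $\mask$ to a symbol of $\Sigma$ and never returns, since the process has no $\remask$, $\inser$, or $\delete$ operation. Under the linear noise schedule with $S = P\cdot T$, each step reveals exactly $P \ge 1$ fresh tokens, so the trajectory terminates after $T(n) = S(n)/P \le S(n)$ applications of $f$. Consequently, no matter which $T(n)$ is chosen in the union defining $\MDM(S(n))$, it suffices to simulate $\mathcal{O}(S(n))$ forward passes.

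Next I would bound the cost of one application of $f$. Here $f$ is a constant-depth, encoder-only Transformer with embedding dimension $d = \mathcal{O}(\log n)$ acting on a sequence of length $L = S(n)$. The dominant cost is self-attention: forming the $L \times L$ score matrix takes $\mathcal{O}(L^2 d)$ word operations (one inner product over $d$ coordinates per entry), and value aggregation costs the same, while softmax, layer normalization, residual connections, and the position-wise feed-forward block contribute only $\mathcal{O}(L \cdot \mathrm{poly}(\log n))$. With constant depth and a constant number of heads, one forward pass therefore costs $\mathcal{O}(L^2 d) = \tilde{\mathcal{O}}(S^2(n))$, using $n \le S(n)$ so that $\log n$ factors are absorbed into $\tilde{\mathcal{O}}(\cdot)$ in $S(n)$. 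I must also charge the confidence-based adaptive decoding, which selects the $P$ highest-confidence positions to reveal; a single top-$P$ selection over $L$ scores costs $\tilde{\mathcal{O}}(S(n))$ and is subdominant. Multiplying the $\mathcal{O}(S(n))$ steps by the $\tilde{\mathcal{O}}(S^2(n))$ per-step cost gives $\tilde{\mathcal{O}}(S^3(n))$ total sequential time, i.e. $\MDM(S(n)) \subseteq \PRAM(1, \tilde{\mathcal{O}}(S^3(n)))$.

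The main obstacle I anticipate is the precision bookkeeping rather than the step-count arithmetic: I must argue that carrying the Transformer's real-valued operations, in particular the exponentials inside softmax, at $\Theta(\log n)$-bit precision is both sufficient to reproduce the exact discrete unmasking choices and the final accept/reject decision the MDM makes, and achievable with only $\mathrm{polylog}(n)$ overhead per scalar operation, so that all numerical error and all arithmetic overhead are safely hidden inside the $\tilde{\mathcal{O}}$. A secondary point requiring care is uniformity: since the embedding size grows with $n$, I would need the Transformer weights, and hence the simulating RAM program, to be produced by a single uniform procedure, so that the construction lands in the \emph{uniform} CREW $\PRAM$ class of the definition.
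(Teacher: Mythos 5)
Your proposal is correct and follows essentially the same route as the paper's own proof: bound the number of decoding steps by $T(n)\le S(n)$ (since standard MDM only unmasks and each step reveals at least one token), bound one forward pass by the attention cost $\Theta(S(n)^2 d)=\tilde{\mathcal{O}}(S(n)^2)$ with $d=\Theta(\log(\cdot))$, and multiply to get $\tilde{\mathcal{O}}(S(n)^3)$ sequential time. Your added remarks on log-precision arithmetic, the top-$P$ selection cost, and uniformity of the simulating RAM program are reasonable refinements that the paper's proof leaves implicit, but they do not change the argument.
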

\vspace{-3pt}

In other words, MDM with context length $S(n)$ cannot solve problems requiring more than $\tilde{\mathcal{O}}(S^3(n))$ serial time. This limitation is also shared by ARM~\citep{yang2025pencil}.

\textbf{This implies MDM is inherently not scalable to solving hard reasoning or generation tasks}: for problems beyond $\textsf{P}$ (e.g., $\textsf{NP}$-hard problems), this would require superpolynomial context length (under standard complexity assumptions), practically intractable in terms of both memory and per-step FLOPs. The root cause lies in MDM's irreversible token generation: once decoded, those positions cannot be reused or rewritten. As reflected in the construction of \Cref{thm:main_mdm}, each memory write must be permanently stored as tokens, forcing space to scale with computation time.

In contrast, human reasoning on hard problems naturally involves  continuous revision, exploration of alternative paths, and correction of mistakes before reaching final conclusions. Generation tasks are no different: for instance, generating planar graphs (drawable on planes without edge crossings) with minimum splitting numbers is NP-complete and naturally involves iteratively adding nodes, checking planarity constraints, and backtracking when violations occur. Such process has not been captured by either ARM or MDM.

\vspace{-3pt}
\subsection{(Limited) Power of Any-Order Generation}
\vspace{-3pt}

Any-order generation seems to offer extra flexibility over auto-regression, but does it truly translates into computational advantages? To attribute gains to any-order generation itself, we control for orthogonal factors differentiating ARM and MDM: \textbf{1)} the number of tokens generated per step, and \textbf{2)} the backbone architecture (decoder v.s. encoder). The former has already been shown to confer stronger parallelism to MDM (\Cref{sec:parallelism}); the latter provides internal parallelization benefits~\citep{ewer2024entp} and improved expressiveness through padding with dummy  tokens (i.e. $\mask$)~\citep{merrill2025exact}. 

Therefore, we fix MDM to emit exactly one token per step and ARM to use the encoder‑backbone with mask tokens padding the sequence to the same length (called \emph{Masked-ARM}), or equivalently:

\begin{definition}[Masked-ARM] \label{def:masked_arm}
A Masked-ARM is defined as an MDM (\Cref{sec:preliminary}) that is forced to decode in left-to-right order and one token per step.
\end{definition}
\vspace{-3pt}

\textbf{Perhaps counterintuitively, we show that the computational benefits from any-order generation are rather limited, by itself not enabling what Masked-ARM cannot already solve}:

\begin{theorem}[Left-to-Right v.s. Any-Order, Informal] \label{thm:any_order}
For any AO-MDM with context length $S(n)$ decoding one token per step, there exists a Masked-ARM with length $\mathcal{O}(S(n))$ and extra constant layers, that can produce the same generation process for any given input $\mathbf x$, by explicitly specifying both where to write (position) and what to write (token). See formal statement in \Cref{lemma:aomdm_simulation}.
\end{theorem}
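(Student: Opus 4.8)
The plan is to give an explicit step-by-step simulation in which the Masked-ARM writes, left-to-right, a trace of the any-order decoding events, and then recovers everything it needs to continue that trace purely from attention over what it has already written. Concretely, I would maintain the invariant that after $t$ macro-steps the Masked-ARM's decoded prefix encodes the ordered list of write events $(p_1,a_1),\dots,(p_t,a_t)$ produced by the AO-MDM, where $p_s\in[S(n)]$ is the position the AO-MDM unmasked at step $s$ and $a_s\in\Sigma$ is the token it wrote there. The observation that makes this linearization faithful is that, because the AO-MDM decodes exactly one token per step and never remasks, its state $\mathbf{x}_t$ is a deterministic function of this event list: position $i$ holds the prompt symbol if $i\le n$, the recorded value $a_s$ if $i=p_s$ for some $s\le t$, and $\mask$ otherwise. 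Hence no information is lost by writing the events in left-to-right order rather than in the order the AO-MDM chose.

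Each macro-step must then (i) reconstruct the virtual state $\mathbf{x}_t$ from the trace, (ii) run the AO-MDM's own network on $\mathbf{x}_t$ to obtain per-position confidences and predicted tokens, (iii) select the masked position $p_{t+1}$ of maximum confidence together with its predicted token $a_{t+1}$ exactly as the AO-MDM's confidence-based decoder would, and (iv) append the record $(p_{t+1},a_{t+1})$ to the trace. Step (i) is the technical heart: I would dedicate $S(n)$ \emph{slot} positions whose positional encodings hard-code the identities $1,\dots,S(n)$, and at slot $j$ issue an associative-lookup (gather) attention into the trace region that matches on the recorded position field, returning $a_s$ when some record carries $p_s=j$ and defaulting to $\mask$ otherwise. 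This associative recall is implementable in $\mathcal{O}(1)$ attention layers given $\Theta(\log n)$-width embeddings that can hold a position index in $[S(n)]$, and the one-token-per-step, no-remask property guarantees each key $j$ is matched by at most one record, so the gather is unambiguous.

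With the virtual state materialized across the slot positions, I would invoke the AO-MDM network as a black box: feeding the reconstructed symbols into a copy of its layers (these are the extra constant layers) reproduces its logits, hence its confidence scores, on exactly the state the AO-MDM would have seen. Selecting the argmax-confidence masked slot and reading off its predicted token reproduces the AO-MDM's decision deterministically; formatting this into the pair $(p_{t+1},a_{t+1})$ and writing it to the next free trace position preserves the invariant. Induction over $t$ then shows the Masked-ARM reproduces the entire decoding trajectory and therefore generates the same final sequence. The length is $\mathcal{O}(S(n))$ because there are at most $S(n)$ events, each stored in $\mathcal{O}(1)$ slots whose $\Theta(\log n)$-width embeddings carry the position field together with the token.

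I expect the main obstacle to be step (i) and its interface to step (ii): the Masked-ARM is physically constrained to write left-to-right, so the state it must feed the base network is a permutation of its own write order, and this permutation has to be undone inside a single forward pass using only attention over the already-written trace. Making the gather exact rather than an averaged softmax blur, and ensuring the reconstructed confidences reproduce the AO-MDM's argmax without introducing tie-breaking discrepancies, is the delicate part. I would handle it by routing position indices through the $\Theta(\log n)$ embedding coordinates and using a sufficiently saturated attention pattern for the match, arguing that exactness is inherited from the AO-MDM's decoding already being well-defined on $\mathbf{x}_t$.
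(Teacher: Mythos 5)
Your proposal is correct and follows essentially the same route as the paper: encode each AO-MDM decoding event as a chronologically ordered address--token pair, reconstruct the AO-MDM's virtual state from the trace via an exact attention-based gather keyed on position indices, feed it through a replicated copy of the AO-MDM's own embedding and network layers to recover identical logits and confidences, and emit the argmax-confidence masked position together with its predicted token. The paper formalizes exactly this via its $\texttt{mdm\_embed}$/$\texttt{mdm\_decode}$ decomposition and an $\efasp$ program using rightmost-exact-match attention, and it likewise flags your two delicate points (computability of $S(n)$ and determinism of the confidence score) as explicit assumptions.
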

\vspace{-3pt}

Simulating any-order generation with autoregressive models is not hard because the attention mechanism is good at fetching information from any position, and re-organizing it internally in the correct order to perform the same computation. While Masked-ARM need not replicate MDM's exact final sequence, an additional post-processing step can align their outputs without affecting the theoretical conclusion. There are also some empirical evidences showing the effectiveness of ARM simulating any-order~\citep{xue2025any}.

But not all intricacies inherent in natural generation processes can be easily sequentialized. Coding for example (as well as many natural scientific processes alike), involves anywhere editing where a new valid state depends upon previous valid states that may not be contained in the final sequence.  And even when described in left-to-right temporal order, reproducing the state requires more than simple re-organization, which attention is already provably good at. Hence such a complex process is not captured by ARM or MDM as currently instantiated.

\textbf{Remark}~~ We note that MDM's observed advantages in practice may lie in discovering an optimal order~\citep{kim2025train} from data, where left-to-right ordering need not exactly correspond to the optimal temporal generation order, though computationally equivalent (\Cref{thm:any_order}).

\vspace{-5pt}
\section{{Any-Process Generation}} \label{sec:any_process}
\vspace{-5pt}

We now introduce {\emph{Any-Process Generation}}, a more powerful generation paradigm that extends the any-order masked diffusion from \Cref{sec:preliminary} (referred to as AO-MDM hereafter) by removing various restrictions to capture natural processes not present in existing generation strategies.

\textbf{A General Formulation}~~~ Let $f_\theta: \bar{\Sigma}^* \rightarrow \bar{\Sigma}^* \times \Sigma^* \times \C^*$ be a function, by default parameterized by the same Transformer architecture, which on input $\mathbf{x}_k \in \bar{\Sigma}^*$ returns the triple $(\mathbf{x}_k, \mathbf{y}_k, \ctrlvec_k)$ with $\mathbf{y}_k \in \Sigma^{|\mathbf{x}_k|}$ and $\ctrlvec_k \in \C^{|\mathbf{x}_k|}$, where $\mathbf{x}_k$ may contain one or more $\mask$ tokens (a masked sequence) while $\mathbf{y}_k$ is mask-free; $\mathbf{y}_k$ and $\ctrlvec_k$ have the same length as the input. Core to the design of this generation process is a parameter-free (and optionally non-deterministic) transition function $g: \bar{\Sigma}^* \times \Sigma^* \times \C^* \rightarrow \bar{\Sigma}^*$, which takes $(\mathbf{x}_k, \mathbf{y}_k, \ctrlvec_k)$ as input to produce the next sequence $\mathbf{x}_{k+1} \in \bar{\Sigma}^*$ that can differ in length from $\mathbf{x}_k$. The inclusion of input $\mathbf{x}_k$ itself in the output of $f_\theta(\mathbf{x}_k)$ ensures that $g$ has access to which positions are masks initially. Overall, generation is formulated as the iterative application of $f_\theta$ and $g$ until some stopping criterion is met:
\begin{align}
\mathbf{x}_{t+1} = g\big(f_\theta(\mathbf{x}_t)\big), \quad\text{and hence }\; \mathbf{x}_t = (g \circ f_\theta)^t(\mathbf{x}_0) = {g \circ f_\theta \circ g \circ f_\theta \circ \cdots \circ g \circ f_\theta}(\mathbf{x}_0).
\end{align}
Notably, unlike vanilla MDM, this framework imposes no restrictions on mask ratios at any given time 
step, therefore each decoding step can unmask an arbitrary number of tokens, and 
$\mathbf{x}_0$ can be the input prompt $\mathbf{x}$ directly with no initial mask, as in ARM. 
This framework also does not limit the maximum number of decoding steps $T$, the stopping 
criterion is flexible and need not to be a fully unmasked sequence, as in ARM. We dub this class of models as \emph{Any-Process MDM} (AP-MDM) and will detail a specific instantiation. It is not difficult to see AO-MDM is a special case of AP-MDM.

\vspace{-3pt}
\subsection{An Instantiation with $\unmask$, $\remask$, $\inser$, and $\delete$ Operations}
\vspace{-3pt}

Define $\C = \{0,1\}^3$. For each position $i$ and time step $t$, the per-position control is a 3-bit vector $\ctrl_{t,i} = (\ctrl_{t,i}[1],\,\ctrl_{t,i}[2],\,\ctrl_{t,i}[3]) \in \C$ reserved for different purposes that will be detailed below. Correspondingly, we write $\ctrlvec_t = (\ctrl_{t,1},\ldots,\ctrl_{t,|\mathbf{x}_t|}) \in \C^{|\mathbf{x}_t|}$. 

\textbf{Capability: Rewrite via Remask}~~~ We use the first bit of the per-position control  $\ctrl_{t,i}[1] \in \{0,1\}$ to control remasking (and whether to unmask) and define $\forall y \in {\Sigma}$:
\begin{align}
    \remask_{x_{t,i}, \ctrl_{t,i}}(y) = \begin{cases}
\mask & \text{if } \ctrl_{t,i}[1] = 1 \\
y & \text{if } x_{t,i} = \mask \text{ and } \ctrl_{t,i}[1] = 0 \\
x_{t,i} & \text{otherwise}
\end{cases}
\end{align}
In other words, when $\ctrl_{t,i}[1] = 1$, position $i$ is a mask after decoding regardless of its previous state; otherwise, standard unmasking follows as usual. This operation enables self-correction and \emph{test-time scaling}, allowing models to scale computation exponentially in $S(n)$ before state repetition occurs. Additionally, since the remasking signal can be learned from data, models can adaptively determine both decoding order and parallelization degree at each step.

\textbf{Capability: Length-Variable Edit via Insert / Delete}~~~ We use the second and third bits of the per-position control $\ctrl_{t,i}$ to govern insertion and deletion, respectively. Define $\forall y \in \bar{\Sigma} \cup \{\epsilon\}$:
\begin{align}
\inser_{\ctrl_{t,i}}(y) = \begin{cases}
(y, \mask) & \text{if } \ctrl_{t,i}[2] = 1 \\
y & \text{otherwise}
\end{cases},~ \delete_{x_{t,i}, \ctrl_{t,i}}(y) = \begin{cases}
    \epsilon & \text{if } x_{t,i} = \mask \text{ and } \ctrl_{t,i}[3] = 1 \\
    y & \text{otherwise}
    \end{cases}
\end{align}
where $\epsilon$ denotes the empty string.
In other words, insert adds a mask token after position $i$ when $\ctrl_{t,i}[2] = 1$, and delete removes position $i$ when it was originally a mask token ($x_{t,i} = \mask$) and $\ctrl_{t,i}[3] = 1$. These operations enable dynamic sequence length adjustment based on problem complexity, with the insert operation allowing sequence length to grow exponentially as \emph{each mask token can spawn additional masks}. Furthermore, the delete operation provides computational efficiency by freeing space during stages that require less extensive computation, reducing overall FLOPs waste. This mechanism can work orthogonally with semi-autoregression~\citep{arriola2025block} that expands space at the end of the sequence.

\textbf{Composition}~~~ To summarize, each decoding step applies the three operations coordinate-wise and concatenates the resulting segments:
\begin{align} \label{eq:apmdm_implementation}
\begin{split}
    &f_\theta(\mathbf{x}_t) = (\mathbf{x}_t, \mathbf{y}_t, \ctrlvec_t), \quad g(\mathbf{x}_t, \mathbf{y}_t, \ctrlvec_t) = (s_{t,1}, s_{t,2}, \ldots, s_{t,|\mathbf{x}_t|}) = \mathbf{x}_{t+1} \\
    &\text{where } s_{t,i} = \inser_{\ctrl_{t,i}} \circ \delete_{x_{t,i}, \ctrl_{t,i}} \circ \remask_{x_{t,i}, \ctrl_{t,i}}(y_{t,i}), \quad \forall i \in [|\mathbf{x}_t|]
\end{split}
\end{align}

An algorithmic description of any-process generation with these operations is given in \Cref{alg:apmdm_sampling}, \Cref{appendix:algorithm}. And the stopping criterion can be flexibly defined, e.g. one can use generation of an EOS token (as in ARM) or convergence to a repeated sequence (loop occurs). Architecturally, implementing these capabilities requires no changes to the Transformer structure, only three additional logit dimensions are needed for producing control signals, i.e. three extra linear heads (details in \Cref{appendix:apmdm_architecture}).

\textbf{Pre-Training / Fine-Tuning / Data Availability}~~~ All three operations can be trained end-to-end from text corpora using self-supervised objectives, preserving MDM's scalability to large-scale training (details in \Cref{appendix:training}). The minimal architectural changes also enable direct fine-tuning from existing large diffusion models, possibly using supervised data where the underlying generation process is explicitly constructed (details in \Cref{appendix:training2}). This training flexibility separates our approach from alternatives like looped Transformers~\citep{dehghani2018universal,giannou2023looped}, which are expressive but notoriously difficult to train due to lack of intermediate supervision.

\textbf{Design Considerations}~~~ The proposed three operations all revolve around the mask token $\mask$, leveraging existing MDM's strong unmasking capability while only adding modular extensions that are easier to pretrain or fine-tune than learning harder operations such as inversion of uniform noise~\citep{sahoo2025diffusion}. Moreover, while the definitions of $g$ and $\ctrlvec_t$ in (\ref{eq:apmdm_implementation}) suffice for achieving theoretical benefits detailed later, they are not necessary conditions; other designs are possible.

We note that the idea of remasking and editing have been individually explored in some prior/concurrent works~\citep{wang2025remasking,peng2025path,von2025generalized,havasi2025edit,wu2025dreamon}. However, there has not been a systematic study of guidance principles for yielding provable computational benefits and their practical implications. See discussions in \Cref{appendix:related}.


\vspace{-5pt}
\section{The Power of Any-Process Generation}
\vspace{-5pt}

We now show how any-process generation circumvents various difficulties that current ARM and MDM encounter when handling tasks across different domains and complexities.


\vspace{-3pt} 
\subsection{Universally Efficient Computation} \label{sec:advantage_1}
\vspace{-3pt}

\begin{tcolorbox}[
    enhanced,
    colback=Gray!3!white,
    colframe=Gray,
    arc=4mm,
    boxrule=1pt,
    left=4pt,
    right=4pt,
    top=4pt,
    bottom=4pt,
    drop shadow={opacity=0.15, xshift=1.5pt, yshift=-1.5pt},
    fonttitle=\bfseries,
    coltitle=Gray,
    before skip=8pt,
    after skip=8pt
]
\textit{\textbf{Benefit 1}: Scalability to significantly harder problems through rewriting and backtracking.}
\end{tcolorbox} 

As discussed in \Cref{sec:inherently_hard}, inherently hard problems (e.g. many NP-hard tasks) typically do not admit sequential processes but require iterative ``search–verify–revise" loops, which hold across various domains: from theorem proving, solving code challenges to synthesis of complex structures in nature. The pathological way current generation paradigms let discardable tokens accumulate indefinitely creates scaling barriers, where space explodes and each step incurs ever-increasing computational cost (\Cref{thm:main_constrained}). We now demonstrate how AP-MDM resolves this:

\begin{theorem}[AP-MDM Simulation of PRAM, Informal] \label{thm:main_apmdm}
    For any PRAM program that runs in at most $T(n)$ parallel time, $P(n)$ processors and $S(n)$ memory usage, there exists an AP-MDM that matches PRAM output on any given input with $\mathcal{O}(S(n))$ context length and $\mathcal{O}(T(n))$ decoding steps. See formal statement in \Cref{thm:main_apmdm_formal}.
\end{theorem}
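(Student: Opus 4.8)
The plan is to reuse the single-PRAM-step simulation already established for \Cref{thm:main_mdm}, and to replace its write-once memory by a \emph{reusable} working memory so that space no longer scales with time. Recall that the bottleneck identified for standard \MDM{} is irreversibility: each parallel step writes its new memory state into a fresh block of tokens, so after $T(n)$ steps the context grows to $\mathcal{O}(P(n)\cdot T(n))$. The \remask{} operation removes exactly this obstruction, since it can reset a decoded position to $\mask$ and then have it overwritten, so a single block of $\mathcal{O}(S(n))$ tokens can be recycled across all $T(n)$ steps rather than duplicated.

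Concretely, I would first fix a layout that stores the entire PRAM configuration (the $S(n)$ shared-memory words together with the processor-local registers) in $\mathcal{O}(S(n))$ tokens, packing each $\Theta(\log n)$-bit word into a single position via the $\log(n)$ embedding width. One constant-depth encoder-only Transformer pass then simulates one parallel step exactly as in \Cref{thm:main_mdm}: for each memory position, attention routes to the (unique, by CREW write-exclusivity) processor that updates that cell, gathers the operands it read, and emits both the new word value $y_{t,i}$ and the control bits $\ctrl_{t,i}$. The only thing that changes relative to \Cref{thm:main_mdm} is how these new values are committed to memory.

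The key new step is space reuse via double buffering. I would maintain two interleaved blocks, a \emph{live} block and a \emph{scratch} block of masks, and spend two decoding steps per PRAM step. In the first step the Transformer reads the live block and \unmask{}s the scratch block with the freshly computed next-state words; in the second step it sets $\ctrl_{t,i}[1]=1$ on every live-block position, so \remask{} recycles the now-stale block into masks that become the scratch block for the next PRAM step. Since the two blocks alternate roles, the total context stays at $2S(n)=\mathcal{O}(S(n))$ and the step count at $2T(n)=\mathcal{O}(T(n))$, giving simultaneously optimal parallel time and space and matching the formal claim in \Cref{thm:main_apmdm_formal}. Note that \inser{} and \delete{} are not needed here; \remask{} alone breaks the space barrier of \Cref{thm:main_constrained}, although insert/delete would let the same scheme track a PRAM whose memory footprint grows during execution.

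The main obstacle is the commit timing forced by the \remask{} semantics: because unmasking fills only positions that are currently $\mask$, a live cell cannot be cleared and rewritten within a single decoding step, so a naive in-place overwrite would either destroy operands still required by concurrent reads or force old and new values to coexist at one position. Double buffering resolves this by computing the full next configuration into fresh masks before any live cell is cleared. Beyond this, the construction must still resolve CREW write conflicts and route each write to the correct scratch position purely through attention-based position addressing, and it must keep depth constant and width $\Theta(\log n)$ uniformly in $n$, precisely as in the proof of \Cref{thm:main_mdm}; these are inherited rather than new difficulties.
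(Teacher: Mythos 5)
Your proposal is correct in substance but takes a genuinely different route from the paper. The paper first introduces an intermediate abstraction, a ``Rewrite-MDM'' whose primitive directly overwrites a position with a new token, and proves a simulation lemma showing that each Rewrite-MDM transition is realized by \emph{three} AP-MDM steps: $\inser$ a fresh $\mask$ after every position, then simultaneously $\unmask$ the new positions with the next state and $\remask$ the stale ones, then $\delete$ the leftover masks to restore the original length (with \texttt{[BOS]}/\texttt{[EOS]} sentinels tracking which of the three phases is active); the PRAM simulation is then built on top of Rewrite-MDM, with positional encodings serving as memory addresses. You instead avoid $\inser$/$\delete$ entirely by keeping a static double buffer of size $2S(n)$ and alternating which half is live, which is arguably more elementary and makes explicit that $\remask$ alone suffices to break the space barrier of \Cref{thm:main_constrained}; the paper's version buys a reusable rewrite abstraction and a sequence that returns to length $S(n)$ between rounds, at the cost of invoking all three editing operations. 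One small wrinkle in your two-step variant: after the ``unmask scratch'' step both buffers are fully populated, so the model must be able to tell which buffer is stale before remasking it; a single alternating tag is not enough to orient two populated buffers. This is easily repaired either by a mod-3 round tag or, more cleanly, by merging your two steps into one --- since $f_\theta$ computes $\mathbf{y}_t$ and $\ctrlvec_t$ from the pre-transition sequence $\mathbf{x}_t$, you can unmask the scratch buffer and remask the live buffer in the same decoding step, preserving the invariant that exactly one buffer is all-masks at every step and giving one decoding step per PRAM round.
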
\vspace{-3pt}

By comparison, standard MDM requires space scaling with the total work $\mathcal{O}(P(n) \cdot T(n))$ (\Cref{thm:main_mdm}), whereas AP-MDM requires only the actual space needed, achieving both optimal parallel time and space complexity (\Cref{thm:main_apmdm}). This implies AP-MDM not only retains the efficiency of parallelization, but also dramatically expands the range of solvable problems.  In particular, given polynomial context length, AP-MDM can solve problems in \textsf{PSPACE} rather than just \textsf{P}, which is an exponential improvement that makes many \textsf{NP}-hard problems solvable with test-time scaling.

\textbf{Experiment: Sudoku Puzzles}~~ We conduct experiments on Sudoku puzzles, i.e. an \textsf{NP}-complete problem when generalized to $n^2 \times n^2$ grids, requiring both the capability of any-order generation, and the capability to rewrite. As illustrated in \Cref{fig:sudoku_example}, AP-MDM can use the $\remask$ (and standard $\unmask$) operations to choose the easiest position to fill first, and also erase failed branches and try alternative assignments, effectively scaling computates to solve harder instances.

We follow the experimental setup from~\citep{kim2025train} but with a key difference: while the original work used 1.8M training puzzles, we use only 100 (moderately hard) instances for training AP-MDM. Despite this significantly reduced dataset, our approach achieves near-perfect accuracy (99.28\%) on most Sudoku puzzles, outperforming both AO-MDM and ARM that use substantially more samples and larger model sizes, as shown in \Cref{fig:sudoku_combined}. Any-process generation is sample more efficient because if the model is allowed to conduct more computes and more steps to solve the same problem, each step would become easier to learn. Orthogonally, we find from the training dynamics in \Cref{fig:sudoku_combined} that the model quickly learns to identify and fill the easiest positions (unmasking loss drops rapidly), while learning the order (which position to fill next) proves more challenging.

\begin{figure}[t!]
    \centering
    \begin{minipage}[t]{0.5\linewidth}
        \vspace{0pt}
        \centering
        \resizebox{\textwidth}{!}{%
        \setlength{\tabcolsep}{1mm}{%
        \begin{tabular}{@{}lccc@{}}
        \toprule
        \textbf{Method} & \textbf{\# Samples} & \textbf{\# Param} & \textbf{Accuracy} \\
        \midrule
        ARM {\small(w/o ordering)} & 1.8M & 42M & 9.73\% \\
        ARM {\small(with ordering)} & 1.8M & 42M & 87.18\% \\
        \midrule
        AO-MDM {\small(vanilla)} & 1.8M & 6M & 6.88\% \\
        AO-MDM {\tiny(Top-K probability)}  & 1.8M & 6M & 18.51\% \\
        AO-MDM {\tiny(Top-K prob. margin)} & 1.8M & 6M & 89.49\% \\
        \midrule
        AP-MDM & 100 & 1.2M & 99.28\% \\
        \bottomrule
        \end{tabular}}}\\[10pt]
        {\textbf{(a)} Comparison of accuracy on Sudoku.}
        \label{tab:sudoku_accuracy}
    \end{minipage}
    \hspace{10pt}
    \begin{minipage}[t]{0.42\linewidth}
        \vspace{0pt}
        \centering
        \includegraphics[width=\linewidth]{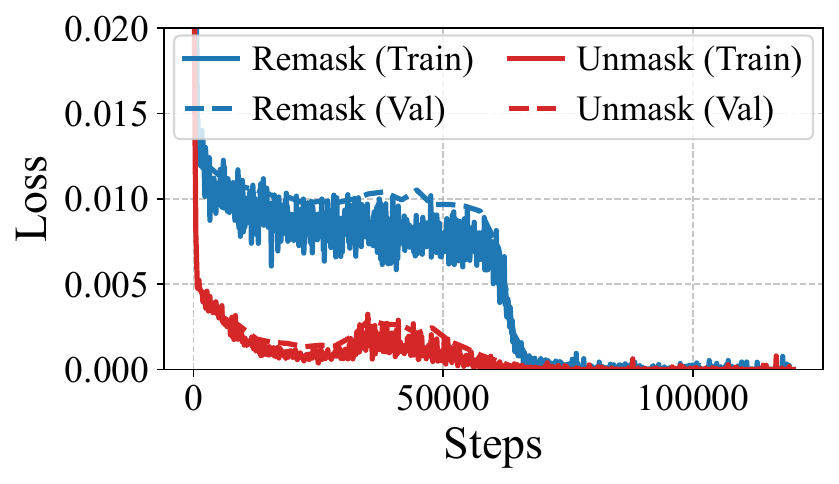}
        \\[-6pt]
        {\textbf{(b)} Convergence of losses.}
        \label{fig:sudoku_process}
    \end{minipage}
    \vspace{-5pt}
    \caption{Experimental results on Sudoku puzzles. Results of ARM and AO-MDM are taken from ~\citet{kim2025train}. Losses are defined in \Cref{appendix:training}.\vspace{-12pt}}
    \label{fig:sudoku_combined}
\end{figure}

\vspace{-3pt}
\subsection{Structural Generation: Examples in Coding and Science} \label{sec:advantage_2}
\vspace{-3pt}

\begin{tcolorbox}[
    enhanced,
    colback=Gray!3!white,
    colframe=Gray,
    arc=4mm,
    boxrule=1pt,
    left=4pt,
    right=4pt,
    top=4pt,
    bottom=4pt,
    drop shadow={opacity=0.15, xshift=1.5pt, yshift=-1.5pt},
    fonttitle=\bfseries,
    coltitle=Gray,
    before skip=8pt,
    after skip=8pt
]
\textit{\textbf{Benefit 2}: Generating (or reasoning over) complex structured objects that evolve non-sequentially, common across domains beyond natural language (e.g. coding, science).}
\end{tcolorbox}

When the evolving object involves some complex structures (e.g. trees, graphs, strings with constraints) that do not inherently build up linearly, forcing the generation into a sequential procedure can introduce unnecessary computational difficulties. Such scenarios are especially common across domains beyond natural language (e.g. coding, biology), which current LLMs struggle with.

\textbf{Example 1: Coding}~~~ Programs generally require satisfying global constraints like syntax and semantics at every intermediate state during development, since building each state upon the previous valid one is easier than jumping directly to the final solution. To illustrate this, consider the basic task of generating matched parentheses (the Dyck-$k$ language with $k$ types of parentheses), as illustrated in \Cref{fig:dyck_example}. Natural generation involves inserting parentheses anywhere without breaking balance constraints, while left-to-right generation requires global foresight and constant validity checking during generation, provably impossible for Transformers at scale. Particularly, we consider a variant called the two-sided Dyck-k (Definition in \Cref{appendix:proof_apmdm_simulation}):

\begin{theorem}[Generating Two-Sided Dyck-$k$, Informal] \label{thm:apmdm_simulation}
For any $k \ge 2$, there exists a stochastic AP-MDM whose generation distribution has support exactly equal to the two-sided Dyck-$k$ language, i.e., a string has positive generation probability if and only if it is in the language. Conversely, for any fixed ARM, there exists a length threshold beyond which the ARM cannot guarantee positive probability for all strings in two-sided Dyck-$k$. See formal statement in \Cref{thm:apmdm_simulation_formal}.
\end{theorem}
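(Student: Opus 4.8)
The plan is to prove the two directions separately, and the asymmetry between them is the whole point: the positive construction makes validity \emph{structural} (enforced by the parameter-free transition $g$), whereas an ARM must make validity \emph{computational} (enforced by its learned next-token logits). For the positive direction I would construct a stochastic $\APMDM$ whose elementary moves preserve membership in $\TDyck$, so that soundness (support $\subseteq$ language) holds regardless of the sampled logits. Concretely, start from the empty (or single-$\mask$) sequence and let $g$ realize one kind of edit per step: use the $\inser$ bit to spawn a pair of adjacent masks at a chosen position and then $\unmask$ them into a cancelable bracket pair $(_j\,)_j$ (or $)_j\,(_j$ in the two-sided case), via a dedicated control pattern that never emits an unmatched symbol. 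Since inserting a cancelable pair into a reduced/balanced word yields another reduced/balanced word, a short induction on the number of steps shows every reachable terminal string lies in $\TDyck$; crucially this invariant is delegated to $g$ and the fixed move set, so no choice of (soft) sampling distribution can escape the language.

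For completeness (support $\supseteq$ language) I would argue by induction on length that every $w \in \TDyck$ is reachable with positive probability. The base case is $\epsilon$; for the inductive step, any nonempty member is not freely reduced, hence contains an adjacent cancelable pair whose deletion leaves a shorter word representing the same (identity) element, so it remains in $\TDyck$ and is reachable by hypothesis; one further insertion at the matching position then produces $w$. It remains to exhibit a constant-depth, $\mathcal{O}(\log n)$-width Transformer realizing a policy that assigns positive probability to every admissible (position, type, action) choice and to the stop action. This is routine because identifying admissible insertion points and sampling a bracket type are position-local, uniform decisions requiring only the three extra control heads of \Cref{eq:apmdm_implementation}, with the hard structural guarantee again handled by $g$.

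For the converse I would reduce to the established hardness of bracket matching for fixed Transformers. The key observation is that if a fixed ARM had support exactly $\TDyck$, then soundness forces it to assign zero continuation probability to any token (in particular to \textsf{EOS}) whenever the current prefix cannot be completed to a language member; consequently the sign pattern of its next-token distribution would \emph{decide} $\TDyck$ membership at every prefix. Since a constant-depth, $\mathcal{O}(\log n)$-precision Transformer computes only functions within (uniform) $\mathrm{TC}^0$, while recognizing $\TDyck$ for $k \ge 2$ provably lies beyond this regime (the setting shown hard for Transformers in the works cited above), no such ARM can decide membership at unbounded length. Because the ARM is a single fixed network of bounded depth and precision, there must be a finite length threshold $L$ beyond which this recognition fails; under the soundness requirement the only consistent way to fail is to withhold probability from a genuine member, which is exactly the claimed incompleteness — some string of $\TDyck$ receives zero generation probability.

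I expect the converse to be the main obstacle, for two reasons. First, the statement is only non-vacuous once the ARM is allowed to place exact zeros: a pure-softmax model assigns positive probability to \emph{every} string and is trivially complete but never sound, so I must fix a computational model in which soundness is meaningful and then show that enforcing it forces computation of the membership predicate. Second, converting an asymptotic circuit-complexity separation into the stated per-network threshold requires a uniformity argument that pins down where a fixed-size network must first err. By contrast, the positive direction is comparatively mechanical once the invariant-preserving move set is in place, since soundness becomes structural and completeness reduces to the short peeling induction above.
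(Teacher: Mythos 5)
Your plan follows essentially the same route as the paper on both directions: the positive direction builds the language by repeatedly inserting adjacent cancelable pairs (soundness by invariance under such insertions, completeness by peeling an adjacent factor $a_ia_i^{-1}$ or $a_i^{-1}a_i$ off any nonempty member), and the converse observes that support exactness forces the EOS probability at every reachable prefix $w$ to decide whether $w\in\TDyck_k$, which is $\mathsf{NC}^1$-hard (word problem for free groups) and hence beyond constant-depth Transformers in $\mathsf{TC}^0$ under $\mathsf{TC}^0\neq\mathsf{NC}^1$. One phrasing quibble on the converse: for two-sided Dyck every prefix \emph{can} be completed to a member, so the ``prefix cannot be completed'' clause is vacuous; the operative fact is that EOS must get probability zero exactly when the current prefix is itself not in the language.

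There is, however, one concrete gap in your positive construction. You propose to ``spawn a pair of adjacent masks and then unmask them into a cancelable bracket pair,'' but under the MDM sampling semantics each masked position is sampled \emph{independently} from its own marginal. Two adjacent masks unmasked in the same step therefore place positive probability on all $2k\times 2k$ ordered combinations, including non-cancelable ones such as $a_1 a_2^{-1}$, which destroys support exactness. This is precisely the limitation the paper flags (``given two mask tokens, the model cannot randomly generate $AA$ and $BB$ without also having probability to generate $AB$ and $BA$'') and resolves with a dedicated device: two mask types $\mask_1,\mask_2$ and a staggered schedule in which the first mask is unmasked to a uniformly random bracket in one step, and only in the \emph{next} step is the second mask deterministically unmasked to the inverse of its (now revealed) left neighbor. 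Your phrase ``dedicated control pattern that never emits an unmatched symbol'' gestures at this but does not supply the mechanism; without sequentializing the two positions (or otherwise correlating them), the invariant you rely on for soundness fails. The fix is exactly the paper's two-phase construction and is not difficult, but it is a necessary idea rather than an implementation detail.
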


This result holds because generating Dyck language for arbitrary length is as hard as recognizing it, i.e., deciding if the current sequence has matched parentheses, which Transformer restricted in $\textsf{TC}^0$ expressivity cannot do. The vanilla MDM faces similar difficult as it has to predetermine the number of tokens between each matched parentheses. AP-MDM fundamentally circumvents this difficulty.

\textbf{Example 2: Science / Biology}~~~ Consider linear splicing~\citep{head1987formal,puaun1996splicing}, which is DNA recombination abstracted (and perhaps over-simplified) as cutting two strings and cross-pasting their halves, as illustrated in \Cref{fig:dna_example}. Iterating such rules from a finite seed set generates a splicing language, and any regular language with a marker added to the left side can be generated by such a system~\citep{head1998splicing,kari2012deciding}, while regular language has been proven impossible for constant-depth Transformers~\citep{liu2022transformers,li2024chain}.

\textbf{Experiment: Graph Generation}~ To illustrate the power of generating complex structures by editing, we consider a graph generation task.  Given a directed graph and a prompt specifying source and target nodes $s$ and $t$, the model is required to generate a modified graph that disconnects $s$ and $t$ by removing the minimum number of edges. This is equivalent to finding the min-cut. Efficient algorithms for generation typically involve iterative editing: \textbf{1)} Use BFS to find a path from $s$ to $t$; \textbf{2)} Augment this path and modify the graph structure; \textbf{3)} Repeat until $s$ and $t$ are disconnected, then remove the min-cut edges. Any-process generation is naturally suited for such graph editing tasks, leveraging $\inser$/$\delete$/$\remask$ operations for adaptive structural and feature modifications and MDM's parallel capabilities, as illustrated in \Cref{fig:graph_example}. As shown in \Cref{fig:parity_graph}, our model achieves almost perfect accuracy for increasingly larger graphs. Meanwhile, when we train ARM to simulate the same process, they fail to perform well as graph size increases.

\vspace{-3pt}
\subsection{Learning and OOD Generalization} \label{sec:advantage_3}
\vspace{-3pt}

\begin{tcolorbox}[
    enhanced,
    colback=Gray!3!white,
    colframe=Gray,
    arc=4mm,
    boxrule=1pt,
    left=4pt,
    right=4pt,
    top=4pt,
    bottom=4pt,
    drop shadow={opacity=0.15, xshift=1.5pt, yshift=-1.5pt},
    fonttitle=\bfseries,
    coltitle=Gray,
    before skip=8pt,
    after skip=8pt
]
\textit{\textbf{Benefit 3}: Enabling the use of simpler algorithms to solve problems, thereby {improving sample efficiency and (out-of-distribution) generalization}.}

\end{tcolorbox}


\textbf{Experiment: Parity}~~~ Given a binary sequence $\mathbf{x} \in \{0, 1\}^n$, parity involves determining if there are an even or odd number of 1s. This task is conceptually trivial but embarrassingly difficult for LLMs. Intuitively, the difficulty arises because ARM is forced to attend the entire input and learn a position-invariant target function, which is hard on training sequences with finite-length. With any-process generation, the model circumvents this difficulty by learning a simple elimination algorithm: examine the first two tokens, $\delete$ all 0s if the pair contains any 0 or $\delete$ the pair if both are 1s, then repeat until all are processed (\Cref{fig:parity_example}). The answer is true if any 1s remain. This mimics how humans solves the problem, a simple length-generalizable approach only possible with deletion. 

As shown in \Cref{fig:parity_graph}, our model achieves 100\% accuracy on \textbf{any length} after training on only $n=2$ length sequences with a tiny model ($\sim$200 parameters), while ARM with orders of magnitude more parameters and samples fails to generalize beyond training lengths.

\begin{figure}[t!]
    \centering
    \begin{minipage}[t]{0.48\linewidth}
        \vspace{0pt}
	\centering
        \resizebox{\textwidth}{!}{%
        \setlength{\tabcolsep}{2.5mm}{%
        \begin{tabular}{@{}c|c|c@{}}
        \toprule
        \textbf{Graph Size (\# Nodes)} & \textbf{ARM Acc.} & \textbf{AP-MDM Acc.} \\
        \midrule
        \textbf{4} & 90.32\% & 100\% \\
        \textbf{5} & 43.04\% & 100\% \\
        \textbf{6} & 0.30\% & 100\% \\
        \textbf{7} & N/A & 100\% \\
        \textbf{8} & N/A & 99.99\% \\
        \textbf{9} & N/A & 99.97\% \\
        \textbf{10} & N/A & 99.92\% \\
        \bottomrule
        \end{tabular}}}\\[8pt]
        {\textbf{(a)} Graph generation via editing.}
    \end{minipage}
    \hspace{10pt}
    \begin{minipage}[t]{0.41\linewidth}
        \vspace{0pt}
        \centering
        \includegraphics[width=\linewidth]{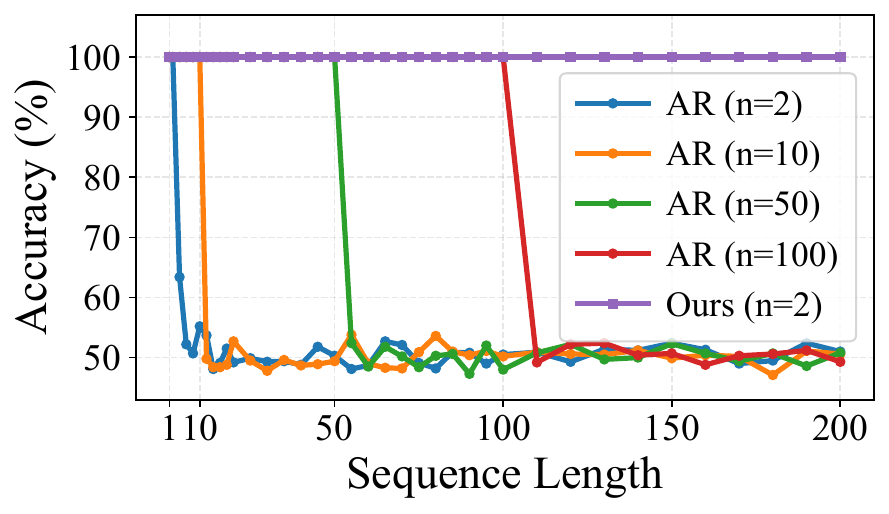}\\[-6pt]
        {\textbf{(b)} Length generalization on parity.}
    \end{minipage}
    \vspace{-8pt}
    \caption{Graph generation and parity task results.\vspace{-15pt}}
    \label{fig:parity_graph}
\end{figure}

\subsection{Hardness of Being Simulated} \label{sec:advantage_4}

\begin{tcolorbox}[
    enhanced,
    colback=Gray!3!white,
    colframe=Gray,
    arc=4mm,
    boxrule=1pt,
    left=4pt,
    right=4pt,
    top=4pt,
    bottom=4pt,
    drop shadow={opacity=0.15, xshift=1.5pt, yshift=-1.5pt},
    fonttitle=\bfseries,
    coltitle=Gray,
    before skip=8pt,
    after skip=8pt
]
\textit{\textbf{Benefit 4}: If in the future we have access to data of underlying generation processes (e.g. revision history of code, articles, math proof drafts, molecular formation processes), any-process MDM is more suitable than ARM for practical training.}
\end{tcolorbox} 

Besides scalability to harder tasks (\Cref{sec:advantage_1}) and universality across domains (\Cref{sec:advantage_2}), a crucial question remains: suppose given access to datasets containing revision histories of the objects we wish to generate, would AP-MDM be the most appropriate model for such data and large-scale training? To answer this, we consider ARM as a competitor as it is Turing-complete, and equally expressive as AO-MDM (\Cref{thm:any_order}) when controlled for orthogonal factors.

We next show ARM is inherently unsuitable for training on editing datasets in two ways.  \textbf{Firstly}, unlike any-order generation (\Cref{thm:any_order}), AP-MDM's editing operations is hard to be simulated by ARM by explicitly specifying editing operations applied at each decoding step; particularly

\begin{theorem}[Hardness of Simulating AP-MDM, Informal] \label{thm:apmdm_simulation2}
There exists a constant-depth AP-MDM, such that no constant-depth ARM can simulate the generation process of that AP-MDM using a sequence of triplets, i.e., what operation to use ($\unmask$, $\remask$, $\inser$, $\delete$), where to apply the operation  (position) and what to write for the unmask operation (token), on any input $\mathbf x$, under some complexity hardness assumptions in \Cref{appendix:proof_apmdm_simulation2}. See formal statement in \Cref{thm:apmdm_simulation2_formal}.
\end{theorem}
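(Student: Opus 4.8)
The plan is to exhibit a single constant-depth AP-MDM whose generation trace, if transcribed as the prescribed sequence of operation triplets, would force any step-by-step simulator to evaluate a function lying outside the reach of constant-depth Transformers. The starting point is the expressivity ceiling already implicit in the paper: a constant-depth ARM (encoder- or decoder-backbone) emits each new triplet as a (uniform) $\textsf{TC}^0$ function of its context, namely the input $\mathbf{x}$ together with all previously emitted triplets. Crucially, when the ARM is required to \emph{simulate} the AP-MDM, its output is pinned to the AP-MDM's operation trace, so it cannot insert free scratch computations the way unconstrained Chain-of-Thought can: each triplet must be produced directly from the (product-free) history. Hence it suffices to design an AP-MDM for which correctly producing the triplet at some step requires computing an $\textsf{NC}^1$-hard function of the preceding trace; under the separation $\textsf{TC}^0 \subsetneq \textsf{NC}^1$ assumed in \Cref{appendix:proof_apmdm_simulation2}, no such ARM can exist for all inputs.

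For the hard instance I would encode the word problem over the non-solvable group $S_5$ (equivalently, evaluation of a balanced Boolean formula), which is $\textsf{NC}^1$-complete under $\textsf{AC}^0$ reductions by Barrington's theorem. The AP-MDM reads an input word $g_1 g_2 \cdots g_n$ and maintains a running product in a compact scratch region of its sequence, advancing it by one multiplication per decoding step; this is a constant-depth, per-step-local update, so the AP-MDM itself stays within the allotted depth and $\mathcal{O}(T(n))$ steps. The essential design choice is that the AP-MDM carries out all intermediate bookkeeping using only the structural operations $\remask$ and $\delete$ (with $\inser$ available for scratch allocation), which overwrite or discard scratch positions rather than committing token values, and reveals a token via $\unmask$ only for the final answer. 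Consequently the running product is repeatedly erased and never appears as a committed token in the trace, yet it still governs which operation and which position the AP-MDM selects at each step, read off the scratch region through attention within a single forward pass of $f_\theta$.

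The crux of the argument, and the step I expect to be the main obstacle, is showing that this erased intermediate state is genuinely not $\textsf{TC}^0$-recoverable from the emitted triplet sequence, i.e., that the simulator cannot smuggle the product into the positions or tokens it is forced to output. This is exactly where any-process generation departs from \Cref{thm:any_order}: there the final sequence contained all information and attention merely had to reorganize it, whereas here $\remask$ and $\delete$ destroy the intermediate states so that reconstructing them demands genuine recomputation rather than retrieval. I would formalize this by arguing that the map from trace-prefix to the next correct triplet factors through the running product, and that any family of $\textsf{TC}^0$ circuits computing this map would yield $\textsf{TC}^0$ circuits for the $S_5$ word problem (composing with the $\textsf{AC}^0$ encoding of the input word as operations), contradicting $\textsf{TC}^0 \subsetneq \textsf{NC}^1$.

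Finally I would package these pieces: the AP-MDM is constant-depth and correct by construction; a correct ARM simulator would have to emit, at the decisive step, a triplet determined by the erased product; that triplet is an $\textsf{NC}^1$-hard function of the ARM's context; and no constant-depth ARM computes such a function, so every candidate ARM must err on some $S_5$-word input. Two loose ends to discharge in the full proof are (i) making the trace-transcription interface precise, so that the simulator is permitted one triplet per AP-MDM edit (and, if parallel edits are allowed, a fixed canonical serialization thereof) and cannot evade the hardness by reparametrizing its output format, and (ii) confirming uniformity of the circuit families so that the separation assumption applies verbatim.
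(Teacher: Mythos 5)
Your proposal takes a genuinely different route from the paper, and it founders at exactly the step you flag as ``the main obstacle.'' The paper does not use a group word problem at all: it constructs an \emph{Editing-Query} task in which the input $\mathbf{x}$ itself contains a base string, a flattened editing history of triplets, and a query position. The AP-MDM merely applies the given edits, so the ARM's forced triplet output is a verbatim copy of part of the input and provides no smuggling channel; the entire hardness is pushed into producing the final queried symbol, which the paper reduces from the {\sc Preserves} list-maintenance problem (tracking where an item ends up after a sequence of insertions and deletions), whose $\mathsf{NC}^1$-hardness is only \emph{conjectured} (\Cref{conj:preserves_hardness}). Your reduction source ($S_5$ word problem via Barrington) has the advantage of being unconditionally $\mathsf{NC}^1$-complete, but your construction does not succeed in making it the operative hardness.

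The gap is this. Your AP-MDM is itself constant depth per step, so it cannot compute the whole product in one forward pass; it must externalize its running state into the sequence, advancing one multiplication per decoding step. But the sequence at every step is completely determined by the triplet trace emitted so far --- that is literally what $\mathrm{Apply\_Triplets}$ computes --- and the state can only be stored either in token content (written by $\unmask$, hence visible as the \texttt{val} field) or in sequence structure (manipulated by $\inser$/$\delete$/$\remask$, hence visible through the \texttt{pos} and \texttt{op} fields). In the natural one-multiplication-per-step instantiation, the partial product after $t-1$ steps is therefore decodable from the $(t-1)$-st triplet, and a constant-depth ARM can track it incrementally: read the previous partial product off the last triplet, apply one $S_5$ multiplication (constant work), and emit the next triplet. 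So the map from trace-prefix to next triplet does \emph{not} require evaluating an $\mathsf{NC}^1$-hard function at any single step; the hardness telescopes away. The only residual obstruction to such incremental tracking is that, after many $\inser$/$\delete$ operations, resolving which \emph{current} physical position a historical position token refers to is itself a list-maintenance problem --- which is precisely the {\sc Preserves} hardness the paper invokes, not the group-theoretic hardness you propose. To repair your argument you would either have to prove that position resolution under length-variable edits is $\mathsf{TC}^0$-hard (landing you back on the paper's conjecture) or redesign the AP-MDM so that the decisive triplet depends on the full product while no prefix of the trace leaks any partial product, which the externalization constraint above appears to rule out for a constant-depth model.
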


Empirically, we show that representing our generation process using triplets described above for ARM simulation indeed becomes increasingly difficult to train as sequence length grows, as demonstrated in the graph generation task in \Cref{sec:advantage_2}.

\textbf{Secondly}, if we disregard the resource constraints from \Cref{sec:advantage_1} and \Cref{sec:advantage_2}, simulation becomes possible through additional intermediate steps, but this could require highly contrived trajectories that defeat the purpose of practical training, e.g. periodically summarize the current state, or using more than constant tokens to represent each application of an operation.

\vspace{-5pt}
\section{Conclusion}
\vspace{-5pt}

This paper provides formal analysis of generation processes and shows, provably and empirically, that moving beyond standard autoregression and current masked diffusion yields more powerful models. These results suggest concrete design principles for frontier LLMs, pointing to training and decoding schemes that scale to increasingly hard tasks and generalize across domains such as code and the sciences. See further contextualization with respect to related work in \Cref{appendix:related}.

\section*{Usage of Large Language Models}

In this work, we use LLMs for literature retrieval and discovery, writing assistance and polishing,
and code writing and debugging support. We carefully monitor potential issues such as plagiarism or
factual inaccuracies to ensure academic integrity.

\bibliography{ref}
\bibliographystyle{iclr2026_conference}

\clearpage
\appendix

\tableofcontents
\clearpage

\section{Related Work} \label{appendix:related}

Masked diffusion models~\citep{hoogeboom2021argmax,austin2021structured,lou2024discrete,sahoo2024simple,shi2024simplified} extend continuous diffusion models~\citep{sohl2015deep,ho2020denoising,song2020score} to discrete data. Early work applied these models to specialized domains such as graph generation~\citep{vignac2022digress,sun2023difusco}, protein design~\citep{gruver2023protein}, and drug discovery~\citep{lee2025genmol}, where non-sequential generation provides natural advantages. The field has evolved with recent commercial-scale language models like Gemini Diffusion~\citep{deepmind2025gemini} and Mercury~\citep{inceptionlabs2025mercury}, which demonstrate competitive performance on language generation, reasoning, and coding tasks. This suggests that MDMs can serve as viable alternatives to the autoregressive models that currently dominate LLMs. Against this background, this paper investigates the fundamental computational differences between generation paradigms and explores whether more powerful generation methods exist.

Several works have explored extensions to standard MDM through mechanisms that enable rewriting and editing~\citep{vonrutte2025generalized,wang2025remasking,peng2025path,havasi2025edit,wu2025dreamon,kim2025any}, which relate to our any-process generation framework. \citet{wang2025remasking} introduces random remasking during inference, though this capability is not learned from data. \citet{lou2024discrete,vonrutte2025generalized,sahoo2025diffusion} propose adding uniform noise in the forward process rather than using masks, with models learning to revert them in the backward process, but this approach generally underperforms since modifying tokens directly appears more difficult than unmasking. \citet{peng2025path} introduces path planning to control generation, though the planner is not trained end-to-end with the base model. Current with ours: \citet{havasi2025edit} introduces edit operations to flow matching frameworks but faces similar limitations as uniform noise approaches; \citet{kim2025any} and \citet{kim2025fine} introduce to insert tokens at any 
position and remask tokens, while \citet{wu2025dreamon} proposes expansion 
and delete, but these capabilities per se are 
insufficient for handling hard reasoning tasks as 
discussed in \Cref{sec:theory}. 

{Before MDM, there are also some earlier explorations of non-autoregressive generation processes. Insertion-based models such as the Insertion Transformer~\citep{stern2019insertion}, InDIGO~\citep{gu2019insertion}, non-monotonic sequential text generation~\citep{welleck2019non}, the Levenshtein Transformer~\citep{gu2019levenshtein}, and InsNet~\citep{lu2022insnet} generate text by repeatedly inserting (and sometimes deleting) tokens at arbitrary positions, while XLNet~\citep{yang2019xlnet} generalizes autoregressive pretraining to random permutations of the factorization order. LaserTagger~\citep{malmi2019encode} and Mask-Predict~\citep{ghazvininejad2019mask} formulate generation as iteratively predicting edit tags or filling masks in parallel. These methods demonstrate that alternative generation orders and local edits can bring empirical benefits on specific tasks, but most of them are defined based on particular architectures and decoding heuristics. In contrast, this work provides a unified formalization that subsumes insertion, deletion, and (re)masking operations as special cases, and first systematic analysis of their computational power with comparison to ARM and MDM.}


\section{Background: Masked Diffusion Model} \label{appendix:mdm}

We introduce the preliminaries of diffusion language models or masked diffusion models (MDM), following the notation established in \Cref{sec:preliminary}. Let $\bar{\Sigma} = \Sigma \cup \{\mask\}$ be the extended vocabulary where $\mask$ is the mask special token. Consider sequences $\mathbf{x}_t = (x_{t,1}, x_{t,2}, \ldots, x_{t,S}) \in \bar{\Sigma}^S$ indexed by time $t\in[T]$, where $S$ is the maximum context length, $T$ is the number of decoding steps, $\mathbf{x}_0 = \{\mask\}^S$ is the fully masked sequence and $\mathbf{x}_T \in \Sigma^S$ is the target clean sequence.

\paragraph{Forward Noising Process}
The forward noising process constructs training data by generating noisy versions $\mathbf{x}_t$ from clean sequences $\mathbf{x}_T$. Unlike the discrete inference steps in \Cref{sec:preliminary}, training uses continuous time $t \in [0, T]$ with larger $t$ denoting later denoising steps. MDM employs the \emph{absorbing mask kernel}~\citep{austin2021structured} where the signal ratio $\alpha_t = t/T$ represents the marginal probability that a token remains unmasked. Since $\alpha_t$ increases monotonically with $t$ (i.e., $\alpha_s < \alpha_t$ for $s < t$), later timesteps preserve more original tokens, consistent with our convention where $t=0$ is fully masked and $t=T$ is clean. At each position $i$, tokens either stay unchanged or become $\mask$, and once masked, they ``absorb'' into this state. The marginal distribution is:
\begin{align}
q(x_{t,i} \mid x_{T,i}) = \operatorname{Cat}(x_{t,i}; \alpha_t \mathbf{e}_{x_{T,i}} + (1-\alpha_t) \mathbf{e}_{\mask})
\label{eq:forward_marginal}
\end{align}
where $\mathbf{e}_v$ denotes the one-hot vector for token $v \in \bar{\Sigma}$. To obtain noised sequence $\mathbf{x}_t$ from $\mathbf{x}_T$, we compute the masking probability $1-\alpha_t$ and mask each position independently with this probability. 

\paragraph{Training Objective} 
The reverse process aims to recover $\mathbf{x}_T$ from $\mathbf{x}_0$. MDM parameterizes $p_\theta(\mathbf{x}_T \mid \mathbf{x}_t, t)$ to predict the clean data directly (but as mentioned in \Cref{sec:preliminary}, many recent large-scale MDMs omit explicit timestep embeddings). For the absorbing mask kernel, the true posterior $q(\mathbf{x}_s \mid \mathbf{x}_t, \mathbf{x}_T)$ for $s < t$ has an analytical form. For each position $i$:
\begin{align}
q(x_{s,i} \mid x_{t,i}, x_{T,i}) = \begin{cases}
\text{Cat}(x_{s,i}; \mathbf{e}_{x_{t,i}}), & \text{if } x_{t,i} \neq \mask \\
\text{Cat}\left(x_{s,i}; \frac{(1-\alpha_s)\mathbf{e}_{\mask} + (\alpha_t-\alpha_s)\mathbf{e}_{x_{T,i}}}{1-\alpha_s}\right), & \text{if } x_{t,i} = \mask
\end{cases}
\label{eq:true_posterior}
\end{align}
This means that, if position $i$ is not masked at time $t$, it remains unchanged at time $s$; if position $i$ is masked at time $t$, it transitions probabilistically between the original token and mask. The training objective is derived from the variational lower bound. For the absorbing mask kernel, it simplifies to:
\begin{align}
\mathcal{L}_{\text{CE}}(\theta) = \mathbb{E}_{t \sim \mathcal{U}(0,T), \mathbf{x}_T \sim p_{\text{data}}, \mathbf{x}_t \sim q(\mathbf{x}_t|\mathbf{x}_T)} \left[ -\frac{1}{|\{i: x_{t,i} = \mask\}|} \sum_{i: x_{t,i} = \mask} \log p_\theta(x_{T,i} \mid \mathbf{x}_t, t) \right]
\label{eq:ce_loss}
\end{align}
The loss is computed only on masked positions and averaged over the number of masked tokens, making this equivalent to conditional masked language modeling with proper normalization. Here, $t$ is sampled uniformly from the continuous interval $[0, T]$ during training, $\mathbf{x}_T$ is sampled from the data distribution, and $\mathbf{x}_t$ is obtained by applying the forward noising process.

\section{Methodological Details} 


\subsection{Model Architecture} \label{appendix:apmdm_architecture}

As described in \Cref{sec:any_process}, AP-MDM extends standard MDM with three additional capabilities: $\remask$ (rewrite via remasking), $\inser$ (insert masks), and $\delete$ (remove redundant masks). In principle, these capabilities can be implemented using a shared encoder-only Transformer backbone with three additional linear heads, adding minimal computational overhead.

\paragraph{Architecture}
Following \Cref{eq:apmdm_implementation}, AP-MDM uses four prediction heads on top of a shared encoder-only Transformer backbone. Given the hidden representation $\mathbf{h}_{t,i} \in \mathbb{R}^d$ for position $i$, the heads output logits and probabilities:
\begin{align}
    p_\theta(y_{t,i} \mid \mathbf{x}_t) &= \text{softmax}(\mathbf{W}_U \mathbf{h}_{t,i} + \mathbf{b}_U), \quad \mathbf{W}_U \in \mathbb{R}^{|\Sigma| \times d} \quad \text{(unmask)} \\
    \ctrl_{t,i}[1] &= \sigma(\mathbf{W}_R \mathbf{h}_{t,i} + b_R), \quad \mathbf{W}_R \in \mathbb{R}^{d \times 1} \quad \text{(remask)} \\
    \ctrl_{t,i}[2] &= \sigma(\mathbf{W}_I \mathbf{h}_{t,i} + b_I), \quad \mathbf{W}_I \in \mathbb{R}^{d \times 1} \quad \text{(insert)} \\
    \ctrl_{t,i}[3] &= \sigma(\mathbf{W}_D \mathbf{h}_{t,i} + b_D), \quad \mathbf{W}_D \in \mathbb{R}^{d \times 1} \quad \text{(delete)}
\end{align}
where the unmask head outputs a probability distribution $p_\theta(y_{t,i} \mid \mathbf{x}_t)$ over the vocabulary $\Sigma$ for predicting tokens at masked positions, while the three control signal heads output binary probabilities for the corresponding operations. During inference, tokens are obtained via $y_{t,i} = \argmax p_\theta(\cdot \mid \mathbf{x}_t)$ and control signals are obtained by thresholding.

\subsection{Self-Supervised Training} \label{appendix:training}

In principle, one can design specialized loss functions corresponding to each operation, alongside the standard unmasking loss from MDM, by constructing self-supervised signals from the inherent structure of text data through augmentation strategies.

\paragraph{Unmasking Loss} 
The unmasking loss would follow standard MDM training as described in Appendix~\ref{appendix:mdm}. For each training sample $\mathbf{x}_T$, one can sample $t \sim \mathcal{U}(0,T)$ and apply the forward masking process with signal ratio $\alpha_t = t/T$ to create $\mathbf{x}_t$:
\begin{align}
x_{t,i} = \begin{cases}
x_{T,i} & \text{with probability } \alpha_t \\
\mask & \text{with probability } 1-\alpha_t
\end{cases}
\end{align}
The model would learn to predict original tokens at masked positions with time weighting:
\begin{align}
\mathcal{L}_{\unmask} = \mathbb{E}_{t, \mathbf{x}_T, \mathbf{x}_t} \left[ \frac{1}{\sum_i m_i} \sum_{i=1}^{|\mathbf{x}_T|} m_i \cdot \left(-\frac{\log p_\theta(x_{T,i} \mid \mathbf{x}_t)}{t}\right) \right]
\end{align}
where $m_i = 1$ if position $i$ is valid (according to attention mask), 0 otherwise.

\paragraph{Remasking Loss} 
The remasking loss could train the model to identify incorrect tokens that should be remasked. For each sample $\mathbf{x}_T$, one can sample $t \sim \mathcal{U}(0,T)$ and create a corrupted sequence $\tilde{\mathbf{x}}_t$ using batch-internal shuffling (which effectively samples from the empirical token distribution rather than a biased uniform distribution):
\begin{align}
\tilde{x}_{t,i} = \begin{cases}
x_{T,i} & \text{with probability } \alpha_t \\
\text{shuffled token} & \text{with probability } 1-\alpha_t
\end{cases}
\end{align}
The remasking labels are $\ctrl_{t,i}[1] = \mathbf{1}[x_{T,i} \neq \tilde{x}_{t,i}]$, and the loss uses binary cross-entropy:
\begin{align}
\mathcal{L}_{\remask} = \mathbb{E}_{t, \mathbf{x}_T, \tilde{\mathbf{x}}_t} \left[ \frac{1}{\sum_i m_i} \sum_{i=1}^{|\mathbf{x}_T|} m_i \cdot \text{BCE}(\text{logit}_{R,i}, \ctrl_{t,i}[1]) \right]
\end{align}
where $m_i$ indicates valid positions and BCE denotes binary cross-entropy with logits.

\paragraph{Insert Loss} 
The insert loss could teach the model to identify positions where additional content is needed. For each sample $\mathbf{x}_T$, one can sample deletion probability $\delta \sim \mathcal{U}(0,1)$ and generate deletion indicators for each position $i$. One would create the deflated sequence $\tilde{\mathbf{x}}$ by removing tokens at randomly selected positions. The insert labels would be $\ctrl_{t,j}[2] = 1$ for positions $j$ that remain in $\tilde{\mathbf{x}}$ where the next position was deleted, 0 otherwise:
\begin{align}
\mathcal{L}_{\inser} = \mathbb{E}_{\delta, \mathbf{x}_T, \tilde{\mathbf{x}}} \left[ \frac{1}{\sum_j m_j} \sum_{j=1}^{|\tilde{\mathbf{x}}|} m_j \cdot \text{BCE}(\text{logit}_{I,j}, \ctrl_{t,j}[2]) \right]
\end{align}
where $m_j$ indicates valid positions in the deflated sequence.

\paragraph{Delete Loss} 
The delete loss could train the model to distinguish between necessary and redundant mask tokens. One can use a two-step masking process: first apply standard MDM masking with $\alpha_t = t/T$ to create $\mathbf{x}_{\text{base}}$, then sample insertion probability $\gamma \sim \mathcal{U}(0,1)$ to insert additional $\mask$ tokens at randomly selected positions. The delete labels would distinguish mask origins:
\begin{align}
\ctrl_{t,i}[3] = \begin{cases}
1 & \text{if } \hat{x}_{t,i} = \mask \text{ and inserted in step 2} \\
0 & \text{otherwise}
\end{cases}
\end{align}
The loss uses binary cross-entropy:
\begin{align}
\mathcal{L}_{\delete} = \mathbb{E}_{\gamma, \mathbf{x}_T, \hat{\mathbf{x}}_t} \left[ \frac{1}{\sum_k m_k} \sum_{k=1}^{|\hat{\mathbf{x}}_t|} m_k \cdot \text{BCE}(\text{logit}_{D,k}, \ctrl_{t,k}[3]) \right]
\end{align}
where $m_k$ indicates valid positions in the contracted sequence.

\paragraph{Combined Training Objective}
In principle, an AP-MDM training objective could balance all four capabilities:
\begin{align}
\mathcal{L}_{\text{AP-MDM}} = \mathcal{L}_{\unmask} + \lambda_r \mathcal{L}_{\remask} + \lambda_i \mathcal{L}_{\inser} + \lambda_d \mathcal{L}_{\delete}
\end{align}
where $\lambda_r, \lambda_i, \lambda_d > 0$ are hyperparameters controlling the relative importance of each operation with default value $1$.

\subsection{Supervised Training} \label{appendix:training2}

In addition to the self-supervised training approach described in \Cref{appendix:training}, AP-MDM can also be trained with explicit supervision when state-transition data is available. This applies to scenarios where we have access to the underlying generation process. It can also be combined with the self-supervised approach in \Cref{appendix:training} for hybrid training, e.g. self-supervised in pretraining and supervised in finetuning.

\paragraph{Data Format}
Each training sample consists of a state-transition tuple $(\mathbf{x}_k, \mathbf{y}^*, \ctrlvec^*)$ where:
\begin{itemize}[leftmargin=15pt]
    \item $\mathbf{x}_k \in \bar{\Sigma}^*$: current state (potentially containing $\mask$ tokens)
    \item $\mathbf{y}^* \in \Sigma^{|\mathbf{x}_k|}$: target tokens for each position
    \item $\ctrlvec^* = (\ctrl^*_1, \ldots, \ctrl^*_{|\mathbf{x}_k|})$ where $\ctrl^*_i \in \C = \{0,1\}^3$: ground-truth control signals
\end{itemize}

\begin{algorithm}[t]
\caption{Any-Process Generation with $\unmask$, $\remask$, $\inser$, and $\delete$ Operations}
\label{alg:apmdm_sampling}
\begin{algorithmic}[1]
\Require Trained model $f_\theta$, input prompt $\mathbf{x}$
\Ensure Generated sequence $\mathbf{x}_{\text{final}}$
\State $\mathbf{x}_0 \gets \mathbf{x}$ \Comment{Initialize with input prompt}
\State $t \gets 0$
\While{stopping criterion not met}
    \State $(\mathbf{x}_t, \mathbf{y}_t, \ctrlvec_t) \gets f_\theta(\mathbf{x}_t)$ \Comment{Model forward pass}
    \State $\mathbf{z}_t \gets []$ \Comment{Initialize temporary sequence}
    \For{$i = 1$ to $|\mathbf{x}_t|$}
        \If{$\ctrl_{t,i}[3] = 1$ and $x_{t,i} = \mask$} \Comment{$\delete$ operation}
            \State \textbf{continue} \Comment{Skip this position}
        \EndIf
        \State \Comment{Determine token value: Remask $>$ Unmask $>$ Keep}
        \If{$\ctrl_{t,i}[1] = 1$} \Comment{$\remask$ operation}
            \State Append $\mask$ to $\mathbf{z}_t$
        \ElsIf{$x_{t,i} = \mask$} \Comment{$\unmask$ operation}
            \State Append $y_{t,i}$ to $\mathbf{z}_t$
        \Else \Comment{Keep unchanged}
            \State Append $x_{t,i}$ to $\mathbf{z}_t$
        \EndIf
        \State 
        \If{$\ctrl_{t,i}[2] = 1$} \Comment{$\inser$ operation}
            \State Append $\mask$ to $\mathbf{z}_t$
        \EndIf
    \EndFor
    \State $\mathbf{x}_{t+1} \gets \mathbf{z}_t$ 
    \State $t \gets t + 1$
\EndWhile
\State \Return $\mathbf{x}_t$
\end{algorithmic}
\end{algorithm}

\paragraph{Training Objective}
Given the state-transition data, the model learns to predict both the target tokens and control signals. The training objective consists of four components:

For positions where $x_{k,i} = \mask$ and $y^*_i \neq \mask$, predict the target token:
\begin{align}
\mathcal{L}_{\unmask}^{\text{sup}} = -\frac{1}{|\{i: x_{k,i} = \mask, y^*_i \neq \mask\}|} \sum_{i: x_{k,i} = \mask, y^*_i \neq \mask} \log p_\theta(y^*_i \mid \mathbf{x}_k)
\end{align}

For all valid positions, predict the control signals:
\begin{align}
\mathcal{L}_{\remask}^{\text{sup}} &= \frac{1}{\sum_i m_i} \sum_{i=1}^{|\mathbf{x}_k|} m_i \cdot \text{BCE}(\text{logit}_{R,i}, \ctrl^*_i[1]) \\
\mathcal{L}_{\inser}^{\text{sup}} &= \frac{1}{\sum_i m_i} \sum_{i=1}^{|\mathbf{x}_k|} m_i \cdot \text{BCE}(\text{logit}_{I,i}, \ctrl^*_i[2]) \\
\mathcal{L}_{\delete}^{\text{sup}} &= \frac{1}{\sum_i m_i} \sum_{i=1}^{|\mathbf{x}_k|} m_i \cdot \text{BCE}(\text{logit}_{D,i}, \ctrl^*_i[3])
\end{align}
where $m_i$ indicates valid positions and BCE denotes binary cross-entropy with logits.

\textbf{Combined Objective:}
\begin{align}
\mathcal{L}_{\text{AP-MDM}}^{\text{sup}} = \mathcal{L}_{\unmask}^{\text{sup}} + \lambda_r \mathcal{L}_{\remask}^{\text{sup}} + \lambda_i \mathcal{L}_{\inser}^{\text{sup}} + \lambda_d \mathcal{L}_{\delete}^{\text{sup}}
\end{align}

\subsection{Inference-Time Algorithm} \label{appendix:algorithm}

As described in \Cref{sec:any_process}, AP-MDM generation follows an iterative process formulated as $\mathbf{x}_{t+1} = g(f_\theta(\mathbf{x}_t))$. We provide the complete inference-time sampling algorithm that implements the transition function $g$ following \Cref{eq:apmdm_implementation}.

The algorithm implements the transition function $g(\mathbf{x}_t, \mathbf{y}_t, \ctrlvec_t)$ as defined in \Cref{eq:apmdm_implementation}. Thresholds $\tau_r, \tau_i, \tau_d$ control when operations are applied (default value $0.5$ for all). The algorithm supports variable-length generation through dynamic insertion and deletion, and terminates when a stopping criterion is met (e.g., sequence convergence, generation of special tokens, or reaching a maximum iteration limit).

\section{Implementation Details} \label{appendix:exp}

We provide implementation details for the experiments described in the main paper. All models are built on encoder-only Transformer architecture as described in \Cref{appendix:encoder}, with task-specific configurations detailed below.

\paragraph{Model Architecture}
All models are based on encoder-only Transformer architecture with rotary positional embeddings (RoPE). For AP-MDM, we add three binary classification heads (remask, insert, delete) on top of the standard unmask head. We use no timestep embedding and set time conditioning to zero during supervised training. For Sudoku, we use 6 layers, 4 attention heads, hidden dimension $d=256$, feed-forward dimension $4d=1024$, maximum sequence length 400, vocabulary size 31. For Parity, we use 1 layer, 1 attention head, hidden dimension $d=4$, feed-forward dimension $4d=16$, maximum sequence length 3, vocabulary size 6, with approximately 200 total parameters. For Graph, we use 8 layers, 4 attention heads, hidden dimension $d=256$, feed-forward dimension $4d=1024$, maximum sequence length set to accommodate graphs of varying sizes, vocabulary size 55; the same configuration is used for the ARM baseline.

\paragraph{Training Data Generation}
For supervised training as described in \Cref{appendix:training2}, training data consists of state-transition tuples $(\mathbf{x}_k, \mathbf{y}^*, \ctrlvec^*)$ generated by simulating the natural solving or generation algorithms for each task. Starting from ground-truth target solutions, we execute task-specific algorithms and record the intermediate computation states at each step. For each state, we capture the current sequence $\mathbf{x}_k$ (with unknown or unfilled positions represented as $\mask$), the target values $\mathbf{y}^*$ to be revealed, and the control decisions $\ctrlvec^*$ indicating which operations (unmask, remask, insert, delete) should be applied. Concrete examples of data generation for Sudoku, Parity, and Graph tasks are provided in \Cref{appendix:examples_sudoku}, \Cref{appendix:examples_parity}, and \Cref{appendix:examples_graph}, respectively.

\paragraph{Training Details}
We train all models using AdamW optimizer with learning rate $10^{-4}$, $(\beta_1, \beta_2) = (0.9, 0.999)$, weight decay $0.01$, and batch size 256. Learning rate follows a constant schedule with 250 warmup steps. We use mixed precision training (bfloat16) with gradient clipping at 1.0. Loss weights are set to $\lambda_r = \lambda_i = \lambda_d = 1.0$ by default. Models are trained for up to 1M steps or until convergence. For Graph generation, we use 100K graph instances for training both AP-MDM and the ARM baseline. For Parity, we use only 4 training samples for AP-MDM, while the ARM baseline is trained with up to 10K instances.





\section{Parallel Random Access Machine} \label{appendix:pram}

The Random Access Machine (RAM)~\citep{arora2009computational} serves as the foundational theoretical model for sequential computation, featuring a single processor that can access any memory location in unit time regardless of address—hence ``random access", along with a finite set of registers and basic arithmetic/logical operations. This contrasts with models like Turing machines where memory access is sequential. The RAM's key strength lies in its realistic abstraction of modern computers: it captures the essential computational primitives (arithmetic, memory access, conditional branching) while abstracting away hardware details, making it ideal for algorithm analysis.

The Parallel Random Access Machine (PRAM)~\citep{fortune1978parallelism,jaja1992parallel} extends this familiar RAM model to parallel computation by allowing $P(n)$ processors to operate synchronously on shared memory with $\mathcal{O}(\log n)$-bit word size.~\footnote{The $\mathcal{O}(\log n)$-bit word size choice ensures that pointer arithmetic, indexing, and basic integer operations on polynomially bounded values are unit-time, matching the standard RAM assumptions and avoiding artificial speedups due to unrealistically wide words.} Each processor in PRAM maintains its own program counter and unique identifier, enabling conditional branching and coordinated computation. The model operates in discrete synchronous time steps where all active processors execute simultaneously, inheriting RAM's unit-cost random access property while adding the complexity of concurrent memory operations. 

\paragraph{PRAM Variants} PRAM has several variants, which differ in their memory access discipline, forming a hierarchy with precise complexity relationships. Let $\EREW$, $\CREW$, $\CRCWCommon$, $\CRCWArbitrary$ and $\CRCWPriority$ denote the classes of problems solvable in polynomial parallel time with polynomially many processors under each model, listed in order of increasing expressivity:
\begin{itemize}[leftmargin=1.2em,topsep=2pt,itemsep=2pt]
    \item \textbf{\EREW} (Exclusive Read, Exclusive Write): No concurrent access to any memory cell. Most restrictive but captures essential parallelism.
    \item \textbf{\CREW} (Concurrent Read, Exclusive Write): Multiple processors may read the same cell simultaneously. Enables broadcast in $\mathcal{O}(1)$ time vs. $\Theta(\log n)$ in EREW.
    \item \textbf{\CRCWCommon}: Concurrent writes allowed only if all writers agree on the value. Boolean OR computable in $\mathcal{O}(1)$ time.
    \item \textbf{\CRCWArbitrary}: Any concurrent writer may succeed; the choice is made arbitrarily (often modeled as random selection).
    \item \textbf{\CRCWPriority}: Concurrent writes resolved by processor priority with various schemes (e.g., minimum/maximum index, sum of conflicting values).
\end{itemize}
Crucially, any algorithm in a stronger model can be simulated in a weaker model with at most $\mathcal{O}(\log n)$ parallel time overhead~\citep{jaja1992parallel}. This polylogarithmic separation appears in basic primitives, broadcast requires $\Theta(\log n)$ rounds in $\EREW$ but $\mathcal{O}(1)$ in $\CREW$, yet the models remain polynomially equivalent for most complexity-theoretic purposes. \textbf{We adopt the $\CREW$ model throughout this paper}, where different processors are not allowed to write to the same memory cell simultaneously.

PRAM, as an idealized abstraction of shared-memory multiprocessor systems, enables precise analysis of parallel algorithms and gives rise to parallel complexity classes such as \textsf{NC}~\citep{arora2009computational} (problems solvable in polylogarithmic parallel time using polynomially many processors). For example, PRAM can simulate algorithms on trees, linear arrays, meshes, and hypercubes without loss of parallel time, while reverse simulation costs at most $\mathcal{O}(\log^2 P(n))$ overhead; Boolean circuits of depth $D$ can be simulated on $\CREW$ in $\mathcal{O}(D)$ time, making PRAM a natural model for measuring parallel time complexity in theory.

Below, we provide a more formal definition that will be used in proofs.

\subsection{Definition and Execution Process of Word-RAM}

We formalize the standard word-RAM that matches a single-processor PRAM (i.e., $P(n)=1$). Throughout, let the input length be $n$ and fix the word size $w(n)=\Theta(\log n)$.

\paragraph{Word Size, Universe, and Addresses.}
Let the word universe be $\mathbb{U}=\{0,1,\dots,2^{w}-1\}$ with arithmetic modulo $2^{w}$
(two's-complement semantics). The address space is $\mathcal{A}=\{0,1,\dots,S(n)-1\}$ for some
$S(n)\le n^{\mathcal{O}(1)}$. Memory is a mapping $M:\mathcal{A}\to\mathbb{U}$, \emph{zero-initialized}.

Let $a(n)=\lceil\log_2 S(n)\rceil$ denote the address width. We adopt the transdichotomous condition:
\begin{equation}
w(n)\ \ge\ \max\{\lceil\log_2 S(n)\rceil,\ \lceil\log_2 P(n)\rceil\}
\qquad\text{and}\qquad w(n)=\Theta(\log n).
\end{equation}
This ensures every address and processor ID fits in one word, enabling well-typed register-indirect addressing and processor identification.

\paragraph{Instruction Set and Semantics.}
The machine operates with register names $\Reg=\{0,1,\ldots,k\}$ (for constant $k$), register file $R\in\mathbb{U}^{k+1}$, immediate constants $\Imm\subset\mathbb{Z}$ (a fixed finite set independent of $n$ and $w$), and label identifiers $\Lab$ for jump targets.\footnote{A label is a human-readable name for a program location (instruction index) serving as a jump/branch target.} We assume a constant-size register file with $|\Reg| \ge 2$ in the proof (any constant $\ge 2$ suffices up to constant factors). Programs define a partial label table $\mathrm{addr}:\Lab\rightharpoonup\{0,\ldots,\ell\}$ mapping each declared label to its instruction index (injective). 

The instruction alphabet $\Instr$ consists of the following parameterized forms ($r,s\in\Reg$, $c\in\Imm$, $L\in\Lab$):
\[
\begin{aligned}
\Instr \;=\;&
\{\ \mathtt{LOAD}\ r,\,[s],\ \mathtt{STORE}\ [s],\,r,\ \mathtt{LOADI}\ r,\,c\ \}\ \\
&\cup\ \{\ \mathtt{ADD}\ r,\,s,\ \mathtt{SUB}\ r,\,s,\ \mathtt{AND}\ r,\,s,\ \mathtt{XOR}\ r,\,s,\ \mathtt{SHL}\ r,\,s,\ \mathtt{SHR}\ r,\,s\ \}\ \\
&\cup\ \{\ \mathtt{BRZ}\ r,\,L,\ \mathtt{JMP}\ L,\ \mathtt{HALT}\ \}.
\end{aligned}
\]
Unbracketed registers $r,s$ denote their word values $R_r,R_s\in\mathbb{U}$. The bracketed form $[s]$ denotes register-indirect addressing: $\texttt{LOAD } r,[s]$ reads $M[R_s]$ into $R_r$, and $\texttt{STORE } [s],r$ writes $R_r$ to $M[R_s]$. Bracketed operands are only allowed in $\mathtt{LOAD}$/$\mathtt{STORE}$; nested or arithmetic addressing (e.g., $[[s]]$, $[r{+}c]$) is not part of this ISA. If $R_s\notin\mathcal{A}$, execution traps. Immediates are loaded as $c\bmod 2^w$.

The semantics of the instructions are as follows. We write $\sigma\to\sigma'$ for one execution step. Unless a jump changes it, set
$\mathrm{pc}\gets \mathrm{pc}+1$ where $\mathrm{pc}\in\{0,\ldots,\ell\}\cup\{\mathtt{HALT}\}$ is the program counter. Let $\oplus$ and $\wedge$ denote bitwise XOR and AND;
let $\ll$ and $\gg$ denote logical shifts; all arithmetic is modulo $2^w$. 

\begin{itemize}
\item $\mathtt{LOAD}\ r,\,[s]$:\;
      $a\gets R_s$; $R_r\gets M[a]$.

\item $\mathtt{STORE}\ [s],\,r$:\;
      $a\gets R_s$; $M[a]\gets R_r$.

\item $\mathtt{LOADI}\ r,\,c$:\;
      $R_r\gets c \bmod 2^w$.

\item $\mathtt{ADD}\ r,\,s$ / $\mathtt{SUB}\ r,\,s$:\;
      $R_r\gets (R_r \pm R_s)\bmod 2^w$.

\item $\mathtt{AND}\ r,\,s$ / $\mathtt{XOR}\ r,\,s$:\;
      $R_r\gets R_r \wedge R_s$ \; / \; $R_r\gets R_r \oplus R_s$.

\item $\mathtt{SHL}\ r,\,s$ or $\mathtt{SHR}\ r,\,s$:\;
      $h\gets R_s \bmod w$; \; $\mathtt{SHL}$: $R_r\gets (R_r \ll h)\bmod 2^w$;\;
      $\mathtt{SHR}$: $R_r\gets \lfloor R_r/2^h \rfloor$ (logical right shift, zero fill).

\item $\mathtt{BRZ}\ r,\,L$:\;
      If $R_r=0$ then $\mathrm{pc}\gets \mathrm{addr}(L)$ else (no change to $\mathrm{pc}$ beyond $+1$).

\item $\mathtt{JMP}\ L$:\;
      $\mathrm{pc}\gets \mathrm{addr}(L)$.

\item $\mathrm{pc}\gets \mathtt{HALT}$ and execution stops.
\end{itemize}

Intuitively, $\mathtt{LOAD}$ and $\mathtt{STORE}$ handle memory access through register-indirect addressing, $\mathtt{LOADI}$ loads immediate constants, $\mathtt{ADD}$/$\mathtt{SUB}$ perform modular arithmetic, $\mathtt{AND}$/$\mathtt{XOR}$ enable bitwise operations, $\mathtt{SHL}$/$\mathtt{SHR}$ provide bit shifts.  $\mathtt{BRZ}$ (branch 
if zero) enables conditional branching, $\mathtt{JMP}$ provides unconditional jumps, and $\mathtt{HALT}$ terminates execution.

\paragraph{Programs and Configurations.}
A \emph{program} is a pair $\mathcal{P}=(I_0,\ldots,I_\ell,\mathrm{addr})$ with $I_i\in\Instr$ and a partial label table $\mathrm{addr}:\Lab\rightharpoonup\{0,\ldots,\ell\}$ mapping each declared label to its
instruction index (injective). The program is
\emph{well formed} if whenever some $I_i$ equals $\mathtt{JMP}\ L$ or $\mathtt{BRZ}\ r,L$, then
$L\in\operatorname{dom}(\mathrm{addr})$. Code is immutable during execution and independent of~$n$ (and thus the RAM model considered here is uniform). A \emph{configuration} is $\sigma=(\mathrm{pc},R,M)$ where $\mathrm{pc}\in\{0,\ldots,\ell\}\cup\{\mathtt{HALT}\}$ is the program counter, $R\in\mathbb{U}^{k+1}$ is the register file, and $M:\mathcal{A}\to\mathbb{U}$ is memory. Input occupies $M[0..n-1]$; output is read from a designated location upon termination.

\paragraph{Initialization.}
Given an input instance of length $n$, initialization proceeds as follows:
\begin{enumerate}[leftmargin=1.2em, itemsep=1pt, topsep=2pt]
\item Build the label table $\mathrm{addr}$ from the loaded code and check well-formedness (every label operand in the code must appear exactly once as a declared label).
\item Zero-initialize memory $M$ and write the input into a designated block (e.g., $M[0..n-1]$) using the agreed-upon encoding.
\item Set all registers to zero: $R_i\gets 0$ for $i\in\{0,\ldots,k\}$.
\item Set the program counter to the first instruction: $\mathrm{pc}\gets 0$.
\end{enumerate}
The choice $w\ge \lceil\log_2 S(n)\rceil$ ensures that register-indirect addressing is well-typed: a bracketed operand $[s]$ uses $R_s$ as an address in $\mathcal{A}$.

\paragraph{Execution Cycle.}
While $\mathrm{pc}\neq\mathtt{HALT}$ and no trap occurs, the machine advances in discrete steps. Each successful step costs one time unit. Let $I_{\mathrm{pc}}$ denote the instruction at index $\mathrm{pc}$. Each step follows the fetch-decode-execute-commit cycle:

\begin{enumerate}[leftmargin=1.2em, itemsep=1pt, topsep=2pt]
\item \textbf{Fetch}: Read the current instruction $I \leftarrow I_{\mathrm{pc}}$. If $\mathrm{pc}\notin\{0,\ldots,\ell\}$, the run is invalid and we define this as a trap.

\item \textbf{Decode and read operands}: Parse the opcode and operands of $I$ without changing the machine state. Unbracketed registers $r,s$ denote their current word values $R_r,R_s\in\mathbb{U}$ (used as data). A bracketed operand $[s]$ denotes the candidate address $a\gets R_s$. An immediate $c\in\Imm$ is interpreted as $c\bmod 2^w$. A label $L$ resolves to $\mathrm{addr}(L)$ (guaranteed by well-formedness). No writes occur in this phase.

\item \textbf{Execute}: Apply the instruction semantics of $I$ to compute a finite \emph{write-set} $W$ (register and/or memory locations with their new values) and the \emph{next program counter} $\mathrm{pc}_{\text{next}}$. For memory-referencing instructions, a bracketed operand $[s]$ is valid only if $a=R_s\in\mathcal{A}$; otherwise a trap occurs. By default $\mathrm{pc}_{\text{next}}=\mathrm{pc}+1$, except for jumps/branches/halting which set it to $\mathrm{addr}(L)$ (or $\mathtt{HALT}$).

\item \textbf{Commit (writeback)}: Atomically apply the writes in $W$ to $(R,M)$ and then set $\mathrm{pc}\gets \mathrm{pc}_{\text{next}}$. Atomicity means all effects of the step become visible only at the end of the step.

\item \textbf{Cost and continuation}: If no trap occurred, charge one unit of time for this step and proceed to the next; otherwise the run aborts (abnormal termination), and only successfully committed steps are counted.
\end{enumerate}

\paragraph{Termination and Complexity.}
Execution halts when $\mathrm{pc}=\mathtt{HALT}$. The algorithm's output is read from the designated output location(s) in memory (or registers) as specified by the program. Under the assumptions above and for well-formed programs with legal memory accesses, the step relation is deterministic and yields a unique next state at each iteration. The \emph{time complexity} of an algorithm is the number of executed instructions before halting. A \emph{trap} aborts the run immediately (abnormal termination); only successfully committed steps are counted in time.

The RAM model defined here is polynomially equivalent to
bit-complexity RAM (a $\Theta(\log n)$ factor separates their running times) and to richer word-RAMs
that add $\mathtt{MUL}/\mathtt{DIV}/\mathtt{POPCNT}/\mathtt{CLZ}$ (whose presence typically improves
only by constant or $\log\log n$ factors).

\begin{algorithm}[t]
   \caption{Single-Processor Execution (Word-RAM semantics with PID init)}
   \label{alg:single-processor}
   \emph{Note.} In PRAM, $\mathtt{STORE}$ generates a pending write committed at the end of the round under the $\CREW$ rule. In the single-processor case, the store can be applied immediately.
   \begin{algorithmic}[1]
   \Require Program $\mathcal{P}=(I_0,\dots,I_\ell,\mathrm{addr})$, shared memory $M:\mathcal{A}\!\to\!\mathbb{U}$, word size $w$, processor id $\mathsf{pid}$, processor budget $P(n)$ (optional)
   \State \textbf{Init:} $pc \gets 0$;\quad $R[j]\gets 0$ for all $j$;\quad $R[0]\gets \mathsf{pid}$;\quad \textbf{optional: }$R[1]\gets P(n)\bmod 2^w$
   \While{$pc \neq \mathtt{HALT}$}
     \State $I \gets I_{pc}$ \Comment{fetch}
     \State $pc_{\text{next}} \gets pc + 1$ \Comment{default fall-through}
     \If{$I$ is \texttt{LOAD} $r,[s]$} \Comment{decode}
        \State $a \gets R[s]$
        \If{$a \notin \mathcal{A}$} \State \textbf{trap} \EndIf
        \State $R[r] \gets M[a]$ \Comment{execute}
     \ElsIf{$I$ is \texttt{STORE} $[s],r$}
        \State $a \gets R[s]$
        \If{$a \notin \mathcal{A}$} \State \textbf{trap} \EndIf
        \State $M[a] \gets R[r]$ \Comment{in PRAM semantics, this is a write event to be committed this round}
     \ElsIf{$I$ is \texttt{LOADI} $r,c$}
        \State $R[r] \gets c \bmod 2^w$
     \ElsIf{$I$ is \texttt{ADD} $r,s$}
        \State $R[r] \gets (R[r] + R[s]) \bmod 2^w$
     \ElsIf{$I$ is \texttt{SUB} $r,s$}
        \State $R[r] \gets (R[r] - R[s]) \bmod 2^w$
     \ElsIf{$I$ is \texttt{AND} $r,s$}
        \State $R[r] \gets R[r] \wedge R[s]$ \Comment{bitwise AND}
     \ElsIf{$I$ is \texttt{XOR} $r,s$}
        \State $R[r] \gets R[r] \oplus R[s]$ \Comment{bitwise XOR}
     \ElsIf{$I$ is \texttt{SHL} $r,s$}
        \State $h \gets R[s] \bmod w$
        \State $R[r] \gets (R[r] \ll h)\bmod 2^w$
     \ElsIf{$I$ is \texttt{SHR} $r,s$}
        \State $h \gets R[s] \bmod w$
        \State $R[r] \gets \lfloor R[r]/2^h \rfloor$ \Comment{logical right shift, zero-fill}
     \ElsIf{$I$ is \texttt{BRZ} $r,L$}
        \If{$R[r] = 0$} \State $pc_{\text{next}} \gets \mathrm{addr}(L)$ \EndIf
     \ElsIf{$I$ is \texttt{JMP} $L$}
        \State $pc_{\text{next}} \gets \mathrm{addr}(L)$
     \ElsIf{$I$ is \texttt{HALT}$\;$}
        \State $pc_{\text{next}} \gets \mathtt{HALT}$
     \Else
        \State \textbf{trap} \Comment{unknown opcode or malformed operands}
     \EndIf
     \State $pc \gets pc_{\text{next}}$ \Comment{commit PC; regs/memory updated in each branch above}
   \EndWhile
   \end{algorithmic}
   \end{algorithm}

\subsection{Extension to CREW PRAM}

We extend the Word-RAM defined above to a parallel machine with a processor-budget function $P:\mathbb{N}\to\mathbb{N}$ (typically $P(n)\le n^{\mathcal{O}(1)}$). All word-size/address-width assumptions, the instruction alphabet $\Instr$, the immediate-set restriction, and the \emph{single-processor} instruction semantics are exactly as in the Word-RAM subsection.

\paragraph{Processors and Shared State.}
Processors are indexed by $i\in\{0,\ldots,P(n)-1\}$. Each processor has its own program counter and register file; memory is shared:
\[
\Sigma \;=\; \bigl( (\pc_0,\ldots,\pc_{P(n)-1}),\ (R^0,\ldots,R^{P(n)-1}),\ M \bigr),
\]
where $\pc_i\in\{0,\ldots,\ell\}\cup\{\mathtt{HALT}\}$ and $R^i=(R^i_0,\ldots,R^i_k)\in\mathbb{U}^{k+1}$. All processors run the same program $\mathcal{P}=(I_0,\ldots,I_\ell,\addr)$.

\paragraph{Initialization (with Processor IDs).}
At time $t=0$:
\begin{enumerate}[leftmargin=1.2em, itemsep=1pt, topsep=2pt]
\item Build $\addr$ and check well-formedness (as in Word-RAM); zero-initialize $M$ and write the input block.
\item For each $i\in\{0,\ldots,P(n)-1\}$, set $\pc_i\gets 0$ and clear registers; then write \emph{processor-local identifiers:} \underline{$R^i_0\gets i$} and, if $P(n)\le 2^{w(n)}$, optionally \underline{$R^i_1\gets P(n)\bmod 2^{w(n)}$}. All other $R^i_j\gets 0$.
\end{enumerate}
These two words are provided so that processors can branch, partition work, and self-disable if unused.

\paragraph{Concurrent-Access Policy (CREW).}
Multiple processors may \emph{read} the same address in the same round; \emph{writes must be exclusive}: if two or more writes target the same address in a round, the run traps (abnormal termination).

\paragraph{Round Semantics (Referencing the Word-RAM Step).}
Each active processor executes exactly one instruction using the single-processor Word-RAM step semantics; the only new aspects are (i) simultaneous execution by many processors and (ii) end-of-round memory commit subject to the $\CREW$ policy. Execution proceeds in synchronous rounds $t=0,1,2,\ldots$ with state $\Sigma_t=((\pc_i^t)_i,(R^{i,t})_i,M^t)$. In round $t$, each active processor $i$ with $\pc_i^t\in\{0,\ldots,\ell\}$ executes instruction $I_{\pc_i^t}$ on its local snapshot $(\pc=\pc_i^t,\ R=R^{i,t})$ and shared memory $M^t$. After all processors compute their local effects, the round commits: register writebacks $R^{i,t}\to R^{i,t+1}$ (independently), then memory writes to $M^{t+1}$ under $\CREW$ constraints, finally program counter updates $\pc_i^{t+1}$.

\paragraph{Termination and Cost Measures.}
The parallel run terminates when $\pc_i^t=\mathtt{HALT}$ for all $i$ (or traps on an invalid access/conflict). One round costs one unit of \emph{parallel time}. The \emph{work} is the total number of executed instructions $W(n)=\sum_t |\{\,i:\pc_i^t\in\{0,\ldots,\ell\}\,\}|$, and the \emph{span} is $T_\infty(n)$ (the critical-path length). For $P(n)$ processors the Brent bound holds~\citep{jaja1992parallel}:
\[
T_{P(n)}(n)\ \le\ \Big\lceil \frac{W(n)}{P(n)} \Big\rceil + T_\infty(n).
\]
A single-processor run ($P(n)=1$) coincides with the Word-RAM model.

\paragraph{Remarks (on $P(n)$, Processor IDs, and Unused Processors).}
(1) \emph{Uniformity:} the code $\mathcal{P}$ and the immediate set $\Imm$ are independent of $n$; only the hardware parameters $w(n)$, $S(n)$, and $P(n)$ scale with input size. (2) \emph{Processor IDs:} the values $i$ and $P(n)$ are provided via initialization registers ($R^i_0$ and optionally $R^i_1$) for branching and work partitioning; programs may copy/overwrite them. (3) \emph{Unused processors:} if an algorithm needs only $m(n)\le P(n)$ processors, each processor executes a short self-filter based on $i$ (e.g., \texttt{if $i\ge m(n)$ then HALT}), or computes its assigned block; processors with empty assignment halt in $O(1)$ rounds, which does not affect the asymptotic parallel time.

The algorithm for a single-processor in PRAM is shown in \Cref{alg:single-processor}.

\section{Encoder Transformer Architecture} \label{appendix:encoder}

This section presents encoder-only Transformers, which form the backbone of MDM. We will first establish the sequence-wise extension operation, then define the core components, including bidirectional self-attention, multi-head mechanisms, and feed-forward layers, before assembling the complete architecture.

We consider an encoder-only Transformer with $H$ heads, $L$ layers, hidden size $d$, and feed-forward width $w$. We will use the following notations:

\begin{definition}[Position-Indexed Seq-to-Embedding Function]\label{def:Fpos}
    For a set $B$, let $\F(B)$ denote the set of all functions $\psi$ such that for every sequence $x=(x_1,\ldots,x_n)\in\Sigma^*$ and every index $i\in[n]$, the value $\psi(x,i)\in B$ is defined. We write this succinctly as
    \begin{equation}
        \psi:(\Sigma^*,\mathbb N)\to B.
    \end{equation}
and call this \emph{position-indexed seq-to-embedding function}. We also define $\F = \cup_{d\in \mathbb{N}^+} \F(\mathbb{R}^d)$ as the union of all such classes across real spaces of all output dimensions.
\end{definition}
    
\begin{definition}[Canonical Extension to Seq-to-Seq Function]\label{def:seq-ext}
Given a position-indexed seq-to-embedding function $\psi \in \F(B)$, its \emph{canonical extension} is defined as:
\begin{equation}
    \overline{\psi}:\Sigma^* \to B^*
    \quad\text{where}\quad
    [\overline{\psi}(x)]_i = \psi(x,i)\qquad(i\in[|x|]).
\end{equation}
For elementwise functions $g:\mathbb{R}^d\to\mathbb{R}^{d'}$, we define $\overline{g}:(\mathbb{R}^d)^*\to(\mathbb{R}^{d'})^*$ by $[\overline{g}(h_{1:n})]_i=g(h_i)$, which is a special case where $\psi(h_{1:n},i) = g(h_i)$ (ignoring cross-position context). When the arity is clear, we reuse the bar notation for both position-indexed and elementwise extensions.
\end{definition}

We now define the individual components of encoder Transformers:

\paragraph{Bidirectional Self-Attention.} The key difference from decoder Transformers is bidirectional attention, where each position can attend to all positions in the sequence. Let $d_h$ be the head dimension. For $W_Q,W_K,W_V\in\mathbb{R}^{d_h\times d}$ and $W_O\in\mathbb{R}^{d\times d_h}$, we define single-head attention on sequence of embeddings $h_{1:n}\in(\mathbb{R}^d)^n$ for any $n\in\mathbb{N}^+$:
\begin{align}
q_i&=W_Q h_i,\quad k_j=W_K h_j,\quad v_j=W_V h_j \\
[\sa_\theta(h_{1:n})]_i &= W_O\sum_{j=1}^n \alpha_{ij} v_j
\end{align}
with $\alpha_{i,\cdot}=\softmax\big((q_i^\top k_j)_{j=1}^n\big)$, $\theta=(W_Q,W_K,W_V,W_O)$. Position $i$ attends to all $j\in[n]$ without causal restrictions. We use standard $1/\sqrt{d_h}$ scaling.

\paragraph{Multi-Head Attention.} With $\theta_{\mha}=(\theta^{(1)},\ldots,\theta^{(H)})$, we combine heads via summation:
\begin{equation}
[\mha_{\theta_{\mha}}(h_{1:n})]_i = \sum_{t=1}^{H} [\sa_{\theta^{(t)}}(h_{1:n})]_i
\end{equation}
for any $i\in[n]$. Note that this differs from practical implementations which concatenate heads with dimension $d/H$ each, but maintains equivalent theoretical expressivity.

\paragraph{Feed-Forward and Projection.} Let $w=d_{\ff}$. For $W_1\in\mathbb{R}^{w\times d}$ and $W_2\in\mathbb{R}^{d\times w}$:
\begin{equation}
\ff_\theta(h)=W_2\,\sigma(W_1 h)
\end{equation}
For output projection, $\proj_\vartheta:\mathbb{R}^d\to\mathbb{R}^{|\Sigma|}$ with $\proj_\vartheta(h)=\vartheta h$ and $\vartheta\in\mathbb{R}^{|\Sigma|\times d}$. We apply these via sequence-wise extension: $\overline{\ff}$ and $\overline{\proj}$. 

For AP-MDM as described in \Cref{sec:any_process}, besides the above heads for $\unmask$, it would require three additional binary classification heads on top of the final layer: $\proj_R: \mathbb{R}^d \to \mathbb{R}$ for $\remask$, $\proj_I: \mathbb{R}^d \to \mathbb{R}$ for $\inser$, and $\proj_D: \mathbb{R}^d \to \mathbb{R}$ for $\delete$ operations, each followed by sigmoid activation. Therefore, $\proj_\vartheta$ is a mapping from $\mathbb{R}^d$ to $\mathbb{R}^{|\Sigma|+3}$.

\paragraph{Embeddings.} Define token embedding $\te:\Sigma\to\mathbb{R}^d$ and positional embedding $\pe:\mathbb{N}^+\to\mathbb{R}^d$ (which can be flexibly chosen and will be specified when used). Combined as $\overline{\te}+\overline{\pe}$. We write $\te,\pe$ for their sequence-wise extensions when clear from context.

\paragraph{Residual Connections.} The identity function $\Id_d: \mathbb{R}^d \to \mathbb{R}^d$ is defined by $\Id_d(x) = x$. A \emph{residual connection} is defined as $f + \Id_d$, where $\Id_d$ is the identity function.

Next, we assemble these components into the encoder transformer architecture:

\begin{definition}[Encoder Transformer layer]
\label{def:enc-layer}
An encoder layer is defined as:
\begin{equation}
\EncTF_{\mha,\ff} = (\overline{\ff_{\ff}}+\overline{\Id_d})\circ(\mha_{\theta_\mha}+\overline{\Id_d}):(\mathbb{R}^d)^*\to(\mathbb{R}^d)^*
\end{equation}
\end{definition}

\begin{definition}[Encoder Transformer]
\label{def:encoder}
With parameters $\theta=(\theta_{\te}$,$\theta_{\pe}$,$(\theta^{(\ell)}_{\mha})_{\ell=1}^L$,$ (\theta^{(\ell)}_{\ff})_{\ell=1}^L$,$\theta_{\proj})$, the encoder transformer is:
\begin{equation}
\Enc_\theta = \overline{\proj_{\theta_{\proj}}}\circ\left(\bigcirc_{\ell=1}^L \EncTF_{\theta^{(\ell)}_{\mha},\theta^{(\ell)}_{\ff}}\right) \circ\big(\te_{\theta_{\te}}+\pe_{\theta_{\pe}}\big)
\end{equation}
The model applies embeddings, then $L$ encoder layers, then position-wise projection to vocabulary logits. Output length equals input length $n$.
\end{definition}

\section{Key Tool: Encoder Full-Access Sequence Processing (\efasp)} \label{appendix:fasp}

In this section, we develop Full-Access Sequence Processing for encoders (\efasp), a programming language whose programs describe the construction process of seq-to-embedding functions that are equivalent to those computed by encoder-only Transformers. This extends the \texttt{FASP} framework originally developed for decoder-only Transformers in \citet{yang2025pencil}. Similar connections have also been established in \citet{weiss2021thinking,yang2024counting}.

$\efasp$ is the key technical tool that will be used to prove \Cref{thm:main_mdm}, \Cref{thm:any_order} and \Cref{thm:main_apmdm}.

\subsection{Definition of $\efasp$}\label{subsec:encoder_fasp_definition}

\paragraph{Notations.} Recall that in \Cref{appendix:encoder}, we defined the position-indexed seq-to-embedding function space $\F(B)$ as the set of all functions $\psi$ that map a sequence and a position index to an element in $B$:
\begin{equation}
    \psi:(\Sigma^*,[\cdot])\to B
\end{equation}
That is, for every sequence $\mathbf{x}=(x_1,\ldots,x_n)\in\Sigma^*$ and every index $i\in[n]$, we have $\psi(\mathbf{x},i)\in B$. We also define $\F = \cup_{d\in \mathbb{N}^+} \F(\mathbb{R}^d)$ as the union of all such classes across real spaces of all output dimensions. For any position-indexed function $\psi \in \F(B)$, its canonical extension $\seq{\psi}:\Sigma^* \to B^*$ is defined by $[\seq{\psi}(\mathbf{x})]_i = \psi(\mathbf{x},i)$ for $i\in[|\mathbf{x}|]$. This allows us to convert position-indexed functions to sequence-to-sequence functions when needed. 

Also recall that $\pe:\mathbb{N}^+\to\mathbb{R}^{d}$ is a positional embedding, and we additionally define $\op_{\act}$ as a class of activation functions. We formally define $\efasp$ as follows:

\begin{definition}[Encoder-FASP]\label{def:efasp}   
    An $\efasp$ program is a sequence of position-indexed seq-to-embedding functions $\psi_1,\ldots,\psi_T$ constructed inductively. At each step $t\in[T]$, the program maintains a set of defineable position-indexed seq-to-embedding functions $\dfnb_t$, and defines a new function by applying operators to functions in $\dfnb_t$. We define the defineable functions at step $t\in[T]$:
    \begin{align}
        \dfnb_t \triangleq \{\te, \pe\} \cup \{\psi_i \mid  1\le i\le t-1\}
    \end{align}
    where $\te(\mathbf{x},i) = \te(x_i)$ and $\pe(\mathbf{x},i) = \pe(i)$ are the token and positional embedding functions respectively, viewed as position-indexed seq-to-embedding functions. $\psi_t$ at step $t$ has to be defined by applying one of the following four \emph{primitive} operators on already-defined functions from $\dfnb_t$:
    
    \begin{enumerate}
    \item \textbf{Concatenation}: For $\psi, \psi' \in \dfnb_t$ with $\psi \in \F(\mathbb{R}^{d_1})$ and $\psi' \in \F(\mathbb{R}^{d_2})$, define
    \begin{equation}
        [\psi, \psi'](\mathbf{x}, i) = (\psi(\mathbf{x}, i) \| \psi'(\mathbf{x}, i)) \in \mathbb{R}^{d_1 + d_2}
    \end{equation}
    where $\|$ denotes vector concatenation.
    
    \item \textbf{Linear Projection}: For $\psi \in \dfnb_t$ with $\psi \in \F(\mathbb{R}^d)$ and matrix $W \in \mathbb{R}^{d' \times d}$, define
    \begin{equation}
        (W \circ \psi)(\mathbf{x}, i) = W \cdot \psi(\mathbf{x}, i) \in \mathbb{R}^{d'}
    \end{equation}
    
    \item \textbf{Nonlinear Activation}:\footnote{We allow multi-variable activation functions like Gated ReLU (ReGLU), $x,y\mapsto x[y]_+$.} For $\psi \in \dfnb_t$ with $\psi \in \F(\mathbb{R}^d)$ and $\sigma \in \op_\act$, define
    \begin{equation}
        (\sigma \circ \psi)(\mathbf{x}, i) = \sigma(\psi(\mathbf{x}, i))
    \end{equation}

    \item \textbf{Encoder Average-Hard Attention}: For $q, k \in \F(\mathbb{R}^d)$ and $v \in \F(\mathbb{R}^{d'})$ where $q, k, v \in \dfnb_t$, define
    \begin{equation} \label{eq:aha}
        \aha(q, k, v)(\mathbf{x}, i) = \frac{1}{|A_i|} \sum_{j \in A_i} v(\mathbf{x}, j)
    \end{equation}
    where $A_i = \arg\max_{j \in [|\mathbf{x}|]} \langle q(\mathbf{x}, i), k(\mathbf{x}, j) \rangle$ and ties are averaged uniformly. This attention can be seen as a pecial case of standard softmax attention with temperature approaching 0~\citep{merrill2022saturated}.
    \end{enumerate}
    
Finally, when we want to use $\efasp$ to define a function mapping from a sequence of tokens $\Sigma^*$ and a position index $i$ to a single token in $\Sigma$, we can define $\psi \in \F(\mathbb{R}^{|\Sigma|})$ and return $\arg\max \psi(\mathbf{x}, i)$ (the token corresponding to the largest logit at the position $i$).\footnote{We could assume an arbitrary order to break ties, but we omit this for simplicity. In our examples we always ensure the argmax is unique.}
\end{definition}

We denote the set of all position-indexed seq-to-embedding functions defineable by $\efasp$ with position embedding $\pe$ and activation functions $\op_\act$ as $\efasp[\pe;\op_{\act}]$, where $\pe$ can be either $\bipe$ or $\SEQ$. The expressivity of $\efasp$ depends on the specific positional embedding and activation functions used.

\subsection{Equivalence with Encoder Transformer}
\label{subsec:equiv_encoder_tf}

We now establish the equivalence between $\efasp$ and encoder-only Transformers, and define the specific instantiation considered in the proof of this paper.

\begin{definition}[Encoder Transformer Function Class]\label{def:encoder_tf_class}
Let $\F_{\text{EncTF}[\pe;\op_{\act}]}$ be the class of seq-to-embedding functions that can be expressed by encoder-only Transformers of finite depth, where the positional embedding uses $\pe$ (either $\bipe$ or $\SEQ$), feed-forward layers use activation functions from $\op_{\act}$, attention layers use average-hard attention as defined in \Cref{eq:aha}, and all intermediate computations use finite precision arithmetic.
    \end{definition}
    
It is straightforward to see that both variants are equivalent to their corresponding Transformer function classes: 

\begin{lemma}[Equivalence of $\efasp$ and Encoder Transformers]\label{thm:efasp_equiv_encoder}
For any positional embedding $\pe \in \{\bipe, \SEQ\}$ and activation function class $\op_{\act}$, the following equivalence holds:
\begin{equation}
\efasp[\pe;\op_{\act}] = \F_{\text{EncTF}[\pe;\op_{\act}]}
\end{equation}
\end{lemma}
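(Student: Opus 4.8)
The plan is to establish the two inclusions $\efasp[\pe;\op_{\act}] \subseteq \F_{\text{EncTF}[\pe;\op_{\act}]}$ and $\F_{\text{EncTF}[\pe;\op_{\act}]} \subseteq \efasp[\pe;\op_{\act}]$ separately, each by a structural induction that exploits the one-to-one correspondence between the four $\efasp$ primitives of \Cref{def:efasp} and the sublayer components of the encoder Transformer of \Cref{def:encoder}. The organizing idea is a \emph{residual-stream invariant}: a widened hidden state whose disjoint coordinate blocks simultaneously hold the values of all functions currently in scope. Because $\F_{\text{EncTF}[\pe;\op_{\act}]}$ is \emph{defined} (\Cref{def:encoder_tf_class}) to use average-hard attention, the same positional embedding $\pe$, the same activation class $\op_{\act}$, and finite precision, the simulation is exact in both directions; there is no softmax-vs-hard-attention approximation to control, which is what makes the statement ``straightforward.''

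For the forward inclusion I would induct on the program length $T$, maintaining the invariant that after the first $t$ steps there is an encoder Transformer whose residual stream, at every position $i$ on every input $\mathbf{x}$, stores $\te(\mathbf{x},i)$, $\pe(\mathbf{x},i)$, and $\psi_1(\mathbf{x},i),\ldots,\psi_t(\mathbf{x},i)$ in designated blocks. The base case is the embedding layer $\te_{\theta_{\te}}+\pe_{\theta_{\pe}}$, which I route into two separate blocks. For the inductive step, the primitive used to define $\psi_{t+1}$ dictates the sublayer: $\aha$ is realized by one attention sublayer whose $W_Q,W_K,W_V$ read the blocks for $q,k,v$ and whose $W_O$ writes the result into a fresh block (with the $1/\sqrt{d_h}$ scaling absorbed into $W_Q$); nonlinear activation is realized by a feed-forward sublayer with the matching $\sigma\in\op_{\act}$; and concatenation together with linear projection is realized either by reading coordinates directly or by a feed-forward sublayer, using the split $Wx=\sigma(Wx)-\sigma(-Wx)$ to produce a pure linear map when the activation is ReLU. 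In every case the residual connections $(\cdot)+\overline{\Id_d}$ preserve the earlier blocks, so the invariant is maintained; the final $\arg\max$ convention of $\efasp$ matches the output projection $\overline{\proj_{\theta_{\proj}}}$.

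For the reverse inclusion I would induct on the number of layers $L$, expressing the residual stream produced after each layer as a tuple of $\efasp$-definable functions. The embedding $\te+\pe$ is a concatenation followed by a summing linear projection. An attention sublayer $\mha_{\theta_\mha}+\overline{\Id_d}$ decomposes head-by-head: each single head $\sa_{\theta^{(t)}}$ is exactly $W_O\cdot\aha(W_Q\circ\psi,\,W_K\circ\psi,\,W_V\circ\psi)$ (this is precisely where the average-hard semantics of \eqref{eq:aha} align with the head definition), the sum over heads is linear, and the residual is concatenation plus a summing projection. A feed-forward sublayer $\overline{\ff_{\ff}}+\overline{\Id_d}$ is linear--activation--linear with an added residual, i.e.\ linear projection, then $\sigma\in\op_{\act}$, then linear projection, then concatenation and sum. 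The final projection is one more linear projection.

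The main obstacle is not conceptual but bookkeeping: I must allocate fresh coordinate blocks and write block-structured weight matrices so that each new computation reads exactly the intended sub-coordinates and writes only to unused ones, never clobbering previously stored functions, and I must verify that pure linear $\efasp$ projections are expressible inside the Transformer's feed-forward and value/output maps (handled by the ReLU split, or by folding the linear map into the adjacent primitive's input projection when $\op_{\act}$ is more restrictive). Once the residual-stream invariant is set up cleanly, the remaining steps are mechanical, and the argument mirrors the decoder-only \texttt{FASP} equivalence of \citet{yang2025pencil} adapted to bidirectional attention.
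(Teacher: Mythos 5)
Your proposal is correct and follows essentially the same route as the paper's (sketched) proof: a two-directional structural induction in which each $\efasp$ primitive is matched to an encoder sublayer, with the forward direction realized by widening the hidden state so that all previously defined functions coexist in disjoint coordinate blocks (the paper's ``merging smaller Transformers into a larger one''), and the reverse direction by decomposing each layer into primitives. The paper defers the bookkeeping details to the decoder-only construction of \citet{yang2025pencil}; your residual-stream invariant and the ReLU-split trick for pure linear maps are exactly the details that construction supplies.
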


\begin{proof}[Proof Sketch]
\textbf{Forward direction}: Each $\efasp$ primitive operator (concatenation, linear projection, nonlinear activation, encoder attention) directly corresponds to operations in encoder Transformers. Concatenation involves merging multiple smaller Transformers into a larger Transformer that produces the same output. \textbf{Reverse direction}: Any encoder Transformer can be expressed as an $\efasp$ program by decomposing each layer into primitive operations.

The detailed proof, including the treatment of closed operators and inductive construction, is invariant to decoder or encoder Transformers, and thus is identical to \citet{yang2025pencil}; we omit the details.
\end{proof}

Intuitively, this equivalence holds because $\efasp$ 
programs capture the computational 
structure of encoder Transformers. Each step in an 
$\efasp$ program corresponds to defining a new seq-to-
embedding function by applying primitive operators to 
previously defined functions, which mirrors how 
smaller Transformers are constructed into a deeper  and wider Transformer that produces the same output. The Transformer corresponding to the program is of depth $\mathcal{O}(1)$ (given constant $T$) and embedding size $\mathcal{O}(\max\{d_\pe, d_\te\})$, by construction in the proof of \Cref{thm:efasp_equiv_encoder}.

\subsection{Two Variants of $\efasp$}

Throughout this paper, we consider two variants of $\efasp$ based on different positional embeddings, both using the same activation functions.

\paragraph{Variant 1: Binary Positional Encoding}
We define $\efasp$ with binary positional embedding $\bipe: \mathbb{N}^+ \to \{0,1\}^{\lceil \log_2 S(n) \rceil}$:
\begin{equation}
\bipe(i) = \text{binary representation of } i \text{ using } \lceil \log_2 S(n) \rceil \text{ bits}
\end{equation}
This representation uses $\lceil \log_2 S(n) \rceil$ bits to represent all possible positions within the maximum context length $S(n)$, and aligns with the address representation in PRAM (\Cref{appendix:pram}) for efficient bitwise arithmetic operations. 

\paragraph{Variant 2: Integer Positional Encoding}
We also define $\efasp$ with integer positional embedding $\SEQ: \mathbb{N}^+ \to \mathbb{N}^+$:
\begin{equation}
\SEQ(i) = i
\end{equation}
This is the identity mapping over $\mathbb{N}^+$ that directly uses the position index as a scalar feature, as considered in the original decoder-only \texttt{FASP} framework \citep{yang2025pencil}.

\paragraph{Activation Functions}
Both variants use the same class of activation functions $\op_{\act} = \{\text{ReGLU}\}$, where Gated ReLU (ReGLU)~\citep{dauphin2017language} is defined as $\text{ReGLU}(x,y) = x \cdot [y]_+ = x \cdot \max(y, 0)$ for $x,y \in \mathbb{R}$. With Gated ReLU as the primitive activation, we can express ReLU and multiplication operations through the following identities:
\begin{align}
\text{ReLU}(x) = \text{ReGLU}(x,1), \quad x \times y = \text{ReGLU}(x,y) - \text{ReGLU}(x,-y)
\end{align}
Therefore, having ReGLU allows us to express both ReLU and multiplication (reverse is also true), making both variants equivalent:
\begin{align}
\efasp[\bipe; \text{ReGLU}] &= \efasp[\bipe; [\cdot]_+, \times] \\
\efasp[\SEQ; \text{ReGLU}] &= \efasp[\SEQ; [\cdot]_+, \times]
\end{align}
where $[\cdot]_+$ and $\times$ are the ReLU and multiplication respectively.

\subsection{Original Supported Operators}

With the four primitive operators in $\efasp$ and the activation functions defined above, the following operators can be included in both variants $\efasp[\bipe; [\cdot]_+, \times]$ and $\efasp[\SEQ; [\cdot]_+, \times]$, adapted from the decoder version of \texttt{FASP}:

\emph{Arithmetic Operators}
\begin{itemize}
\item $\texttt{add}(\psi_1, \psi_2) = \psi_1 + \psi_2$: Element-wise addition
\item $\texttt{minus}(\psi_1, \psi_2) = \psi_1 - \psi_2$: Element-wise subtraction  
\item $\texttt{multi}(\psi_1, \psi_2) = \psi_1 \times \psi_2$: Element-wise multiplication
\item $\texttt{max}(\psi_1, \psi_2)$: Element-wise maximum
\item $\texttt{min}(\psi_1, \psi_2)$: Element-wise minimum
\end{itemize}

\emph{Boolean Operators}
For $\psi_1, \psi_2 \in \F(\{0,1\})$:
\begin{itemize}
\item $\texttt{and}(\psi_1, \psi_2) = \min(\psi_1, \psi_2)$: Logical AND
\item $\texttt{or}(\psi_1, \psi_2) = \lnot(\lnot \psi_1 \land \lnot \psi_2)$: Logical OR
\item $\texttt{not}(\psi) = 1 - \psi$: Logical NOT
\item $\texttt{xor}(\psi_1, \psi_2)$: Logical XOR
\end{itemize}

\emph{Comparison Operators}
For $\psi_1, \psi_2 \in \F(\mathbb{Z})$:
\begin{itemize}
\item $\texttt{leq}(\psi_1, \psi_2) = [\psi_2 - \psi_1 + 1]_+ - [\psi_2 - \psi_1]_+$: Less than or equal
\item $\texttt{geq}(\psi_1, \psi_2) = \texttt{leq}(\psi_2, \psi_1)$: Greater than or equal
\item $\texttt{eq}(\psi_1, \psi_2) = \texttt{leq}(\psi_1, \psi_2) \land \texttt{leq}(\psi_2, \psi_1)$: Equality
\item $\texttt{lt}(\psi_1, \psi_2) = \texttt{leq}(\psi_1, \psi_2 - 1)$: Less than
\item $\texttt{gt}(\psi_1, \psi_2) = \texttt{lt}(\psi_2, \psi_1)$: Greater than
\end{itemize}

\emph{Sequence Aggregation Operators}
\begin{itemize}
\item $\texttt{seq\_max}(\psi)$: Returns the maximum value across all positions in the sequence
\item $\texttt{seq\_min}(\psi)$: Returns the minimum value across all positions in the sequence  
\item $\texttt{seq\_and}(\psi) = \texttt{seq\_min}(\psi)$: Logical AND across all positions
\item $\texttt{seq\_or}(\psi) = \texttt{seq\_max}(\psi)$: Logical OR across all positions
\item $\texttt{seq\_sum}(\psi)$: Sum of values across all positions (requires $\log n$ positional embedding)
\item $\texttt{seq\_avg}(\psi) = \frac{1}{n}\sum_{j=1}^{n} \psi(x_{1:j})$: Average across all positions
\end{itemize}

\emph{Positional Operators}
\begin{itemize}
\item $\texttt{is\_first}(i) = \mathbf{1}[i=1]$: Indicator for first position
\item $\texttt{inv\_seq\_len}(i) = 1/n$: Inverse of sequence length
\item $\texttt{is\_pos\_k}(i) = \mathbf{1}[i=k]$: Indicator for position $k$
\end{itemize}

\emph{Control Flow Operators}
\begin{itemize}
\item $\texttt{if\_then\_else}(\psi_{\text{cond}}, \psi_{\text{true}}, \psi_{\text{false}})$ or $\texttt{ite}(\psi_{\text{cond}}, \psi_{\text{true}}, \psi_{\text{false}})$: If-then-else conditional selection
\end{itemize}

\emph{Attention Variants}
\begin{itemize}
\item $\texttt{aha}(q, k, v)$: Standard average-hard attention (encoder bidirectional)
\item $\texttt{rha}(q, k, v)$: Rightmost-hard attention (breaks ties by selecting rightmost position)
\item $\texttt{rightmost\_exact\_match}(q, k, v)$: Rightmost exact match (returns default if no exact match)
\end{itemize}

\subsection{Additional Operators and Justifications}

Next we give the semantics of some additional operators used in the PRAM simulation programs and justify their closure in the $\efasp$ framework.

\subsubsection*{Bitwise Arithmetic Operators}

These operators are defined in the encoder-$\efasp$ framework using activations $\{[\cdot]_+, \times\}$ (equivalently ReGLU) and are independent of the specific positional embedding choice. All inputs and outputs are position-indexed seq-to-embedding functions in $\F(\{0,1\}^m)$ where $\psi(\mathbf{x}, i) \in \{0,1\}^m$ encodes an $m$-bit integer with LSB at coordinate 1. All arithmetic is modulo $2^m$.

\paragraph{Bitwise Addition.} 
Given $\psi_1, \psi_2 \in \F(\{0,1\}^m)$, write at position $(\mathbf{x}, i)$:
\begin{equation}
\psi_1(\mathbf{x}, i) =: \mathbf{a} = (a_1, \ldots, a_m), \quad \psi_2(\mathbf{x}, i) =: \mathbf{b} = (b_1, \ldots, b_m) \in \{0,1\}^m
\end{equation}
Bitwise addition is defined as adding two $m$-bit integers modulo $2^m$. This can be constructed  using the primitive operators (and other operators that are already defined) in \Cref{def:efasp}, which follows an approach similar to standard carry-lookahead, and is a constant-depth, polylogarithmic-width construction:

Define for $k \in [m]$ the local propagate/generate bits:
\begin{equation}
p_k = a_k \oplus b_k = a_k + b_k - 2a_k b_k, \quad g_k = a_k \wedge b_k = a_k b_k
\end{equation}

Let $S_0 = 0$ and $S_j = \sum_{t \leq j} p_t$ for $j \in [m]$ (computed by a single linear layer). For $1 \leq j < i \leq m$, define the interval-all-ones gate:
\begin{equation}
Q_{j,i} = \mathrm{eq}_0\left((S_{i-1} - S_j) - ((i-1) - j)\right)
\end{equation}
where $\mathrm{eq}_k(u) := 2\left([u-(k-\frac{1}{2})]_+ - 2[u-k]_+ + [u-(k+\frac{1}{2})]_+\right)$ equals 1 at $u = k$ and 0 at all other integers.

The carry into bit $i$ is:
\begin{equation}
C_i = \begin{cases}
0, & i = 1 \\
1 - \mathrm{eq}_0\left(\sum_{j=1}^{i-1} g_j Q_{j,i}\right), & i \geq 2
\end{cases}
\end{equation}

The sum bits are $s_i = p_i \oplus C_i = p_i + C_i - 2p_i C_i$. We define:
\begin{equation}
\texttt{bit\_add}_m(\psi_1, \psi_2)(\mathbf{x}, i) := \mathbf{s} = (s_1, \ldots, s_m) \in \{0,1\}^m
\end{equation}

\paragraph{Bitwise Subtraction.}
For $\psi_1, \psi_2 \in \F(\{0,1\}^m)$, define:
\begin{equation}
\texttt{bit\_minus}_m(\psi_1, \psi_2) := \texttt{bit\_add}_m(\psi_1, \neg\psi_2) \dotplus 1
\end{equation}
where $\neg$ is bitwise NOT (elementwise $1 - \cdot$) and ``$\dotplus 1$" adds the constant vector $\mathbf{e}_1 = (1, 0, \ldots, 0)$ via the same $\texttt{bit\_add}_m$.

\paragraph{Logical Shifts.}
Let $\psi \in \F(\{0,1\}^m)$ and $\tau \in \F(\{0,1\}^m)$. At position $(\mathbf{x}, i)$, write $\mathbf{a} = \psi(\mathbf{x}, i) = (a_1, \ldots, a_m)$ and define the shift amount:
\begin{equation}
t = \mathrm{int}(\tau) = \sum_{r=1}^m 2^{r-1} \tau_r \in \{0, \ldots, m\}
\end{equation}

For $k \in [m]$, we define:
\begin{align}
[\texttt{shift\_left}_m(\psi, \tau)]_k &= \sum_{s=0}^{\min\{m, k-1\}} \mathrm{eq}_s(t) \cdot a_{k-s} \\
[\texttt{shift\_right}_m(\psi, \tau)]_k &= \sum_{s=0}^{\min\{m, m-k\}} \mathrm{eq}_s(t) \cdot a_{k+s}
\end{align}
where out-of-range indices are treated as 0, and $\mathrm{eq}_s(\cdot)$ is the integer-equality gate realized by three ReLUs.

\paragraph{Complexity Analysis.}
All operators act locally at each position on $\F(\{0,1\}^m)$ without cross-position communication, and are composed from $\efasp$ primitives. Throughout $m = \Theta(\log n)$.

The witness enumeration method for bitwise addition requires: \textbf{(i)} one linear layer for $(p, g)$ and prefix sums $(S_j)$; \textbf{(ii)} one nonlinear layer for witnesses $Q_{j,i}$ (each uses 3 ReLUs) and products $g_j Q_{j,i}$; \textbf{(iii)} linear aggregation and threshold for carries $C_i$; \textbf{(iv)} local polynomial for $s_i = p_i \oplus C_i$. This achieves constant depth (3-4 layers) and width $\mathcal{O}(m^2) = \mathcal{O}((\log n)^2)$ (polylogarithmic in $n$). Bitwise subtraction uses two's complement and reuses the same addition circuit with identical complexity bounds. Logical shifts compute the shift amount $t$ and all candidate shifts in parallel, then use equality gates for selection, also achieving constant depth and $\mathcal{O}(m^2)$ width.

All constructions use only $\efasp$ primitives (linear projections, ReLU/ReGLU activations, multiplication). By the equivalence established in \Cref{appendix:fasp}, these are realizable by constant-depth encoder Transformers.

\paragraph{Instruction Access Operations.}
This operator enables instruction fetching from memory by address lookup, which is essential for PRAM simulation.
\begin{align}
\mathrm{get\_instruction}(\mathbf{x}, i) &:= \mathrm{bin}(W_{\mathrm{INSTR}} \circ \mathrm{ite}(\mathrm{is\_addr}(\mathbf{x}, \cdot), \mathrm{bin}(\te(\mathbf{x}, \cdot)), \mathbf{0}_w))(\mathbf{x}, i) \in \{0,1\}^w
\end{align}
where $W_{\mathrm{INSTR}}: \mathbb{R}^w \to \mathbb{R}^w$ is a learned linear transformation (MLP layer) that maps address bits to instruction bits. This operator first extracts the address bits from address positions (even positions > 1) by converting their token embeddings to binary representations, then applies the instruction lookup transformation $W_{\mathrm{INSTR}}$ to produce the corresponding instruction encoding. The PRAM instruction set (LOAD, STORE, ADD, SUB, etc., as defined in \Cref{appendix:pram}) is hardcoded into the parameters of $W_{\mathrm{INSTR}}$ during training, enabling the model to perform instruction fetching through learned address-to-instruction mappings.






\section{Proof of \Cref{thm:main_mdm}}

\begin{theorem}[MDM Simulation of PRAM, Formal] \label{thm:main_mdm_formal}
    For any PRAM program $\mathcal P = (I_0,\dots,I_\ell,\mathrm{addr})$ (with finite number of instructions $\ell$, and is uniform for all processors and input size $n$), that on input $\mathbf x_{\text{val}} \in \mathbb U^n$ with corresponding address $\mathbf x_{\text{addr}} \in \mathcal A^n$ that runs in $T(n)$ parallel time using at most $P(n)$ processors and outputs $\text{PRAM}_{\mathcal P}(\mathbf x_{\text{addr}}, \mathbf x_{\text{val}}) \in \mathbb U$ per procedure described in \Cref{appendix:pram}, there exists a bijection $\phi: \mathbb U \cup \mathcal A \rightarrow \Sigma$ and a special token $\texttt{[SEP]} \in \Sigma$, and a MDM with constant depth and $\log(n)$ embedding size encoder-only Transformer, on input $\mathbf{x} = ((\mathbf{z}_{2i}, \mathbf{z}_{2i+1})_{i=0}^{n-1}, \texttt{[SEP]}) \in \Sigma^{2n+1}$ where $\mathbf{z}_{2i} = \phi(\mathbf x_{\text{addr},i})$ and $\mathbf{z}_{2i+1} = \phi(\mathbf x_{\text{val},i})$, padded to $\mathcal O(P(n) \times T(n))$ context length, outputs $\phi(\text{PRAM}_{\mathcal P}(\mathbf x_{\text{addr}}, \mathbf x_{\text{val}}))$ with $\mathcal O(T(n))$ decoding steps.
\end{theorem}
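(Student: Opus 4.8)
The plan is to invoke the equivalence between encoder-only Transformers and $\efasp$ (\Cref{thm:efasp_equiv_encoder}) and construct a single constant-depth $\efasp$ program that implements one synchronous PRAM round, then argue that iterating this program under confidence-based decoding reproduces the full PRAM execution in $\mathcal{O}(T(n))$ steps. First I would fix the layout: the input occupies the first $2n+1$ positions as interleaved address--value pairs followed by \texttt{[SEP]}, and the remaining $\mathcal{O}(P(n)\cdot T(n))$ padded positions are partitioned into $T(n)$ consecutive blocks of width $\mathcal{O}(P(n))$, with the $r$-th block reserved for the memory writes and updated processor states produced in round $r$. I would use the binary positional encoding $\bipe$ throughout, so that every position index and every memory address is a $\Theta(\log n)$-bit vector; this is exactly what makes the bitwise arithmetic operators and exact address matching expressible while keeping embedding size at $\mathcal{O}(\log n)$.

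Next I would encode the PRAM configuration inside the sequence. Memory is the running log of (address, value) pairs: the input pairs form the initial memory, and each round appends at most $P(n)$ new pairs. The contents of a cell are recovered by \texttt{rightmost\_exact\_match} on the address field, so later writes override earlier ones, matching the $\CREW$ commit semantics; since the current (still-masked) block is not yet a valid key, each read naturally draws from already-committed rounds, giving the correct start-of-round snapshot. Each processor's program counter and constant-size register file are carried in dedicated coordinates of its slot within each block. The per-round transition is then one $\efasp$ program that, in parallel across all processor slots of the next masked block, (i) determines the current round index $t$ via a sequence aggregation counting unmasked blocks; (ii) reads each processor's $\pc$ and registers from the previous block; (iii) fetches $I_{\pc}$ via $\mathrm{get\_instruction}$; (iv) decodes operands and branches on the opcode using nested $\texttt{ite}$; (v) executes, with memory reads performed by address-matching attention, arithmetic by $\texttt{bit\_add}$, $\texttt{bit\_minus}$, $\texttt{shift\_left}/\texttt{shift\_right}$, and control flow ($\mathtt{BRZ}$, $\mathtt{JMP}$, $\mathtt{HALT}$) setting the next $\pc$; and (vi) emits the resulting write and next $\pc$ into the block. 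Because each operator is constant-depth and the instruction set has fixed size, the whole fetch--decode--execute--commit cycle composes into a single constant-depth encoder Transformer.

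To drive the block-by-block unmasking I would engineer the unmask logits so that at every step only the $\mathcal{O}(P(n))$ positions of the next round's block carry a confident prediction (positive margin), while all later positions remain undetermined and are assigned a deliberately low-margin default. Under the confidence-based adaptive decoder this guarantees the next block is revealed within a constant number of steps, so the $T(n)$ rounds are simulated in $\mathcal{O}(T(n))$ decoding steps. Correctness follows by induction on rounds: assuming the unmasked prefix after $t$ steps faithfully encodes the PRAM configuration after $t$ rounds, the transition program reproduces round $t+1$; after $T(n)$ steps the designated output cell is read off by a final exact match and mapped through $\phi$. The resource bounds are then immediate: context length $\mathcal{O}(P(n)\cdot T(n))$, constant depth, and $\mathcal{O}(\log n)$ embedding width.

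The main obstacle I anticipate is realizing the complete cycle for all $P(n)$ processors within one constant-depth forward pass, and in particular the two-level register-indirect access: reading $M[R_s]$ requires first extracting register $s$'s value and then matching that value against the address field of the memory log, a composition of arithmetic extraction with exact-match attention that must be carried out simultaneously and correctly for every processor slot. Handling every opcode case through $\texttt{ite}$ selection while keeping depth constant, and ensuring that $\CREW$ write-exclusivity holds so that the rightmost-match reconstruction is well-defined, are the delicate points; everything else reduces to finite composition of the $\efasp$ operators already shown closed in \Cref{appendix:fasp}.
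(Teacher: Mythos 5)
Your proposal follows essentially the same route as the paper's proof: organize the padded context into per-round blocks of $\mathcal{O}(P(n))$ processor slots (the paper uses exactly $8$ tokens per processor: program counter, five registers, and a two-token STORE log, with rounds delimited by \texttt{[SEP]} and roles recovered from the distance to the nearest preceding \texttt{[SEP]}), treat memory as the append-only log of (address, value) pairs resolved by \texttt{rightmost\_exact\_match} so that $\CREW$ exclusivity makes the rightmost write well-defined, implement one fetch--decode--execute--commit cycle as a constant-depth $\efasp$ program via \texttt{get\_instruction}, nested \texttt{ite}, and the bitwise arithmetic operators, and steer the confidence-based decoder by emitting a zero/low-margin logit vector at all positions outside the current round's block. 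Your induction on rounds and the resulting resource bounds match the paper's argument.

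The one concrete deviation that would cause trouble is your choice of the binary positional encoding $\bipe$. The paper deliberately uses the integer variant $\efasp[\SEQ;[\cdot]_+,\times]$ for this theorem, precisely because the MDM context length $S_{\mathrm{MDM}}(n)=\mathcal{O}(P(n)\cdot T(n))$ is not bounded by any polynomial in $n$ (the theorem quantifies over arbitrary $T(n)$). With $\bipe$ the positional embedding needs $\lceil \log_2 S_{\mathrm{MDM}}(n)\rceil$ bits, which exceeds the claimed $\mathcal{O}(\log n)$ embedding width whenever $T(n)$ is superpolynomial; PRAM \emph{addresses} still fit in $\Theta(\log n)$ bits, but MDM \emph{positions} need not. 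Switching to $\SEQ$ for position indices (while keeping $\Theta(\log n)$-bit words for addresses and register contents) repairs this without otherwise changing your construction. A second, more minor omission: you should spell out how a masked position determines its round, processor ID, and slot index; the paper does this by computing the offset from the rightmost preceding \texttt{[SEP]} and extracting the high and low bits of that offset, which is what lets the whole per-round program be written uniformly across positions.
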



The proof demonstrates that AO-MDM can simulate any PRAM algorithm. The construction is based on $\efasp$, the programming language we developed, whose definable programs are equivalent to encoder-only Transformer function class (see \Cref{appendix:fasp}). We prove \Cref{thm:main_mdm} by: \textbf{(1)} defining the setup and input format for PRAM simulation; \textbf{(2)} constructing an $\efasp$ program that simulates PRAM execution in \Cref{alg:single-processor}.

\paragraph{Choice of Architecture / $\efasp$ Variant}
For this simulation, we use the $\efasp[\SEQ; [\cdot]_+, \times]$ variant with integer positional encoding rather than the binary variant. This choice is crucial because MDM's context length can be exponentially large (e.g., for NP-hard problems), while PRAM's actual memory usage remains polynomial. Using $\log n$ bits to represent positions would be insufficient when the context length $n$ itself grows exponentially with the problem size, even though PRAM addresses can still be represented in $\log S(n)$ bits. To avoid confusion between MDM's context length and PRAM's memory space, we use $S_{\text{MDM}}(n)$ to denote the maximum context length and reserve $S(n)$ for PRAM's memory space.

\paragraph{Input Format}
The input that encodes the PRAM's initial memory state is a sequence $\mathbf{x} = (x_1, \ldots, x_{2n+2}) \in \Sigma^{2n+2}$ of discrete tokens from the vocabulary $\Sigma$.

Let $w = \Theta(\log n)$ be the word width and recall from \Cref{appendix:pram} that the address width $a = \lceil \log_2 S(n) \rceil \leq w$ (addresses fit within words). The sequence has length $2n+2 = 1 + 2n + 1$ where:
\begin{align}
x_1 &\in \Sigma \quad \text{(processor count token)} \\
x_{2i} &\in \Sigma \quad \text{(address token)} \quad \text{for } i = 1, \ldots, n \\
x_{2i+1} &\in \Sigma \quad \text{(data token)} \quad \text{for } i = 1, \ldots, n \\
x_{2n+2} &= \texttt{[SEP]} \quad \text{(separator token)}
\end{align}

Through token embedding $\te: \Sigma \to \mathbb{R}^d$ and subsequent linear projections, these discrete tokens are mapped to their semantic bit representations:
\begin{align}
\te(x_1) &\mapsto P(n) \in \{0,1\}^w \\
\te(x_{2i}) &\mapsto \text{addr}_i \in \{0,1\}^w \quad \text{(address bits)} \\
\te(x_{2i+1}) &\mapsto \text{val}_i \in \{0,1\}^w \quad \text{(data bits)}
\end{align}
for $i = 1, \ldots, n$. The \texttt{[SEP]} token serves as a separator with special meaning in the computation trace, as detailed in the next subsection. Following the standard MDM notation from \Cref{sec:preliminary}, the actual input to the MDM is the padded sequence $\mathbf{x}_0 = (x_{0,1}, x_{0,2}, \ldots, x_{0,S_{\text{MDM}}(n)}) \in \bar{\Sigma}^{S_{\text{MDM}}(n)}$ where $\bar{\Sigma} = \Sigma \cup \{\mask\}$:
\begin{align}
x_{0,j} &= x_j \quad \text{for } j = 1, \ldots, 2n+2 \\
x_{0,j} &= \mask \quad \text{for } j = 2n+3, \ldots, S_{\text{MDM}}(n)
\end{align}

We aim to show that there exists an encoder-only Transformer that, given the initial memory state of a PRAM as input, can output the exact same result as the PRAM algorithm after $T_{\text{MDM}}(n) = \mathcal{O}(T(n))$ decoding steps. The Transformer will have constant depth and context length $\mathcal{O}(S_{\text{MDM}}(n))$, where $S_{\text{MDM}}(n)$ represents the MDM's context budget.

We next provide an overview of the construction:

\paragraph{Processor State and Computation Log Representation:} First, we state the representation of processor state and computation log as tokens. We represent each processor's state and one round of computation using a fixed number of tokens: program counter (1 token), register file (5 tokens), and computation log (2 tokens), for a total of 8 tokens per processor.

\emph{Program Counter (PC):} A single word encoding the current instruction address.

\emph{Register File:} We maintain exactly 5 registers, each storing one word. This provides sufficient computational capacity while keeping the representation tractable.

\emph{Computation Log:} This log is populated only when executing \texttt{STORE [s],r} instructions, recording the target address and stored value. For all other instructions, the log remains empty (represented by special tokens).

The computation trace for one parallel round can be represented as:
\begin{align}
\texttt{[SEP]} \; \langle \texttt{PC}_1, \texttt{R}_{1,1}, \ldots, \texttt{R}_{1,5}, \texttt{Addr}_1, \texttt{Val}_1 \rangle \; \langle \texttt{PC}_2, \texttt{R}_{2,1}, \ldots, \texttt{R}_{2,5}, \texttt{Addr}_2, \texttt{Val}_2 \rangle \; \ldots
\end{align}
where \texttt{[SEP]} serves as a separator token to distinguish different computation rounds and there are a total of $P(n)$ independent processors. Each processor $i \in \{1, \ldots, P(n)\}$ contributes an 8-tuple $\langle \texttt{PC}_i, \texttt{R}_{i,1}, \texttt{R}_{i,2}, \texttt{R}_{i,3}, \texttt{R}_{i,4}, \texttt{R}_{i,5}, \texttt{Addr}_i, \texttt{Val}_i \rangle$ representing its program counter, five register values, and memory write operation (address and value). The trace thus contains exactly $P(n)$ such 8-tuples per parallel round.

\paragraph{Processor Assignment and Role Identification.}
The algorithm begins by determining whether the current position contains a mask token (i.e. $\mathrm{is\_mask}$). If the position is a mask token (\textbf{Branch 1}), the algorithm continues by computing the distance to the nearest preceding \texttt{[SEP]} token. If the position is not a mask token (\textbf{Branch 2}), it indicates this position has already been unmasked (computation has already finished), and the algorithm returns the input token (or a all zero vector) which will not be unmasked per definition of MDM \Cref{sec:preliminary}).

To identify the processor ID, we find the rightmost \texttt{[SEP]} token to the left of the current position (i.e. $\mathrm{rightmost\_sep\_pos}$) and compute the distance between them (i.e. $\mathrm{distance\_to\_sep}$). If this distance $> 8 \times P(n)$ (\textbf{Branch 1.1}), the position does not participate in the current computation round as it is not a token that should be ``unmasked" in this round, in this case, the algorithm returns a special embedding (a all zero vector), which results in uniform distribution during prediction and smallest confidence, ensuring that the MDM will not select this position for unmasking; if the distance $= 8 \times P(n) + 1$ (\textbf{Branch 1.2}), this position return the embedding of \texttt{[SEP]}, preparing for the next computation round; otherwise if the distance $< 8 \times P(n) + 1$ (\textbf{Branch 1.3}), the position participates in the computation of the current round. 

For those positions participating in the current computation round. The corresponding processor ID (i.e. $\mathrm{processor\_id}$) is obtained by right-shifting $\mathrm{distance\_to\_sep}$ by 3 bits with zero-padding on the left (since each processor corresponds to exactly 8 tokens). The rightmost 3 bits represent the position within that processor (i.e. $\mathrm{inner\_processor\_id}$).

\paragraph{Initialization of Processor State.} 
We need to initialize the initial state of all processors at the beginning. This is determined by the current number of \texttt{[SEP]} tokens in the sequence. Specifically, when the current position is a mask token and there is exactly one \texttt{[SEP]} token (i.e. $\mathrm{seq\_sum}(\mathrm{is\_sep}) == 1$) (\textbf{Branch 3}), we consider this the initialization state. All program counters are set to 0, all registers are set to 0, and all memory locations are set to 0.

\paragraph{Fetch Instruction and Execution (Main Loop).} According to the processor ID and inner processor position, the algorithm fetches the instruction from the instruction memory (i.e. $\mathrm{get\_instruction}$), which is hard-coded into the parameters of the model, and execute it (using $\mathrm{execute}$). Different instructions yields different execution semantics, and sequently different $8$ token state. Finally, according to the $\mathrm{inner\_processor\_id}$, the algorithm chooses what to return. The algorithm terminates when the PC of all processors are HALT.

We now formally construct an $\efasp$ program that simulates PRAM execution (the single-processor algorithm detailed in \Cref{appendix:pram}). The semantics meanings and justifications of operators used in the program are summarized in \Cref{appendix:fasp}. The only two global operators are \texttt{seq\_sum} and \texttt{rightmost\_exact\_match} implementable by attention , otherwise are all local operators implementable by polylog-width constant-depth MLPs.

The context length of MDM used by this construction is $S_{\text{MDM}}(n) = \mathcal{O}(T_{\text{par}}(n) \times P(n))$, the decoding steps is $T_{\text{MDM}}(n) = \mathcal{O}(T_{\text{par}}(n))$, where $T_{\text{par}}(n)$ is the parallel time complexity and $P(n)$ is the processor count. For the constructed Transformer, embedding size is $\log(n)$ and depth is a constant.

{
\captionsetup[lstlisting]{labelformat=empty} 
\lstdefinestyle{customcode}{
    basicstyle=\small\ttfamily,
    mathescape=true,
    commentstyle=\color{green!50!black},
    language=python,
    keywordstyle=\color{blue},
    stringstyle=\color{red},
    frame=lines,
    framesep=5pt,
    tabsize=4,
    columns=fullflexible,
    breaklines=true,
    keepspaces=false,
    backgroundcolor=\color{white},
    showstringspaces=false,
    captionpos=b,
    emphstyle=[1]\color{orange},
    keywords={
        if, elif, else, equal, max, min, if_then_else, 
        not, and, or
    },
    emph={[1]get_token, get_move, get_symbol,seq_len},
    emphstyle=[2]\color{red},
    emph={[2]seq_sum,aha,rha,exact_match,seq_max,seq_min,rightmost_exact_match,seq_and,seq_or},
    literate=%
    {PE}{{{\color{orange}PE}}}{1}
    {TE}{{{\color{orange}TE}}}{1}
}

\begin{lstlisting}[style=customcode]
# ---------------------- Initialization ----------------------
is_sep   = (TE == embed([SEP]))
is_mask  = (TE == embed([MASK]))
is_init = (seq_sum(is_sep) == 1)

# Get the current and last [SEP] position
cur_sep  = rightmost_exact_match(1, is_sep, PE)      
dist_to_sep     = PE - cur_sep                               
pn = rightmost_exact_match(1, is_first, TE)
spanned_pn    = pn << 3                                   

# Skip positions not participating in computation
if (not is_mask) or (dist_to_sep > spanned_pn): return 0
if dist_to_sep == spanned_pn + 1:  return embed([SEP])
if is_init: return 0

# Initialization
pid        = (dist_to_sep - 1) >> 3     
inner_id   = (dist_to_sep - 1 )[:3]     

if is_init and inner_id == 1: return pid
if is_init and inner_id != 1: return 0

# ---------------------- Read previous round state ----------------------
prev_sep   = rightmost_exact_match(1, (is_sep and (PE < cur_sep)), PE)
prev_pid_base  = prev_sep + 1 + (pid << 3)

pos_PC     = prev_pid_base + 0
PC    = rightmost_exact_match(pos_PC, PE, TE)
pos_R1     = prev_pid_base + 1
R1    = rightmost_exact_match(pos_R1, PE, TE)
pos_R2     = prev_pid_base + 2
R2    = rightmost_exact_match(pos_R2, PE, TE)
pos_R3     = prev_pid_base + 3
R3    = rightmost_exact_match(pos_R3, PE, TE)
pos_R4     = prev_pid_base + 4
R4    = rightmost_exact_match(pos_R4, PE, TE)
pos_R5     = prev_pid_base + 5
R5    = rightmost_exact_match(pos_R5, PE, TE)

if PC == HALT_CODE: return 0

# -------------------- Fetch and execute instruction --------------------
# Decode
I_type, op_r, op_s, op_c, label_addr = get_instruction(PC)

# Source/destination register
Rs = (R1 if op_s == 1 else
        R2 if op_s == 2 else
        R3 if op_s == 3 else
        R4 if op_s == 4 else
        R5 if op_s == 5 else 0)

Rr = (R1 if op_r == 1 else
        R2 if op_r == 2 else
        R3 if op_r == 3 else
        R4 if op_r == 4 else
        R5 if op_r == 5 else 0)


# Default effect
PC_next    = PC + 1
WR_val     = Rr
writes_reg = False
ADDR_out   = 0      # Slot 6: only STORE overwrites address (a <= w, use word directly)
VAL_out    = 0      # Slot 7: only STORE overwrites value

#  Address read
if   I_type == embed([LOAD]):    
    ADDR_KEYS    = (TE if is_addr else 0)     
    ADDR_POSVAL  = (PE if is_addr else 0)    

    last_addr_pos_load = rightmost_exact_match(Rs, ADDR_KEYS, ADDR_POSVAL)
    load_val           = rightmost_exact_match(last_addr_pos_load + 1, PE, TE)
    WR_val = load_val

# Per-instruction semantics
elif I_type == embed([STORE]):   ADDR_out, VAL_out = Rs, Rr
elif I_type == embed([LOADI]):   WR_val = op_c
elif I_type == embed([ADD]):     WR_val = (Rr + Rs)
elif I_type == embed([SUB]):     WR_val = (Rr - Rs)
elif I_type == embed([AND]):     WR_val = (Rr & Rs)
elif I_type == embed([XOR]):     WR_val = (Rr ^ Rs)
elif I_type == embed([SHL]):     WR_val = (Rr << Rs)
elif I_type == embed([SHR]):     WR_val = (Rr >> Rs)
elif I_type == embed([BRZ]) and Rr == 0: PC_next = label_addr
elif I_type == embed([JMP]):     PC_next = label_addr
elif I_type == embed([HALT]):    PC_next = HALT_CODE

# Register writeback: only for {LOAD, LOADI, ADD, SUB, AND, XOR, SHL, SHR}
writes_reg = (I_type == embed([LOAD]))  or (I_type == embed([LOADI])) or \
                (I_type == embed([ADD]))   or (I_type == embed([SUB]))   or \
                (I_type == embed([AND]))   or (I_type == embed([XOR]))   or \
                (I_type == embed([SHL]))   or (I_type == embed([SHR]))

R1_next = (WR_val if (writes_reg and op_r == 1) else R1)
R2_next = (WR_val if (writes_reg and op_r == 2) else R2)
R3_next = (WR_val if (writes_reg and op_r == 3) else R3)
R4_next = (WR_val if (writes_reg and op_r == 4) else R4)
R5_next = (WR_val if (writes_reg and op_r == 5) else R5)

# ---------- Return one of 8 slots according to inner_id ----------
if   inner_id == 0:  return PC_next
elif inner_id == 1:  return R1_next
elif inner_id == 2:  return R2_next
elif inner_id == 3:  return R3_next
elif inner_id == 4:  return R4_next
elif inner_id == 5:  return R5_next
elif inner_id == 6:  return ADDR_out
else:                return VAL_out
\end{lstlisting}
}

\section{Proof of \Cref{thm:main_constrained}} \label{appendix:proof_main_constrained}

The result stems from MDM's total amount of computation being bounded by $S(n)$ in both total steps ($T(n) \leq S(n)$) and per-step capacity (polynomial embedding size), preventing it from solving problems requiring greater computational resources.

Fix an encoder-only MDM with context length $S(n)$ and $T(n)$ decoding steps. Throughout we assume constant depth/heads and log-precision arithmetic with hidden width $d=\Theta(\log(S(n)+T(n)))$ (binary positional code), as in our setup. Particularly, at each decoding step the model re-encodes a length-$S(n)$ sequence. A single forward pass is dominated by self-attention: for each position $i$ we form a query in $\mathbb{R}^d$ and take dot products with all $S(n)$ keys, then take the value-weighted sum. Counting FLOPs, one attention head costs
    \begin{align}
    \Theta\big(S(n)\cdot S(n)\cdot d\big)=\Theta\!\big(S(n)^2\log(S(n)+T(n))\big),
    \end{align}
    and the multi-head/multi-layer constants only change the leading constant. The position-wise MLP adds $\Theta(S(n)\cdot \operatorname{poly}(d))=\Theta\!\big(S(n)\operatorname{polylog}(S(n)+T(n))\big)$ FLOPs and is lower order when $S(n)\gg d$. Thus one decoding step costs
    \begin{align}
    \widetilde{\mathcal{O}}\big(S(n)^2\big)\quad\text{FLOPs,}
    \end{align}
    where $\widetilde{\mathcal{O}}(\cdot)$ suppresses polylog factors in $S(n)+T(n)$. Over $T(n)$ steps the total compute is $\widetilde{\mathcal{O}}\!\big(S(n)^2\,T(n)\big)$. In particular, when each step reveals at least one token (or a constant number), we have $T(n)\le S(n)$, yielding the unified cubic bound $\widetilde{\mathcal{O}}(S(n)^3)$. Hence any problem that needs $\omega\!\big(S(n)^3\big)$ serial time cannot be solved by MDM in the $(S(n),T(n))$ regime stated.

\section{Proof of \Cref{thm:any_order}} \label{appendix:proof_any_order}

Recall \Cref{def:masked_arm}, we defined \textbf{Masked-ARM} as an autoregressive model with encoder-only Transformer architecture that pads the input sequence with mask tokens to the maximum context length, which is also equivalent to a MDM with a fixed order (left-to-right) generation and generating one token at a time. Consider an AO-MDM with input format $\mathbf{x} = (x_1, \ldots, x_n)$ followed by a special separator token \texttt{[SEP]} at position $n+1$. 

\textbf{AO-MDM Intermediate State:} At any intermediate generation step, the AO-MDM state can be represented as $\mathbf{z} = (z_1, \ldots, z_{S(n)}) \in \bar{\Sigma}^{S(n)}$ where $\bar{\Sigma} = \Sigma \cup \{\mask\}$. The sequence structure is:
\begin{align}
z_j &= x_j \quad \text{for } j \in [n] \quad \text{(fixed input portion)} \\
z_{n+1} &= \texttt{[SEP]} \quad \text{(separator)} \\
z_j &\in \Sigma \cup \{\mask\} \quad \text{for } j \in \{n+2, \ldots, S(n)\} \quad \text{(generation region)}
\end{align}

Let $\mathcal{D} = (d_1, d_2, \ldots, d_k)$ denote the sequence of positions that have been decoded (unmasked) by the AO-MDM in chronological order, where $d_i \in \{n+2, \ldots, S(n)\}$ and $z_{d_i} \neq \mask$ for all $i \in [k]$. The ordering reflects the temporal sequence in which the AO-MDM performed the unmasking operations.

\begin{definition}[Position/Content Tokens and Address Encoding] \label{def:addr_tok}
Let $\Sigma$ be the base vocabulary. We reserve a subset $\Sigma_{\mathrm{pos}} \subseteq \Sigma$ for position tokens and define a bijection $\mathrm{encode}: \{1,\ldots,S(n)\} \to \Sigma_{\mathrm{pos}}$ with inverse $\mathrm{dec\_pos}: \Sigma_{\mathrm{pos}} \to \{1,\ldots,S(n)\}$. For each decoded position $d_i$, define
\[
\texttt{addr}_{d_i} := \mathrm{encode}(d_i) \in \Sigma_{\mathrm{pos}} \subseteq \Sigma, \qquad \texttt{tok}_{d_i} := z_{d_i} \in \Sigma.
\]
Thus both address tokens and content tokens are drawn from the original vocabulary $\Sigma$. We also reserve a subset $\Sigma_{\mathrm{op}} \subseteq \Sigma$ for operator tokens used later for AP-MDM edits (\Cref{sec:any_process}).
\end{definition}

\textbf{Masked-ARM Simulation:} For each decoded token at position $d_i \in \mathcal{D}$, the Masked-ARM represents it using a 2-tuple:
\begin{align}
\langle \texttt{addr}_{d_i}, \texttt{tok}_{d_i} \rangle = \langle \text{encode}(d_i), z_{d_i} \rangle
\end{align}
where $\text{encode}(d_i)$ is a token representation of the positional index $d_i$, and $z_{d_i}$ is the actual decoded token. The target Masked-ARM sequence to be constructed is:
\begin{align}
\mathbf{y}_{\text{ARM}} = (x_1, \ldots, x_n, \texttt{[SEP]}, \texttt{addr}_{d_1}, \texttt{tok}_{d_1}, \ldots, \texttt{addr}_{d_k}, \texttt{tok}_{d_k}, \underbrace{\mask, \ldots, \mask}_{\text{remaining positions}})
\end{align}
where the sequence $\mathcal{D} = (d_1, d_2, \ldots, d_k)$ preserves the chronological order of AO-MDM's decoding operations. The Masked-ARM sequence has total length $2S(n) - n - 1 = \mathcal{O}(S(n))$, since each AO-MDM token requires two tokens (address and content) in the Masked-ARM representation.

\textbf{Induction:} To prove We prove by induction that for any AO-MDM, there exists a corresponding Masked-ARM that can simulate the AO-MDM's generation process step by step for arbitrary input sequences.

\begin{theorem}[AO-MDM Simulation by Masked-ARM]\label{lemma:aomdm_simulation}
For any AO-MDM, there exists a corresponding Masked-ARM such that: for any input sequence $\mathbf{x} = (x_1, \ldots, x_n)$ and any intermediate state of the AO-MDM with decoded sequence $\mathcal{D} = (d_1, d_2, \ldots, d_k)$, the Masked-ARM, starting from the corresponding intermediate state $\mathbf{y}_{\text{ARM}}$, can generate the next address-token pair $\langle \texttt{addr}_{d_{k+1}}, \texttt{tok}_{d_{k+1}} \rangle$ such that:
\begin{align}
\text{encode}(d_{k+1}) &= \texttt{addr}_{d_{k+1}} \quad \text{(address matches AO-MDM's next decode position)} \\
z_{d_{k+1}} &= \texttt{tok}_{d_{k+1}} \quad \text{(token matches AO-MDM's next decode content)}
\end{align}
where $d_{k+1}$ is the position that AO-MDM will decode next, and $z_{d_{k+1}}$ is the token that AO-MDM will generate at that position.
\end{theorem}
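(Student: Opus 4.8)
The plan is to establish \Cref{lemma:aomdm_simulation} by induction on the number $k$ of positions the AO-MDM has already decoded, building the Masked-ARM's one-step transition as an $\efasp$ program and then invoking the equivalence between $\efasp$ and constant-depth encoder Transformers (\Cref{thm:efasp_equiv_encoder}). The inductive invariant is that the Masked-ARM state
\[
\mathbf{y}_{\text{ARM}}=(x_1,\ldots,x_n,\texttt{[SEP]},\texttt{addr}_{d_1},\texttt{tok}_{d_1},\ldots,\texttt{addr}_{d_k},\texttt{tok}_{d_k},\mask,\ldots,\mask)
\]
records the AO-MDM decoding history $\mathcal{D}=(d_1,\ldots,d_k)$ together with the unmasked values, in chronological order. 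The base case $k=0$ is immediate, since the prompt followed by \texttt{[SEP]} and all masks encodes the empty history. For the inductive step I would show that one constant-depth encoder pass, evaluated at the next free output slot, emits exactly $\mathrm{encode}(d_{k+1})$ on an address slot and $z_{d_{k+1}}$ on the following content slot, where $(d_{k+1},z_{d_{k+1}})$ is the AO-MDM's (one-token-per-step) next decode.

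The core is a three-stage $\efasp$ pipeline mapping the recorded history to the next decode. \textbf{Stage 1 (reconstruction).} Using the $\mathcal{O}(S(n))$ physical positions available to the Masked-ARM to host a \emph{virtual copy} of the AO-MDM's length-$S(n)$ sequence, I would define a position-indexed function returning $z^{(k)}_p$ at each virtual position $p$: a \texttt{rightmost\_exact\_match} attention queries the stored address tokens $\texttt{addr}_{d_i}$ against $\mathrm{encode}(p)$ and, on a match, fetches the adjacent content token $\texttt{tok}_{d_i}$; positions $p\le n$ read the prompt symbol $x_p$, and every other virtual position defaults to $\mask$. By the inductive invariant this reconstruction is exact. \textbf{Stage 2 (AO-MDM simulation).} Because the AO-MDM is itself a constant-depth encoder Transformer, \Cref{thm:efasp_equiv_encoder} lets me embed its entire forward pass on the reconstructed virtual sequence inside $\efasp$ at the cost of a constant number of additional layers, yielding the AO-MDM's per-position token logits and hence its decoding confidences. \textbf{Stage 3 (selection).} A single global aggregation of \texttt{seq\_max} type over the masked virtual positions returns the highest-confidence position $d_{k+1}$ and the argmax token $z_{d_{k+1}}$ the AO-MDM would place there.

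It remains to route one symbol to the single left-to-right output position. I would compute the current slot's distance to \texttt{[SEP]} with the standard positional operators of $\efasp[\SEQ;[\cdot]_+,\times]$ and branch on its parity: an address slot emits $\mathrm{encode}(d_{k+1})$ and the next (content) slot emits $z_{d_{k+1}}$, the latter equivalently obtained by reading the address just written to its immediate left and re-running Stages 1 and 2. Chaining the inductive step across all $k$ shows the Masked-ARM reproduces the AO-MDM generation process step by step with length $\mathcal{O}(S(n))$ and only constant extra depth, as claimed.

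The hard part will be the coupling of Stages 1 and 2: I must ensure the virtual sequence hosted across the Masked-ARM's positions agrees with the AO-MDM's native state not only at the input layer but at \emph{every} layer of the simulated forward pass, so that the reconstructed and genuine attention patterns coincide; this reduces to verifying exact layer-$0$ reconstruction and feeding each virtual position the AO-MDM's own positional embedding. A secondary obstacle is realizing the Stage 3 confidence selection as an exact argmax under average-hard attention, whose ties $\aha$ averages uniformly; I would resolve this by assuming, as elsewhere in the paper, a unique confidence maximizer, or by fixing a deterministic positional tie-break. Finally, since the claim concerns only the decode \emph{decisions} and not the AO-MDM's literal terminal string, a constant-overhead post-processing pass can realign the address/content transcript into the AO-MDM's final layout without affecting the argument.
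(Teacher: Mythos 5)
Your proposal is correct and follows essentially the same route as the paper: an induction on the decoding history in which each Masked-ARM step (i) reconstructs the AO-MDM's virtual sequence from the stored address--token pairs via attention-based lookup (the paper's ``replicating $\texttt{mdm\_embed}$'' step), (ii) reruns the AO-MDM's own constant-depth forward pass on that reconstruction, (iii) selects the next decode position by a global confidence argmax, and (iv) branches on slot parity to emit the address or the content token. The caveats you flag (exact layer-$0$ reconstruction with the AO-MDM's native positional embeddings, and determinism of the confidence score) are precisely the assumptions the paper states at the end of its proof.
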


\begin{proof}
To prove this result, we decompose the architecture of the AO-MDM into two parts: the input transformation part (which can be represented as an operator $\texttt{mdm\_embed}$) that transforms the token and position into an embedding, and the output generation part (which can be represented as an operator $\texttt{mdm\_decode}$) that transforms the embedding into logits, that is:
\begin{equation}
\text{AO-MDM}(\mathbf{x}) = \texttt{mdm\_decode}(\texttt{mdm\_embed}(\bar{\te}, \bar{\pe}))(\mathbf{x})
\end{equation}
where $\bar{\te}$ and $\bar{\pe}$ are seq-to-seq functions defined in \Cref{def:seq-ext}.

This decomposition is invariant to the choice of token and position embedding functions and AO-MDM's parameter configuration. Simulating AO-MDM's generation process boils down to the following two steps:

\textbf{Step 1: Replicating \texttt{mdm\_embed}.} We construct initial layers of the Masked-ARM that, given the Masked-ARM state $\mathbf{y}_{\text{ARM}}$, produce intermediate embeddings identical to $\texttt{mdm\_embed}(\mathbf{x})$ where $\mathbf{x}$ is the corresponding AO-MDM state. This transformation converts the address-token pair representation back into the embedding format that the AO-MDM expects, enabling the subsequent layers to perform identical computations. 
We write the $\efasp$ programs (which corresponds to the encoder Transformer construction) for the construction:

{
\captionsetup[lstlisting]{labelformat=empty} 
\lstdefinestyle{customcode}{
    basicstyle=\small\ttfamily,
    mathescape=true,
    commentstyle=\color{green!50!black},
    language=python,
    keywordstyle=\color{blue},
    stringstyle=\color{red},
    frame=lines,
    framesep=5pt,
    tabsize=4,
    columns=fullflexible,
    breaklines=true,
    keepspaces=false,
    backgroundcolor=\color{white},
    showstringspaces=false,
    captionpos=b,
    emphstyle=[1]\color{orange},
    keywords={
        if, elif, else, equal, max, min, if_then_else, 
        not, and, or
    },
    emph={[1]get_token, get_move, get_symbol,seq_len},
    emphstyle=[2]\color{red},
    emph={[2]seq_sum,aha,rha,exact_match,seq_max,seq_min,rightmost_exact_match,seq_and,seq_or},
    literate=%
    {PE}{{{\color{orange}PE}}}{1}
    {TE}{{{\color{orange}TE}}}{1}
}

\begin{lstlisting}[style=customcode]
# ---------------------- Get logits identical to AO-MDM ----------------------
mdm_logits = mdm_decode(embed_MDM)
tok_scores = score(mdm_logits)

# AO-MDM candidate set: positions > [SEP], still [MASK], and within valid range
cand_mask = (PE > sep_pos) and (TE_MDM == embed([MASK])) and (PE <= sn)
cand_score = (tok_scores if cand_mask else 0)

max_score  = seq_max(cand_score)
is_best    = cand_mask and (tok_scores == max_score)

# Select AO-MDM's next decode position and corresponding logits
next_pos   = rightmost_exact_match(1, is_best, PE)
logits_next = rightmost_exact_match(next_pos, PE, mdm_logits)

# ---------------------- Emit as Masked-ARM <addr, tok> order ----------------------
gen_slot = rightmost_exact_match(1,
                                    (PE > sep_pos) and (PE <= sn) and (TE == embed([MASK])),
                                    PE)
emit_addr = (((gen_slot - sep_pos)[:1]) == 0)

if PE == gen_slot:
    result = (next_pos if emit_addr else logits_next)
else:
    result = 0

return result
\end{lstlisting}
}

This completes the proof.
\end{proof}

We remark the proof relies on two assumptions: 1) the function $\bar{S}(\mathbf x) = S(|\mathbf x|)$ is deterministic and computable by encoder Transformer (this is implemented by the $\texttt{sn}$ function in the $\efasp$ program for Step 1); 2) the confidence score is also dertermistic and computable by encoder Transformer (this is implemented by the $\texttt{score}$ operator in the $\efasp$ program for Step 2).

\section{Proof of \Cref{thm:main_apmdm}} \label{sec:main_apmdm_proof}

\begin{theorem}[AP-MDM Simulation of PRAM, Formal] \label{thm:main_apmdm_formal}
Let $\mathcal P = (I_0,\dots,I_\ell,\mathrm{addr})$ be a uniform PRAM program with a finite instruction set of size $\ell$, identical across processors and input size $n$. On an initial memory state specified by address–value pairs $(\mathbf x_{\mathrm{addr}}, \mathbf x_{\mathrm{val}})$ with $\mathbf x_{\mathrm{val}} \in \mathbb U^n$ and $\mathbf x_{\mathrm{addr}} \in \mathcal A^n$, suppose $\mathcal P$ runs in parallel time $T(n)$ using at most $P(n)$ processors and at most $S(n)$ shared-memory words of $\Theta(\log n)$ bits, and outputs $\mathrm{PRAM}_{\mathcal P}(\mathbf x_{\mathrm{addr}}, \mathbf x_{\mathrm{val}}) \in \mathbb U$ (see \Cref{appendix:pram}). Then there exists a bijection $\phi: \mathbb U \cup \mathcal A \rightarrow \Sigma$ and an AP-MDM which, on input
\[
\mathbf{x} = (z_0, z_1, \ldots, z_n) \in \Sigma^{n+1}, \quad z_0 = \phi(P(n)),\; z_i = \phi(\mathbf x_{\mathrm{val},i}) \text{ for } i=1,\ldots,n,
\]
padded to context length $\mathcal O(S(n))$ (addresses provided implicitly by positional encodings), produces $\phi\!\left(\mathrm{PRAM}_{\mathcal P}(\mathbf x_{\mathrm{addr}}, \mathbf x_{\mathrm{val}})\right)$ in $\mathcal O(T(n))$ decoding steps.
\end{theorem}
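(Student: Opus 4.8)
The plan is to reuse the per-round execution logic of \Cref{thm:main_mdm_formal} while replacing its append-only token log with a fixed-size working tape that is updated \emph{in place} using the three new operations, so that the context never exceeds the PRAM's actual footprint. First I would lay out a working tape with two regions: a \emph{memory region} of $S(n)$ positions, in which the shared word at address $a$ permanently occupies position $a$ with the address read off the positional encoding rather than stored as a token (this alone removes the address tokens of \Cref{thm:main_mdm_formal}), and a \emph{processor region} of $\mathcal{O}(P(n))$ positions holding, per processor, its program counter, the constant-size register file, and two scratch slots for a pending $(\mathrm{addr},\mathrm{val})$ write. Under the standard convention that memory dominates processor count, $P(n)=\mathcal{O}(S(n))$, so the total is $\mathcal{O}(S(n))$ (otherwise the bound is $\mathcal{O}(S(n)+P(n))$). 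The input is embedded as in the theorem statement, with positions $n+1,\dots,S(n)$ padded by $\mask$ and read as zeros.

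Next I would observe that the map taking one synchronous PRAM round to the next --- instruction fetch from the hardcoded table via the PC, operand reads, the ALU step realized by the bitwise \efasp{} operators of \Cref{appendix:fasp}, a \texttt{LOAD} realized by hard attention from the processor into the memory region keyed by the requested address, and a \texttt{STORE} that deposits $(\mathrm{addr},\mathrm{val})$ into that processor's scratch slots --- is exactly the single-processor step already shown \efasp{}-definable in \Cref{thm:main_mdm_formal}. By \Cref{thm:efasp_equiv_encoder} this next-state function is computed by a single constant-depth, $\log n$-width encoder, and the same forward pass can additionally emit the three control bits $\ctrl[1],\ctrl[2],\ctrl[3]$ together with the unmask logits, each gated by an \efasp{}-definable predicate identifying the role of the current position.

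The heart of the argument --- and the departure from \Cref{thm:main_mdm_formal} --- is to realize this transition without appending history, using a constant number of decoding sub-steps per round so that the same tape is reused across all $T(n)$ rounds. I would update the processor region by a double buffer: \inser{} a fresh block of $\mask$s after the live block, \unmask{} it to the newly computed processor state (attending to the still-present old block and to memory), and then \delete{} the old block by remasking and removing its positions; since all insertions and deletions stay inside the processor region, memory positions, hence addresses, never shift. I would then commit memory by \remask{}ing exactly the cells targeted this round and, in the following sub-step, \unmask{}ing each such cell by attending to the unique processor whose scratch address matches --- uniqueness being guaranteed by the CREW exclusive-write discipline of \Cref{appendix:pram}, which is precisely what makes the write target well-defined. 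The sub-steps are ordered so that whenever a position is remasked the information needed to refill it still lives elsewhere in context (store values in the freshly unmasked processor block; old register values available while the old block persists), and cells not written this round are preserved automatically since $\remask$ with $\ctrl[1]=0$ leaves a non-$\mask$ token intact. This read-after-remask scheduling, together with keeping the memory region positionally stable, is the main obstacle.

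Finally I would close by induction on the round index $k$, with invariant that after the sub-steps of round $k$ the memory region equals the PRAM memory after $k$ rounds and each processor block equals the corresponding processor state; the base case is the initialization above and the inductive step is the faithful sub-step simulation just described. Generation stops when all program counters equal the HALT code --- a global \efasp{}-testable condition --- at which point the designated output cell is read and mapped through $\phi$. The accounting is then immediate: a constant number of sub-steps per round yields $\mathcal{O}(T(n))$ decoding steps, and since the double buffer only transiently doubles the processor region, the tape stays within $\mathcal{O}(S(n)+P(n))=\mathcal{O}(S(n))$ positions throughout, giving the claimed simultaneously optimal parallel time and space.
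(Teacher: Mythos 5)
Your proposal is correct and reaches the same destination by the same high-level route as the paper: addresses live in the positional encoding so the memory region occupies exactly $S(n)$ positions, a separate $\mathcal{O}(P(n))$-token processor region holds the PC and a constant-size register file, each synchronous round is realized by a constant number of decoding sub-steps whose next-state function is the \efasp{}-definable single-processor step, and correctness follows by induction on rounds with termination when all PCs equal \texttt{HALT}. The one structural difference is how the in-place update is factored. The paper first proves a standalone lemma that AP-MDM simulates a ``Rewrite-MDM'' (a model that may overwrite any position with a fresh token) in exactly three steps --- \inser{} one $\mask$ after \emph{every} position, \unmask{} the new interleaved positions to the next state while \remask{}ing the old ones, then \delete{} the leftover masks --- and then writes the entire PRAM simulation as a single Rewrite-MDM program in which each position (memory cell or processor slot) emits a value together with a rewrite bit. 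You instead hand-roll the scheduling: a double buffer for the processor region via \inser{}/\delete{}, and a \remask{}-then-\unmask{} commit for the targeted memory cells, with explicit care about read-after-remask ordering. Both work; the paper's factorization buys modularity (the rewrite lemma handles all positions uniformly, so no case analysis on which region is being updated), while yours avoids the global interleaving and keeps the memory region untouched except at written cells. One imprecision to fix: the defined \inser{} operation places one $\mask$ \emph{after each flagged position}, so you cannot materialize a contiguous fresh block ``after the live block'' in a single step (growing such a block by repeated spawning would cost $\Theta(\log P(n))$ extra steps per round); the remedy is exactly the paper's interleaved layout $(\mathrm{old}_1,\mask,\mathrm{old}_2,\mask,\ldots)$, which your parity-of-position bookkeeping already accommodates and which preserves the $\mathcal{O}(T(n))$ step count.
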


We first show that AP-MDM can simulate a weaker model called {Rewrite-MDM}, which is sufficient for the result. Then we construct an $\efasp$ program that simulates PRAM execution in a space-efficient manner.

\textbf{Rewrite-MDM} follows the same framework as AP-MDM but with simplified control signals. For any token $y \in \Sigma$, define:
\begin{align}
\remask_{x_{t,i}, \ctrl_{t,i}}(y) = \begin{cases}
y & \text{if } \ctrl_{t,i}[1] = 1 \\
x_{t,i} & \text{if } \ctrl_{t,i}[1] = 0
\end{cases}
\end{align}
where $\ctrl_{t,i}[1] \in \{0,1\}$ is a binary rewrite signal. In other words, when $\ctrl_{t,i}[1] = 1$, the model rewrites position $i$ with new content $y$; when $\ctrl_{t,i}[1] = 0$, it preserves the original content $x_{t,i}$ unchanged.

We next show how each transition $\mathbf{z}_t \rightarrow \mathbf{z}_{t+1}$ in Rewrite-MDM can be simulated by exactly three steps of AP-MDM as defined in \Cref{sec:any_process}.

\begin{lemma}[AP-MDM Simulation of Rewrite-MDM] \label{lemma:apmdm_simulates_rewrite}
For any Rewrite-MDM transition $\mathbf{z}_t \rightarrow \mathbf{z}_{t+1}$ on sequence of length $n$, there exists a sequence of three AP-MDM steps that produces the identical result.
\end{lemma}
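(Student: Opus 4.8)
The plan is to exhibit, for a single Rewrite-MDM transition $\mathbf{z}_t \to \mathbf{z}_{t+1}$, an explicit three-step schedule of the AP-MDM operations $\remask$, $\inser$, $\delete$ (together with standard unmasking) whose composition reproduces the transition exactly, and then to argue that the per-step control signals are computable by the single shared AP-MDM model. Let $R \subseteq [|\mathbf{z}_t|]$ be the set of positions the Rewrite-MDM overwrites (rewrite bit $\ctrl_{t,i}[1]=1$), with targets $y_{t,i}$. The three steps are: \textbf{(I) Insert:} at every $i \in R$ set $\ctrl[2]=1$, placing a fresh $\mask$ immediately after position $i$; \textbf{(II) Write-and-remask:} in one step unmask each inserted $\mask$ to its target $y_{t,i}$ (remask bit $0$ on that mask) while simultaneously remasking the original position $i$ (remask bit $1$); \textbf{(III) Delete:} set $\ctrl[3]=1$ on each now-masked original so it is removed. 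Since step (I) adds $|R|$ masks and step (III) removes $|R|$ masks, the length is restored and the surviving tokens are exactly $\mathbf{z}_{t+1}$.

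First I would verify value-correctness, which is where the insert-before-remask ordering is essential. Because $\inser$ only appends mask tokens and leaves every existing token untouched, the state after step (I) contains a faithful copy of $\mathbf{z}_t$; in particular the original token at position $i$ is still present (as the left neighbor of its fresh mask) when step (II) runs. Hence the AP-MDM model can, on this intermediate state, reconstruct $\mathbf{z}_t$ by ignoring the inserted masks and recompute exactly the Rewrite-MDM prediction $y_{t,i}$, even when $y_{t,i}$ depends on the overwritten value. I would then check each step position-by-position through the composition $\inser \circ \delete \circ \remask$: e.g.\ in step (II) an original position carries bits $(1,0,0)$ and maps $x_{t,i}\mapsto\mask$, while its fresh mask carries $(0,0,0)$ and maps $\mask \mapsto y_{t,i}$; untouched content positions carry $(0,0,0)$ and are preserved. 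One subtlety to record is the asymmetry of $\remask$ on masks: a mask carried with bit $\ctrl[1]=0$ is \emph{unmasked}, so any position that must stay masked (the Rewrite-MDM ``keep'' case with $x_{t,i}=\mask$) must instead be given remask bit $1$; this is a purely mechanical relabeling of controls.

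The hard part will be phase detection: since AP-MDM applies one fixed model $f_\theta$ repeatedly, $f_\theta$ must infer which of the three sub-steps it is in from the current sequence alone, and the two intermediate states are not locally distinguishable (both show $\mask$'s flanked by content, and the inserted pattern $[\,\text{content},\mask\,]$ cannot be told from the remasked pattern $[\,\mask,\text{content}\,]$ by token values). To resolve this I would maintain a constant-size phase register in a reserved scratch block whose mask-count modulo $3$ encodes the current phase; the model reads it by a single \texttt{seq\_sum}/attention aggregation to select the phase-appropriate controls, and advances it with one $\inser$ or $\delete$ in the scratch block during the same step (carrying scratch masks forward via remask bit $1$ to avoid accidental unmasking). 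All required predicates — locating the sites in $R$, reading the phase, and evaluating $y_{t,i}$ — are Boolean, arithmetic, and aggregation operations, hence expressible in $\efasp$ and realizable by a constant-depth encoder Transformer via \Cref{thm:efasp_equiv_encoder}.

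Finally, I would tally the resources: each Rewrite-MDM step costs three AP-MDM steps, $\mathcal{O}(1)$ extra scratch space for the phase register, and constant depth, which is exactly what the enclosing PRAM simulation (\Cref{thm:main_apmdm_formal}) requires. I expect the only genuinely fiddly points to be keeping the phase register consistent with the per-position control semantics and handling adjacent rewrite sites in $R$, both of which become routine once the phase is known.
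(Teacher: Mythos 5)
Your three-step skeleton (insert, then unmask-the-new-slot while remasking the original, then delete the masked originals) is exactly the decomposition the paper uses, and your observations about the asymmetry of $\remask$ on kept masks and about needing a phase signal are both on target. The paper resolves the phase question with boundary tokens ($\texttt{[BOS]}$/$\texttt{[EOS]}$ adjacency to a mask) rather than your mask-count-mod-3 scratch register, but either device is workable.

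The genuine gap is in your Step (II), and it is caused by your decision to insert only at the rewrite sites $R$ rather than after \emph{every} position. In the intermediate state the model must tell a freshly inserted $\mask$ apart from a $\mask$ that was already present in $\mathbf{z}_t$ (and such masks do occur in the intended application: the PRAM simulation keeps a masked padding/memory region, and Rewrite-MDM's ``keep'' case explicitly covers $x_{t,i}=\mask$). Your global phase register tells the model \emph{which} phase it is in but not \emph{which} masks are which. Concretely, take $\mathbf{z}_t=(a,\mask,b)$ with $R=\{1\}$ and target $y$ for position $1$: after Step (I) the state is $(a,\mask,\mask,b)$, and the two masks are indistinguishable by token values; the two readings lead, after Steps (II)--(III), to either $(y,\mask,b)$ (correct) or $(\mask,y,b)$ (wrong order). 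Positional encodings do not rescue you, because after a \emph{selective} insertion the new index of a token no longer determines whether it is an original or an inserted slot. This is precisely why the paper inserts a mask after every one of the $n$ positions: the expanded sequence has length exactly $2n$, and the parity of the index (the first bit of the binary positional code) cleanly labels original versus inserted slots, making Step (II)'s controls well defined. Your argument goes through once you either adopt uniform insertion or tag the inserted slots with a distinguishable second mask symbol (as the paper does elsewhere with $\mask_1,\mask_2$); as written, the disambiguation step is missing.
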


\begin{proof}
Given a Rewrite-MDM transition where we want to selectively rewrite positions in sequence $\mathbf{z}_t = (z_{t,1}, z_{t,2}, \ldots, z_{t,n})$ according to rewrite signal $\mathbf{r}_t = (r_{t,1}, r_{t,2}, \ldots, r_{t,n})$, we simulate this using the following three AP-MDM steps: $\mathbf{z}_t \rightarrow \mathbf{u}^{(1)} \rightarrow \mathbf{u}^{(2)} \rightarrow \mathbf{u}^{(3)} = \mathbf{z}_{t+1}$.

\textbf{Step 1 (Insert):} Starting from $\mathbf{z}_t$, apply $\inser$ operation at every position $i \in [n]$ to create an expanded sequence of length $2n$:
\begin{align}
\mathbf{u}^{(1)} = (g \circ f_\theta)(\mathbf{z}_t)
\end{align}
where $\ctrl^{(1)}_{i}[1] = 0$ (no remask), $\ctrl^{(1)}_{i}[2] = 1$ (insert), $\ctrl^{(1)}_{i}[3] = 0$ (no delete) for all $i \in [n]$.
This yields $\mathbf{u}^{(1)} = (z_{t,1}, \mask, z_{t,2}, \mask, \ldots, z_{t,n}, \mask)$.

\textbf{Step 2 (Unmask and Remask):} Apply AP-MDM's $(g \circ f_\theta)$ operation on $\mathbf{u}^{(1)}$ with control signals:
\begin{align}
\mathbf{u}^{(2)} = (g \circ f_\theta)(\mathbf{u}^{(1)})
\end{align}
where the control signals are set as follows:
\begin{itemize}
\item For even positions $2i$ (newly inserted masks): $\ctrl^{(2)}_{2i}[1] = 0$ (unmask), $\ctrl^{(2)}_{2i}[2] = 0$ (no insert), $\ctrl^{(2)}_{2i}[3] = 0$ (no delete)
\item For odd positions $2i-1$ (original tokens): $\ctrl^{(2)}_{2i-1}[1] = r_{t,i}$ (remask), $\ctrl^{(2)}_{2i-1}[2] = 0$ (no insert), $\ctrl^{(2)}_{2i-1}[3] = 0$ (no delete)
\end{itemize}
which yield $\mathbf{u}^{(2)} = (\mask, z_{t+1,1}, \mask, z_{t+1,2}, \ldots, \mask, z_{t+1,n})$.

\textbf{Step 3 (Delete):} Apply AP-MDM's $(g \circ f_\theta)$ operation again to $\delete$ all mask tokens at original positions:
\begin{align}
\mathbf{u}^{(3)} = (g \circ f_\theta)(\mathbf{u}^{(2)})
\end{align}
where for all positions $j$ in $\mathbf{u}^{(2)}$:
\begin{itemize}
\item For odd positions $2i-1$: $\ctrl^{(3)}_{2i-1}[1] = 0$ (no remask), $\ctrl^{(3)}_{2i-1}[2] = 0$ (no insert), $\ctrl^{(3)}_{2i-1}[3] = \mathds{1}[u^{(2)}_{2i-1} = \mask]$ (delete if mask)
\item For even positions $2i$: $\ctrl^{(3)}_{2i}[1] = 0$ (no remask), $\ctrl^{(3)}_{2i}[2] = 0$ (no insert), $\ctrl^{(3)}_{2i}[3] = 0$ (no delete)
\end{itemize}
This removes all mask tokens at odd positions and recovers the original length $n$. By construction, $\mathbf{u}^{(3)} = \mathbf{z}_{t+1}$, completing the simulation.

\textbf{State Tracking Mechanism}~~~ To enable the AP-MDM to autonomously determine which of the three simulation steps to execute, we augment sequences with special boundary tokens $\texttt{[BOS]}$ and $\texttt{[EOS]}$. The model identifies the current phase by examining the boundary token configuration:
\begin{itemize}
\item \textbf{Step 1 (Insert):} Normal state with $\texttt{[BOS]}$ at the beginning and $\texttt{[EOS]}$ at the end
\item \textbf{Step 2 (Unmask and Remask):} $\texttt{[EOS]}$ is followed by a $\mask$ token, indicating expanded state
\item \textbf{Step 3 (Delete):} $\texttt{[BOS]}$ is preceded by a $\mask$ token, signaling cleanup phase
\end{itemize}
During Step 2, the model leverages the first bit of positional encodings (e.g. $\bipe$ introduced in \Cref{appendix:fasp}) to distinguish between original positions (odd indices) and newly inserted positions (even indices), enabling it to correctly apply remasking operations to original positions based on the rewrite signal $\mathbf{r}_t$ while unmasking new positions to write content from $\mathbf{w}_t$.

We omit the Transformer-based construction for the procedure described above for brevity, which can be done by a simple $\efasp$ program.
\end{proof}

We use the Rewrite-MDM variant established above to simulate PRAM algorithms with optimal space complexity. Here we use the $\efasp[\bipe; [\cdot]_+, \times]$ variant with binary positional encoding (\Cref{appendix:fasp}). The input that encodes the PRAM's initial memory state is a sequence $\mathbf{x} = (x_1, \ldots, x_{n+1}) \in \Sigma^{n+1}$ of discrete tokens from the vocabulary $\Sigma$:
\begin{align}
x_1 &\in \Sigma \quad \text{(processor count token)} \\
x_{i+1} &\in \Sigma \quad \text{(data token)} \quad \text{for } i = 1, \ldots, n
\end{align}

Through token embedding $\te: \Sigma \to \mathbb{R}^w$, these discrete tokens are mapped to their semantic bit representations:
\begin{align}
\te(x_1) &\mapsto P(n) \in \{0,1\}^w \\
\te(x_{i+1}) &\mapsto \text{val}_i \in \{0,1\}^w \quad \text{(data bits)}
\end{align}
for $i = 1, \ldots, n$. The actual input to the AP-MDM is the padded sequence $\mathbf{x}_0 = (x_{0,1}, x_{0,2}, \ldots, x_{0,S(n)}) \in \bar{\Sigma}^{S(n)}$ where $\bar{\Sigma} = \Sigma \cup \{\mask\}$:
\begin{align}
x_{0,j} &= x_j \quad \text{for } j = 1, \ldots, n+1 \\
x_{0,j} &= \mask \quad \text{for } j = n+2, \ldots, S(n)
\end{align}

The crucial advantage of AP-MDM is that it can dynamically rewrite the content at any position using the $\remask$ operation, allowing the simulation to use space optimally as $\mathcal{O}(S(n))$ rather than the $\mathcal{O}(P(n) \times T(n))$ required by standard MDM.

We next provide an overview of the construction:

The key difference between how Rewrite-MDM and AO-MDM simulate PRAM is that Rewrite-MDM can directly rewrite the memory at any position and each computation does not necessarily has be kept in the context forever. This enables us to get rid of the address token. Now representation of a processor can be simplified to:
\begin{align}
\ldots \langle \texttt{PC}_1, \texttt{R}_{1,1}, \texttt{R}_{1,2}, \texttt{R}_{1,3} \rangle \; \langle \texttt{PC}_2, \texttt{R}_{2,1}, \texttt{R}_{2,2}, \texttt{R}_{2,3}  \rangle \; \ldots 
\end{align}
where we only use 3 registers (this is sufficient for the proof but can be extended to any $k \geq 2$).

Additionally, we do not append processor representations to the end of input $\mathbf{x}$ as in AO-MDM, but instead will initialize them at the end of the entire sequence. The remaining part of the sequence is used as a shared memory where token embeddings are data and positional encodings are addresses, aligning more closely with PRAM.

\paragraph{Intialization.} When the last position is a mask token, we initialize the processor state and computation log at the end of the sequence. Roles of each token are calculated similarly as the construction in AO-MDM (except it is static throughout the generation process).

The execution of the program is similar to the construction in AO-MDM, except now the address is inherently associated with the positional encoding. The termination is also slightly different: the returned embedding has to contain an additional bit to indicate the rewrite operation. Also, the termination condition is no longer when all masked are unmasked but a flexibly defined one: in our case, when all processors are HALT.

Using the operators defined above, we now construct an $\efasp$ program that simulates PRAM execution. The program implements the single-processor algorithm detailed in \Cref{appendix:pram}.

{
\captionsetup[lstlisting]{labelformat=empty} 
\lstdefinestyle{customcode}{
    basicstyle=\small\ttfamily,
    mathescape=true,
    commentstyle=\color{green!50!black},
    language=python,
    keywordstyle=\color{blue},
    stringstyle=\color{red},
    frame=lines,
    framesep=5pt,
    tabsize=4,
    columns=fullflexible,
    breaklines=true,
    keepspaces=false,
    backgroundcolor=\color{white},
    showstringspaces=false,
    captionpos=b,
    emphstyle=[1]\color{orange},
    keywords={
        if, elif, else, equal, max, min, if_then_else, 
        not, and, or
    },
    emph={[1]get_token, get_move, get_symbol,seq_len},
    emphstyle=[2]\color{red},
    emph={[2]seq_sum,aha,rha,exact_match,seq_max,seq_min,rightmost_exact_match,seq_and,seq_or},
    literate=%
    {PE}{{{\color{orange}PE}}}{1}
    {TE}{{{\color{orange}TE}}}{1}
}
\begin{lstlisting}[style=customcode]
# ---------------------- Roles & Layout  ----------------------
is_mask = (TE == embed([MASK]))
pn      = rightmost_exact_match(1, is_first, TE)             # number of processors P(n)
last_p  = rightmost_exact_match(1, is_last,  PE)             # last position index
proc_b  = last_p - (pn << 2) + 1                             # processor region start
in_proc = (PE >= proc_b) and (PE < proc_b + (pn << 2))       # in processor region
in_mem  = (PE >= 2) and (PE < proc_b)                        # in memory region

# ---------------------- Initialization  ----------------------
last_is_mask = rightmost_exact_match(1, is_last, is_mask)
if last_is_mask and in_proc:
    inner = (PE - proc_b)[:2]                                # slot index 0..3
    # initialize processor region to 0, and R=1 (require rewrite)
    if   inner == 0: return (0, 1)                           # PC
    elif inner == 1: return ((PE - proc_b) >> 2, 1)          # R1
    elif inner == 2: return (0, 1)                           # R2
    else:            return (0, 1)                           # R3
# no rewrite for other positions in initialization step
if last_is_mask and not in_proc:
    return (TE, 0)

# ========= Processor zone update (only if in_proc) =========
if in_proc:
    pid  = (PE - proc_b) >> 2
    slot = (PE - proc_b)[:2]                                 # 0:PC 1:R1 2:R2 3:R3

    # read previous round processor state (fixed slot)
    pc_pos = proc_b + (pid << 2) + 0
    r1_pos = proc_b + (pid << 2) + 1
    r2_pos = proc_b + (pid << 2) + 2
    r3_pos = proc_b + (pid << 2) + 3
    PC = rightmost_exact_match(pc_pos, PE, TE)
    R1 = rightmost_exact_match(r1_pos, PE, TE)
    R2 = rightmost_exact_match(r2_pos, PE, TE)
    R3 = rightmost_exact_match(r3_pos, PE, TE)

    # processor already HALTed, do not update (R=0 for this slot)
    if PC == HALT_CODE: return (0, 0)

    # fetch and decode instruction
    I_type, op_r, op_s, op_c, label_addr = get_instruction(PC)
    Rs = (R1 if op_s == 1 else R2 if op_s == 2 else R3 if op_s == 3 else 0)
    Rr = (R1 if op_r == 1 else R2 if op_r == 2 else R3 if op_r == 3 else 0)

    # default
    PCn, WR, WR_en = PC + 1, Rr, 0

    # instruction semantics
    if I_type == embed([LOAD]):  
        # mem_get: address=PE, value=TE (only match in memory region)
        hitp = rightmost_exact_match(Rs, (PE if in_mem else 0), (PE if in_mem else 0))
        WR = rightmost_exact_match(hitp, PE, (TE if in_mem else 0))
        WR_en = 1

    elif I_type == embed([STORE]): WR_en = 0
    elif I_type == embed([LOADI]): WR, WR_en = op_c, 1
    elif I_type == embed([ADD]):   WR, WR_en = (Rr + Rs), 1
    elif I_type == embed([SUB]):   WR, WR_en = (Rr - Rs), 1
    elif I_type == embed([AND]):   WR, WR_en = (Rr & Rs), 1
    elif I_type == embed([XOR]):   WR, WR_en = (Rr ^ Rs), 1
    elif I_type == embed([SHL]):   WR, WR_en = (Rr << Rs)), 1
    elif I_type == embed([SHR]):   WR, WR_en = (Rr >> Rs)), 1
    elif I_type == embed([BRZ]) and (Rr == 0): PCn = label_addr
    elif I_type == embed([JMP]):   PCn = label_addr
    elif I_type == embed([HALT]):  PCn = HALT_CODE

    # unified writeback
    R1n = (WR if (WR_en and op_r == 1) else R1)
    R2n = (WR if (WR_en and op_r == 2) else R2)
    R3n = (WR if (WR_en and op_r == 3) else R3)

    # return next state for this slot and require rewrite
    if   slot == 0: return (PCn, 1)
    elif slot == 1: return (R1n, 1)
    elif slot == 2: return (R2n, 1)
    else:           return (R3n, 1)

# ========= Memory zone update (also for non MASK) =========
if in_mem:
    # for all PC slots, construct STORE stream (address, value) for this step
    is_pc_glob = (((PE - proc_b)[:2]) == 0) and (PE >= proc_b) and (PE < proc_b + (pn << 2))
    PCi  = (TE if is_pc_glob else 0)
    It, rd, rs, cimm, L = get_instruction(PCi)

    pid_i = ((PE - proc_b) >> 2)                        # only meaningful for PC slot
    r1_i  = rightmost_exact_match(proc_b + (pid_i << 2) + 1, PE, TE)
    r2_i  = rightmost_exact_match(proc_b + (pid_i << 2) + 2, PE, TE)
    r3_i  = rightmost_exact_match(proc_b + (pid_i << 2) + 3, PE, TE)
    Rs_i  = (r1_i if rs == 1 else r2_i if rs == 2 else r3_i if rs == 3 else 0)
    Rr_i  = (r1_i if rd == 1 else r2_i if rd == 2 else r3_i if rd == 3 else 0)

    STORE_KEYS = (Rs_i if (is_pc_glob and (It == embed([STORE]))) else 0)  # store address
    STORE_VALS = (Rr_i if (is_pc_glob and (It == embed([STORE]))) else 0)  # store value

    hit = rightmost_exact_match(PE, STORE_KEYS, 1, 0)
    val = rightmost_exact_match(PE, STORE_KEYS, STORE_VALS, TE)

    # all halt: do not rewrite; otherwise, if hit, rewrite this address (even if not MASK originally)
    if hit == 1: return (val, 1)
    else:        return (TE, 0)

return (TE, 0)    
\end{lstlisting}
}

\section{Proof of \Cref{thm:apmdm_simulation}} \label{appendix:proof_apmdm_simulation}

\begin{definition}[Two-Sided Dyck-$k$]
    Let $\Sigma_k=\{a_1^{\pm1},\dots,a_k^{\pm1}\}$. Define $u\Rightarrow v$ if $v$ is obtained from $u$ by deleting a factor $a_i a_i^{-1}$ or $a_i^{-1}a_i$ for some $i\in\{1,\dots,k\}$. Write $u\Rightarrow^* v$ iff there exist $m\ge 0$ and words $u=w_0,\dots,w_m=v$ with $w_j\Rightarrow w_{j+1}$ for all $j$. Then
    \begin{equation}
    \TDyck_k \;:=\; \{\, w\in\Sigma_k^* \;:\; w \Rightarrow^* \varepsilon \,\}.
    \end{equation}
    where $\varepsilon$ is the empty word.
\end{definition}

For the Two-Sided Dyck-$k$ language, we define the vocabulary as:
\begin{equation}
\Sigma = \{a_1^{+1}, a_1^{-1}, a_2^{+1}, a_2^{-1}, \ldots, a_k^{+1}, a_k^{-1}\} \cup \{\texttt{[BOS]}, \texttt{[EOS]}\}
\end{equation}
and the extended vocabulary $\bar{\Sigma} = \Sigma \cup \{\mask_1, \mask_2\}$, where $\{a_i^{\pm1}\}_{i=1}^k$ are the $2k$ bracket tokens, $\texttt{[BOS]}$ and $\texttt{[EOS]}$ are boundary tokens, and $\mask_1, \mask_2$ are two types of mask tokens used to handle an inherent limitation of vanilla masked diffusion when extended to the non-deterministic case (i.e. given two mask tokens, the model can not randomly generate $AA$ and $BB$ without also having probability to generate $AB$ and $BA$). Thus $|\Sigma| = 2k + 2$ and $|\bar{\Sigma}| = 2k + 4$.

Following \citep{merrill2022saturated}:

\begin{definition}[Hardmax]\label{def:hardmax}
For any $x \in \mathbb{R}^n$, define the zero-temperature softmax (Hardmax) as
\[
\softmax_{0}(x) \;\triangleq\; \lim_{\tau \to 0^+} \softmax_{\tau}(x),
\quad\text{where}\quad
[\softmax_{0}(x)]_i =
\begin{cases}
\frac{1}{|\arg\max_j x_j|}, & i \in \arg\max_j x_j,\\
0, & \text{otherwise.}
\end{cases}
\]
\end{definition}
    
\begin{definition}[Stochastic AP-MDM] \label{def:stochastic_apmdm}
A {stochastic AP-MDM} is defined as an AP-MDM with encoder-only Transformer backbone as in \Cref{appendix:encoder}, where instead of greedy decoding, we use sampling from Hardmax distributions. Formally, let $\Enc_\theta: \Sigma^* \to (\mathbb{R}^d)^*$ be the encoder Transformer (before the final projection layer) as defined in \Cref{def:encoder}. For each position $i$ in the input sequence, let $\mathbf{h}_i = [\Enc_\theta(\mathbf{x})]_i \in \mathbb{R}^d$ be the hidden state. Define logits $\ell(v\mid \mathbf{x},i)=\langle \mathbf{h}_i,\te(v)\rangle$ and the probability distribution over vocabulary $\Sigma$ as:
\begin{equation}
p_\theta(v \mid \mathbf{x}, i) = [\softmax_{0}(\ell(\cdot\mid \mathbf{x},i))]_v
\end{equation}
where $\softmax_0$ is the Hardmax function from \Cref{def:hardmax}. The stochastic AP-MDM samples tokens according to $v_i \sim \text{Categorical}(p_\theta(\cdot \mid \mathbf{x}, i))$. For the insert operation to support two types of mask tokens ($\mask_1$ and $\mask_2$), we use two separate classification heads $\proj_{I_1}$ and $\proj_{I_2}$ whose outputs are thresholded for inserting $\mask_1$ or $\mask_2$, with priority: $\mask_2$ takes precedence over $\mask_1$. We disable remask and delete operations for this construction.
\end{definition}

\begin{theorem}[Generating Two-Sided Dyck-$k$, Formal] \label{thm:apmdm_simulation_formal}
For any $k\ge 2$, there exists a stochastic AP-MDM as in \Cref{def:stochastic_apmdm} with constant-depth Transformer backbone such that the support of the induced distribution of AP-MDM over strings $w$ is exactly equal to $\TDyck_k$, that is, 
\begin{enumerate}
\item (\emph{Coverage}) For every $w\in \TDyck_k$, $\Pr_\theta[w]>0$.
\item (\emph{Support exactness}) For every $w\notin \TDyck_k$, $\Pr_\theta[w]=0$.
\end{enumerate}
Conversely, under the common hardness assumption that $\textsf{TC}^0\neq \textsf{NC}^1$, for any constant-depth ARM with polylogarithmic embedding dimension, there exists $N\in\mathbb{N}$ such that the support of the distribution generated ARM cannot be exactly equal to $\TDyck_k$.
\end{theorem}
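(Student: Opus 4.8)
The plan is to prove the two directions separately, after recording the one structural fact that drives both: $\TDyck_k$ is exactly the set of words that reduce to $\varepsilon$ by deleting adjacent cancelling factors $a_i a_i^{-1}$ or $a_i^{-1} a_i$, equivalently the words equal to the identity in the free group $F_k$. Since inserting a cancelling factor is the inverse of deleting one, the reachability relation generated by the single move ``insert one cancelling pair at some position'' is symmetric, and its $\varepsilon$-class is precisely $\TDyck_k$. This reduces the coverage and support-exactness claims to showing that the AP-MDM realizes exactly this move, started from the empty seed, and it isolates what the converse must contradict.

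\textbf{Forward direction (AP-MDM).} First I would fix the seed $\texttt{[BOS]}\,\texttt{[EOS]}$ and design one macro-step that (i) uses the \inser\ operation to create an adjacent pair $\mask_1\mask_2$ at a chosen site, and (ii) unmasks $\mask_1$ to a uniformly chosen bracket $a_i^{\pm1}$ via Hardmax over equal logits, while $\mask_2$ attends to its left neighbour and deterministically emits the matching inverse (Hardmax concentrated on a single token). The two mask types are exactly what lets a position-wise decoder guarantee an \emph{inverse} pair rather than an independent product of two uniform draws, and the precedence rule ($\mask_2$ over $\mask_1$) fixes the adjacency pattern. I would then argue (a) \emph{support exactness}: every admissible macro-step preserves the underlying free-group element, so from $\varepsilon$ the support is contained in $\TDyck_k$; and (b) \emph{coverage}: by the symmetric-reachability lemma above, every $w\in\TDyck_k$ is reached by a finite sequence of macro-steps, and since each Hardmax draw and each halting decision carries positive probability, $\Pr_\theta[w]>0$. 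Implementing the macro-step at constant depth is routine via \efasp\ (\Cref{appendix:fasp}): neighbour lookup, the inverse map, and the insert/halt controls are all local or single-attention primitives.

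\textbf{Converse direction (ARM).} Here the strategy is a complexity-theoretic reduction showing that exact-support \emph{generation} would yield an efficient \emph{recognizer}. A left-to-right ARM assigns $\Pr_\theta[w]=\prod_i \Pr[w_i\mid w_{<i}]\cdot\Pr[\text{EOS}\mid w]$, so $w$ is in the support iff every factor is positive, i.e. at each position the realized token lies in the support of the next-token law (for Hardmax sampling, this means membership in the argmax set; for ordinary softmax the predicate is trivially true, leaving only the non-trivial Hardmax regime, which matches the stochastic setup). Given a candidate $w$, a single forward pass produces all next-token laws in parallel; checking positivity at each position and taking the conjunction over positions, together with the EOS check, is an $\textsf{AC}^0$ postprocessing of a constant-depth, log-precision, polylog-width Transformer, hence computable in uniform $\textsf{TC}^0$. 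Thus support $=\TDyck_k$ would place $\TDyck_k$ in $\textsf{TC}^0$. Invoking the classical $\textsf{NC}^1$-hardness of the free-group word problem (two-sided Dyck recognition) for $k\ge2$ contradicts $\textsf{TC}^0\neq\textsf{NC}^1$; since the Transformer is a fixed uniform family, the mismatch must already surface at some finite length $N$, which is the stated conclusion.

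\textbf{Main obstacle.} I expect the construction side to be the delicate part: guaranteeing the induced support is \emph{exactly} $\TDyck_k$ with no leakage requires closure under \emph{every} admissible control configuration, not merely the intended one, so the proof must rule out off-pattern insert/unmask steps that could emit a non-cancelling pair (precisely what the two-mask-type coordination is engineered to prevent), while simultaneously securing full coverage and a well-defined stochastic halting rule, all within constant depth. By comparison the converse is clean, its only external input being the cited $\textsf{NC}^1$-hardness of recognizing $\TDyck_k$ and the standard $\textsf{TC}^0$ bound on the ARM forward pass.
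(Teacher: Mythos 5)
Your proposal is correct and follows essentially the same route as the paper: the forward direction builds the same two-mask-type insertion gadget that realizes the ``insert a cancelling pair'' move (with uniform Hardmax choice of bracket and deterministic matching of the left neighbour), and the converse runs the same reduction from exact-support generation to $\textsf{TC}^0$ recognition of $\TDyck_k$, contradicting its $\textsf{NC}^1$-hardness. Your explicit free-group-invariance/symmetric-reachability lemma for coverage and support exactness is a welcome sharpening of what the paper dismisses as ``easy to see,'' but it is not a different argument.
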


\begin{proof}

For the claim about ARM, we will prove by contradiction. First we note that every $w\in\Sigma_k^*$, there exists a $w'\in \TDyck_k$ such that $w$ is a prefix of $w'$. (The existence of such $w'$ is straightforward, for example, one can construct it by taking $w' = w w^{-1}$, where the inverse is performed by viewing $w$ as an element of the corresponding free group) Now we suppose ARM can indeed generate a distribution whose support is exactly equal to $\TDyck_k$. 
This implies that the ARM must be able to determine at each generation step whether the current sequence can be terminated. Specifically, the model must assess whether the currently generated sequence satisfies the complete bracket matching condition. If the sequence is properly matched, the probability of outputting $\texttt{[EOS]}$ must be non-zero to enable termination. Therefore, the difficulty of generating matched brackets reduces to the problem of recognizing whether a given sequence forms valid matched brackets. For the two-sided Dyck-$k$ language, this recognition problem is DLOGTIME-uniform $\textsf{NC}^1$-hard~\citep{robinson1993parallel}, which exceeds the computational capacity of constant-depth Transformers which is in $\textsf{TC}^0$, under the hardness assumption that $\textsf{TC}^0\neq \textsf{NC}^1$. Thus we conclude ARM cannot generate a distribution over $\Sigma_k^*$ whose support is exactly equal to $\TDyck_k$.

For the stochastic AP-MDM, we construct the following algorithm to generate all strings in the two-sided Dyck-$k$ language through the following algorithmic procedure, illustrated in \Cref{fig:dyck_example}:

\textbf{Step 1 (Probabilistic Mask Insertion):} If the current sequence contains no mask tokens, then for any sequence position $j$ not containing an end-of-sequence token, the model inserts $\mask_1$ with constant probability $p \in (0,1)$ using the insert operation from \Cref{sec:any_process}. The insertion probability for $\mask_2$ is set to zero at this stage.

\textbf{Step 2 (Uniform Token Selection):} At positions containing $\mask_1$, the model performs two operations: (i) it samples uniformly from the bracket token set $\{a_i^{\pm 1}\}_{i=1}^k$ to determine the content, and (ii) it inserts $\mask_2$ with probability 1. Due to the priority-based insertion mechanism defined in \Cref{def:stochastic_apmdm}, $\mask_2$ overrides $\mask_1$, making the original insertion probability irrelevant for subsequent processing.

\textbf{Step 3 (Context-Aware Bracket Matching):} When processing $\mask_2$ tokens, the model identifies the nearest bracket token $a_j^{\pm 1}$ to the left of the current position and generates the corresponding matching bracket according to the two-sided Dyck-$k$ reduction rules.

\textbf{Termination Condition:} The termination mechanism operates as follows: In Step 2, when the sequence contains $\mask_1$ tokens but no $\mask_2$ tokens, the model inserts $\mask_2$ at the final position with a fixed probability. Subsequently, in Step 3, when processing this final $\mask_2$ token, the model generates $\texttt{[EOS]}$ (given the binary positional encoding we considered in \Cref{appendix:pram}, the model is able to identify if the token should be decoded as $\texttt{[EOS]}$ or a matching bracket), signaling the end of the generation process.

 The above algorithm admits an $\efasp$ program implementation (see \Cref{appendix:fasp}) with the following treatment of stochastic operations:

For uniform sampling over the $2k$ bracket tokens, we exploit the fixed vocabulary size by assigning each bracket token $a_i^{\pm 1}$ to distinct dimensions in the $d$-dimensional embedding space. Specifically, when the $\efasp$ program needs to output a uniform distribution over a subset $S \subseteq \{a_i^{\pm 1}\}_{i=1}^k$, it returns a hidden state $\mathbf{h} \in \mathbb{R}^d$ where $\langle \mathbf{h}, \te(v) \rangle = c$ for all $v \in S$ (for some constant $c$) and $\langle \mathbf{h}, \te(v') \rangle < c$ for $v' \notin S$. The hardmax from \Cref{def:stochastic_apmdm}, ensures that the probability mass concentrates uniformly over $S$, achieving the desired uniform sampling behavior.

It is easy to see that this generation procedure can produce any string in the two-sided Dyck-$k$ language, and since any token that would violate Dyck constraints always has strictly smaller logit and hence probability $0$ under Hardmax, the support of the distribution is exactly $\TDyck_k$.
\end{proof}

We note the introduction of two mask tokens is for the model to distinguish between different steps, but this is not necessary if we allow some random seeds in input which mitigates the limitation of MDM when extending to the non-deterministic case.

\section{Proof of \Cref{thm:apmdm_simulation2}} \label{appendix:proof_apmdm_simulation2}

\paragraph{Edit Triplet Encoding.}
We encode each elementary edit as a triplet of tokens $\langle \texttt{op},\, \texttt{pos},\, \texttt{val} \rangle \in \Sigma_{\mathrm{op}} \times \Sigma_{\mathrm{pos}} \times \Sigma$, where
\begin{equation}
\Sigma_{\mathrm{op}}=\{\texttt{UNMASK},\texttt{INSERT},\texttt{DELETE},\texttt{REMASK}\}
\end{equation}
The position token set $\Sigma_{\mathrm{pos}}$ reuses the earlier address/position encoding in \Cref{def:addr_tok}: a position $i \in [S(n)]$ is encoded as $\texttt{pos}=\texttt{encode}(i)$ with inverse decoding $\texttt{dec\_pos}(\texttt{pos})=i$. 

The semantics of the triplet follows the instantiation of AP-MDM considered in \Cref{sec:any_process}.

\begin{definition}[Editing Sequence]\label{def:editing_sequence}
Given an input sequence $\mathbf{x}\in \bar{\Sigma}^*$, an editing sequence is a finite sequence of triplets
\begin{align}
    T(\mathbf{x}) = \big(\langle \mathrm{op}_j,\, \mathrm{pos}_j,\, \mathrm{val}_j \rangle\big)_{j=1}^{m(\mathbf{x})},\quad\mathrm{op}_j\in\Sigma_{\mathrm{op}},\; \mathrm{pos}_j\in\Sigma_{\mathrm{pos}},\; \mathrm{val}_j\in\Sigma.
\end{align}

Its application to $\mathbf{x}$ is defined recursively by $\mathbf{x}^{(0)}=\mathbf{x}$ and
\begin{equation}
\mathbf{x}^{(j)} = \mathrm{Apply}\big(\langle \mathrm{op}_j,\mathrm{pos}_j,\mathrm{val}_j\rangle,\, \mathbf{x}^{(j-1)}\big),\quad j=1,\ldots,m(\mathbf{x}).
\end{equation}
We write
\begin{equation}
\mathrm{Apply\_Triplets}\big(T(\mathbf{x}),\, \mathbf{x}\big) := \mathbf{x}^{(m(\mathbf{x}))}.
\end{equation}
An editing sequence is valid iff every intermediate application is well-defined under the triplet semantics (e.g., $\texttt{UNMASK}$ applies only to masks).
\end{definition}

\begin{theorem}[Hardness of Simulating AP-MDM, Formal] \label{thm:apmdm_simulation2_formal}
There exists an AP-MDM $F$ with a constant-depth encoder-only Transformer backbone such that no ARM or Masked-ARM $G$ (\Cref{def:masked_arm}) with a constant-depth decoder-only Transformer backbone can, on every input $\mathbf{x}$, produce an editing sequence $T_G(\mathbf{x})$ (\Cref{def:editing_sequence}) that realizes $F$'s generation process; i.e., under the assumption that constant-depth Transformers do not include $\textsf{TC}^0$,
\[
\forall G\;\exists\, \mathbf{x}\in\bar{\Sigma}^*:\; \mathrm{Apply\_Triplets}\big(T_G(\mathbf{x}),\, \mathbf{x}\big) 
\;\neq\; 
\mathrm{Apply\_Triplets}\big(T_F(\mathbf{x}),\, \mathbf{x}\big),
\]
or $T_G(\mathbf{x})$ is invalid.
\end{theorem}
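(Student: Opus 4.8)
The plan is to exploit a structural asymmetry between how $F$ and $G$ access the evolving sequence. When $F$ runs, each decoding step receives the \emph{actual} current state $\mathbf{x}_t$ as the input to its forward pass, so a single constant-depth encoder pass suffices to emit that step's parallel edit—this is consistent with $F$ having a constant-depth backbone while its \emph{multi-step} generation (via \inser, \delete, \remask, using the machinery of \Cref{thm:main_apmdm}) computes something globally hard. By contrast, an ARM or Masked-ARM $G$ producing the editing sequence of \Cref{def:editing_sequence} never sees $\mathbf{x}_t$: it only observes the input $\mathbf{x}$ together with the prefix of triplets it has already emitted, which is then \emph{consumed} by $\mathrm{Apply\_Triplets}$. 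To emit the correct next triplet, $G$ must therefore reconstruct the current state from the edit stream, and the heart of the separation is that, once \inser/\delete/\remask are in play, this reconstruction is hard.

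Concretely, I would design $F$ so that the length-changing and erasing edits make the map ``current position $\mapsto$ surviving content of $\mathbf{x}_t$,'' together with the next-edit decision that depends on it, an $\textsf{NC}^1$-hard function of $(\mathbf{x},\text{edit prefix})$—for instance by threading a balanced-bracket / regular-language recognition subproblem through the shift bookkeeping, in the spirit of the reduction used in \Cref{thm:apmdm_simulation}. The crucial point is that $F$ never has to compute this map (it is handed $\mathbf{x}_t$ outright), whereas $G$ must. I would first fix the resource model that \Cref{def:editing_sequence} implicitly intends: a \emph{faithful} $G$ emits a constant number of tokens per elementary edit, with no interleaved summary tokens that re-materialize the current state as readable context. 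Under this model each triplet is the output of a single constant-depth decoder pass over $\mathbf{x}$ and the emitted prefix, i.e.\ a $\textsf{TC}^0$ map, and $G$ has no persistent working memory beyond the prefix that $\mathrm{Apply\_Triplets}$ will consume.

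The separation then follows by contradiction. Assuming some constant-depth $G$ satisfies $\mathrm{Apply\_Triplets}(T_G(\mathbf{x}),\mathbf{x}) = \mathrm{Apply\_Triplets}(T_F(\mathbf{x}),\mathbf{x})$ on every input, I would argue that correctness at the step where $F$'s edit depends on the erased/shifted intermediate state forces $G$'s single constant-depth pass to realize the $\textsf{NC}^1$-hard reconstruction-and-decision map above. Since $G$'s backbone lies in $\textsf{TC}^0$ and the prefix it attends to affords no extra sequential depth under the no-summarization model, this places an $\textsf{NC}^1$-hard function inside $\textsf{TC}^0$, contradicting $\textsf{TC}^0\neq\textsf{NC}^1$; the failure manifests on a concrete witness $\mathbf{x}$, on which $T_G(\mathbf{x})$ either mismatches $F$'s result or is invalid. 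It is worth contrasting this with \Cref{thm:any_order}: for pure \unmask (AO-MDM) the positions never shift and no content is ever erased, so reconstructing the current state collapses to a constant-depth attention fetch and the simulation goes through—precisely the ingredient that \inser/\delete/\remask destroy.

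The main obstacle I anticipate is formalizing the resource model so that ``realizing $F$'s generation process'' genuinely denies $G$ the materialized current state. If $G$ were permitted to periodically summarize $\mathbf{x}_t$ into readable tokens, or to spend a growing number of tokens per edit, then—as the paper already notes for Benefit 4—the simulation becomes possible through contrived trajectories, so the entire separation lives in the succinct, no-summarization regime, and the proof must pin down that the consumed triplet prefix cannot serve as persistent scratch. The second delicate point is the reduction itself: I must exhibit a concrete $F$-trajectory in which reconstruction-under-edits is provably $\textsf{NC}^1$-hard (unlike the stable-position case), and verify that matching only $F$'s \emph{final result}, rather than its full trajectory, still forces $G$ to reproduce the hard dependency instead of some degenerate shortcut.
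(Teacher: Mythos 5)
Your proposal lands on the same core idea as the paper's proof: the hardness for $G$ lies in tracking positions and surviving content under length-changing edits, the triplet stream $G$ emits is too constrained to act as persistent scratch, and the contradiction comes from forcing an $\mathsf{NC}^1$-hard map into a $\mathsf{TC}^0$ computation. The genuine gap is the one you flag yourself: you never exhibit the concrete $F$ and hard instance, and without it the two delicate points you name (the no-summarization resource model, and whether matching only the final outcome forces the hard dependency) remain unresolved. The paper closes both with a single construction that is worth internalizing: the input $\mathbf{x}$ to $F$ \emph{already contains} the flattened edit list $T$ together with a query position $q$, and $F$'s generation process is simply to apply those edits and report the symbol at the queried position. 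Because $F$'s edits are verbatim in the input, any $G$ realizing $F$'s process must emit a forced copy of $T$, so the autoregressive prefix carries no information beyond the input and the ``prefix as scratchpad'' loophole is excluded by construction rather than by an extra assumption on $G$; all of the hardness is then concentrated in the final read-out, which the paper reduces from the \textsc{Preserves} problem (does the item at position $i$ of $A_0$ survive an insert/delete update sequence and land at position $j$?), rather than being spread across every next-edit decision as in your framing. Two further cautions: the paper's hardness of \textsc{Preserves} rests on a stated conjecture (\Cref{conj:preserves_hardness}), not a known theorem, so if your bracket-matching route yields a provably $\mathsf{NC}^1$-hard reconstruction it would actually strengthen the result; and your version, in which $G$ must reconstruct $\mathbf{x}_t$ to predict the \emph{next} triplet, requires you to rule out degenerate shortcuts at every intermediate step, which is exactly the argument the paper's ``edits in the input'' trick lets you avoid.
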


The ARM in the above result can be replaced by the Masked-ARM with encoder architecture used in \Cref{thm:any_order} without affecting the result.

\begin{proof}
Fix $L\in\mathbb N$. Let $\mathbf u\in\Sigma^L$ be the base string, let $T$ be a valid editing sequence (\Cref{def:editing_sequence}), and let $q\in\Sigma_{\mathrm{pos}}$ be a query position token with index $i=\mathrm{dec\_pos}(q)$. Encode the input as
\begin{equation}
\mathbf x 
= (\mathbf u,\; \texttt{[SEP]},\; \mathrm{flatten}(T),\; \texttt{[SEP]},\; q,\; \texttt{[SEP]}) 
\in \Sigma^{L+3+3m(\mathbf u)}.
\end{equation}
Here
\begin{equation}
\mathrm{flatten}(T)=(\mathrm{op}_1,\mathrm{pos}_1,\mathrm{val}_1,\ldots,\mathrm{op}_{m},\mathrm{pos}_{m},\mathrm{val}_{m}).
\end{equation}
The task is to output the queried symbol after applying the editing history:
\begin{equation}
 y\;=\; \big[\mathrm{Apply\_Triplets}\big(T(\mathbf u),\, \mathbf u\big)\big]_i\;\in\;\Sigma.
\end{equation}
That is, the instance provides (i) a base string $\mathbf u$, (ii) an editing history $T$ as a sequence of triplets, and (iii) a query position token $q$. The model must simulate $T$ on $\mathbf u$ and return the symbol at the queried position $i$ in the resulting string. For AP-MDM, the simulation is intuitive and can be proven by simple $\efasp$ program which we skip in this proof.

For ARM, due to the construction of the problem, the simulation process 
is exactly copying the editing sequence part in the input, therefore 
solving the problem is equivalent to directly answer the query, which we 
show the equivalence to a $\mathsf{TC}^0$-hard task:

\begin{definition}[{\sc Preserves}~\citep{allender2006grid}]
    Let $A$ be an ordered list (1-indexed). The update alphabet is
    \begin{equation}
    \mathcal{U}=\{\texttt{insert}(i),\,\texttt{delete}(i)\mid i\in\mathbb{N}\}.
    \end{equation}
    For an initial list $A_0$ and an update sequence $s\in\mathcal{U}^*$, let $A_t$ be the list after applying the first $t$ updates of $s$.
    For indices $i,j\in\mathbb{N}$, define
    \begin{equation}
    \mathrm{PRESERVES}(A_0,s,i,j)\iff
    \text{the item at position }i\text{ in }A_0\text{ still exists after }s\text{ and is at position }j\text{ in }A_{|s|}.
    \end{equation}
    The decision problem {\sc Preserves} asks, given $(A_0,s,i,j)$, whether $\mathrm{PRESERVES}(A_0,s,i,j)$ holds.
\end{definition}

\begin{conjecture} \label{conj:preserves_hardness}
The {\sc Preserves} problem is $\mathsf{NC}^1$-hard under DLOGTIME-uniform reductions.
\end{conjecture}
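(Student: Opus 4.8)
The plan is to establish $\mathsf{NC}^1$-hardness by exhibiting a DLOGTIME-uniform $\mathsf{AC}^0$ (indeed projection-like) reduction from a canonical $\mathsf{NC}^1$-complete problem to {\sc Preserves}. The natural source problem is the word problem over the non-solvable group $S_5$ (equivalently, evaluation of a width-$5$ permutation branching program), which is $\mathsf{NC}^1$-complete under such reductions by Barrington's theorem; an alternative source, closer to the cited \citep{allender2006grid}, is a restricted grid-graph reachability problem already shown $\mathsf{NC}^1$-hard there. I would fix the source to the $S_5$ word problem: given generators $g_1,\dots,g_m\in S_5$ (each selected by one input bit from two fixed permutations), decide whether the product $g_1\cdots g_m$ equals a designated permutation.

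First I would describe the encoding. The reduction outputs an initial list $A_0$, an update sequence $s$, and query indices $(i,j)$, so that the single tracked item --- the one at position $i$ of $A_0$ --- acts as a \emph{logical register} whose rank after the updates encodes the running group element $g_1\cdots g_t$. Each generator $g_t$ is compiled into a fixed, input-determined block of $\texttt{insert}/\texttt{delete}$ operations (a \emph{gadget}); concatenating the $m$ gadgets yields $s$. The predicate {\sc Preserves}$(A_0,s,i,j)$ is then arranged to hold iff the accumulated product equals the designated permutation, i.e.\ the register lands in the unique target rank $j$ and is never deleted along the way.

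The technical heart, and the step I expect to be the main obstacle, is realizing a non-abelian group action with these primitives. Tracking a single item is deceptively weak: each $\texttt{insert}/\texttt{delete}$ shifts the item's rank by at most one, and the induced map on ranks is monotone, so a naive positional encoding can realize only additive (abelian, hence solvable, essentially $\mathsf{ACC}^0$-level) dynamics --- never the $S_5$ action needed for $\mathsf{NC}^1$-hardness. The reduction must therefore exploit the one source of genuine branching available: each operation's effect depends on a \emph{magnitude comparison} between its fixed index and the item's \emph{current} rank, together with deletion-survival events that fire only on an exact rank match. The key lemma to prove is that a constant-size family of gadgets, acting on a state encoded jointly in the coarse band $\lfloor p/M\rfloor$ and the residue $p \bmod M$ of the rank $p$, can implement each generator of $S_5$ independently of the current state --- using threshold-conditioned shifts, and controlled delete-then-reinsert events where swaps are required, to manufacture the non-solvable transitions. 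Verifying that such gadgets compose correctly across all $m$ layers, keep the register alive exactly when intended, and are produced by a DLOGTIME-uniform $\mathsf{AC}^0$ map of polynomial size is where the real work lies; once this gadget lemma is in hand, correctness of the reduction and the conclusion under $\mathsf{TC}^0\neq\mathsf{NC}^1$ follow routinely.
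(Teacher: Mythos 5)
There is no proof in the paper to compare against: the statement is deliberately labelled a \emph{conjecture} and is invoked only as a hardness assumption in the proof of \Cref{thm:apmdm_simulation2_formal}. So you are attempting to close a gap the authors explicitly left open, and unfortunately your sketch does not close it. The step you yourself flag as ``the technical heart''~--- the gadget lemma realizing the generators of $S_5$ as blocks of $\texttt{insert}/\texttt{delete}$ updates acting on the rank of a single tracked item~--- is not merely unproven; in the form you describe it is provably impossible. For a fixed update, the induced map on the tracked item's rank $p$ is $p\mapsto p+[k\le p]$ for $\texttt{insert}(k)$ and $p\mapsto p-[k<p]$ (with death at $p=k$) for $\texttt{delete}(k)$; both are strictly increasing partial injections of the rank domain, and so is any composition of them on its survival domain. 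If $e:S_5\to[N]$ is any injective encoding and a gadget realizes right-multiplication by $h$, i.e.\ maps $e(g)\mapsto e(gh)$ for every $g$, then the composite restricted to $\mathrm{Im}(e)$ is a strictly increasing bijection of a finite set of integers onto itself, hence the identity, forcing $h=1$. No choice of ``coarse band plus residue'' read-out escapes this, because the state is still a single integer and the composite map on it is still monotone; and the ``controlled delete-then-reinsert'' idea cannot rescue it either, since \textsc{Preserves} tracks the identity of the \emph{original} item at position $i$ and deletion of that item is absorbing~--- a reinserted element is a different item. This is exactly the aperiodicity obstruction you half-acknowledge before hoping threshold comparisons will ``manufacture the non-solvable transitions''; they cannot.

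Concretely, then: the reduction from the $S_5$ word problem via a single-item rank automaton fails, and any correct proof of $\mathsf{NC}^1$-hardness of \textsc{Preserves} (if the conjecture is true at all) must exploit something other than the rank dynamics of one tracked element, since those dynamics live in the aperiodic monoid of order-preserving injections. As it stands, your argument establishes nothing beyond what the paper already assumes, and the conjecture remains open.
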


We reduce \textsc{Preserves} to our Editing-Query task in one step. Given $(A_0,s,i,j)$ with $|A_0|=L$, let $\mathbf u\in\Sigma^L$ list the items of $A_0$ (unique token id per item); expand each $\texttt{insert}(p,v)$ as the triplet block $\big(\langle \texttt{INSERT},\mathrm{encode}(p),\mask\rangle,\langle \texttt{UNMASK},\mathrm{encode}(p'),v\rangle\big)$ and each $\texttt{delete}(p)$ as $\big(\langle \texttt{REMASK},\mathrm{encode}(p),\bullet\rangle,\langle \texttt{DELETE},\mathrm{encode}(p),\bullet\rangle\big)$, where $p'$ is the position of the newly inserted mask under our convention and $\bullet$ is ignored; let $T$ be the concatenation over $s$ and set $q=\mathrm{encode}(j)$. Then with
\begin{equation}
 y\;=\; \big[\mathrm{Apply\_Triplets}\big(T,\mathbf u\big)\big]_j,
\end{equation}
we have
\begin{equation}
\mathrm{PRESERVES}(A_0,s,i,j)\;\Longleftrightarrow\; y=\mathrm{id}(i).
\end{equation}
Thus any model that solves Editing-Query on all inputs also decides \textsc{Preserves}. By \Cref{conj:preserves_hardness}, \textsc{Preserves} is $\mathsf{NC}^1$-hard, which places it beyond the computational capacity of constant-depth Transformers that are contained in $\mathsf{TC}^0$. Under this conjecture, no constant-depth ARM or Masked-ARM can solve our Editing-Query task on all inputs. This completes the proof.

\end{proof}

\vspace{-10pt}
\section{Experiment: Sudoku Puzzle} \label{appendix:examples_sudoku}
\vspace{-10pt}

We provide detailed description of training data generation for the Sudoku puzzle experiment. The data generation process simulates a backtracking-based solving algorithm and records the intermediate states as supervised training trajectories for AP-MDM.

\paragraph{State Representation}
The $9 \times 9$ Sudoku grid is represented as a sequence of 324 tokens, where each cell is encoded using 3 consecutive tokens: $(\text{value}, \text{color}, \text{marker})$. The vocabulary consists of 32 tokens including: EMPTY (unfilled cell), MASK (unknown position to be predicted), digits 1-9 (filled values), WHITE (default color indicating no branch), 15 branch COLOR tokens (tracking different branching paths), NORMAL (standard state), SKULL (failed branch point), BRANCH (active branch starting position), and SEPARATOR (structural delimiter between cells). The value token can be EMPTY (for unfilled cells), a digit 1-9 (for filled cells), or MASK (for positions being predicted). The color token tracks which branching decision path the cell belongs to, using WHITE for non-branched cells and one of 15 COLOR tokens for cells within branches. The marker token indicates the cell's computational status: NORMAL for standard cells, SKULL for positions that caused backtracking, and BRANCH for branch starting points.

The solving algorithm consists of several atomic operations, each translated into state-transition tuples for AP-MDM training:

\textbf{Assign and Branch:} The two most fundamental operations in Sudoku solving are deterministic assignment and branch creation, both implemented through combinations of $\remask$ and $\unmask$ operations. For Assign, when deterministically filling a cell with value $v$, we generate a 2-step transition: first, the three tokens at the target position are remasked, converting them to $(\mask, \mask, \mask)$; second, these masks are unmasked to the target values $(v, c, \text{NORMAL})$ where $c$ is the appropriate color token determined by the current branch context. For Branch, when creating a branch at position $(r, c)$ with candidate value $v$ and branch identifier $b$, we similarly generate a 2-step transition: first, remask the cell tokens to $(\mask, \mask, \mask)$; second, unmask to $(v, \text{COLOR}_b, \text{BRANCH})$ to mark this cell as a branch starting point with the corresponding branch color. 

Concrete examples visualizing these two operations are shown in \Cref{fig:sudoku_assign_branch}.

\textbf{Contradiction Marking, Backtrack, and Recovery:} When the solver detects a contradiction (a cell with no valid candidates), before backtracking, we mark this contradicting position through a 2-step transition: first remask the cell to $(\mask, \mask, \mask)$, then unmask to $(\text{EMPTY}, \text{COLOR}_b, \text{SKULL})$ to visually indicate the contradiction location. When backtracking from a failed branch, multiple cells need to be cleared through a 2-step transition that handles both ordinary cells and the branch starting point differently: in the first step, ordinary cells are remasked in all three token positions while the branch starting point only has its marker token remasked; in the second step, ordinary cells are unmasked to $(\text{EMPTY}, \text{WHITE}, \text{NORMAL})$ while the branch starting point undergoes simultaneous remask and unmask operations with the first two positions remasked to $(\mask, \mask)$ and the third position unmasked to store the failed value. After backtracking, when filling the failed position with a new value, we generate a 2-step transition to convert from the SKULL state: the first step unmasks the value and color while remasking the marker to $(\mask, \mask, v_{\text{old}}) \to (v_{\text{new}}, c, \mask)$, and the second step unmasks the marker to either NORMAL or BRANCH depending on whether this creates a new branch. Concrete examples visualizing these operations are shown in \Cref{fig:sudoku_other_ops}.

\paragraph{Training Data Statistics}
Following this generation process, each Sudoku puzzle produces 1421.3 state-transition tuples (i.e. from $\mathbf x_t$ to $\mathbf x_{t+1}$) on average depending on the puzzle difficulty and the number of backtracking steps required. For our experiments, we generated training data from 100 hard Sudoku puzzles, each yielding 25,022.6 training state-transition tuples on average.

\begin{figure}[t]
	\centering
	\subfigure[]{\includegraphics[width=1.0\textwidth]{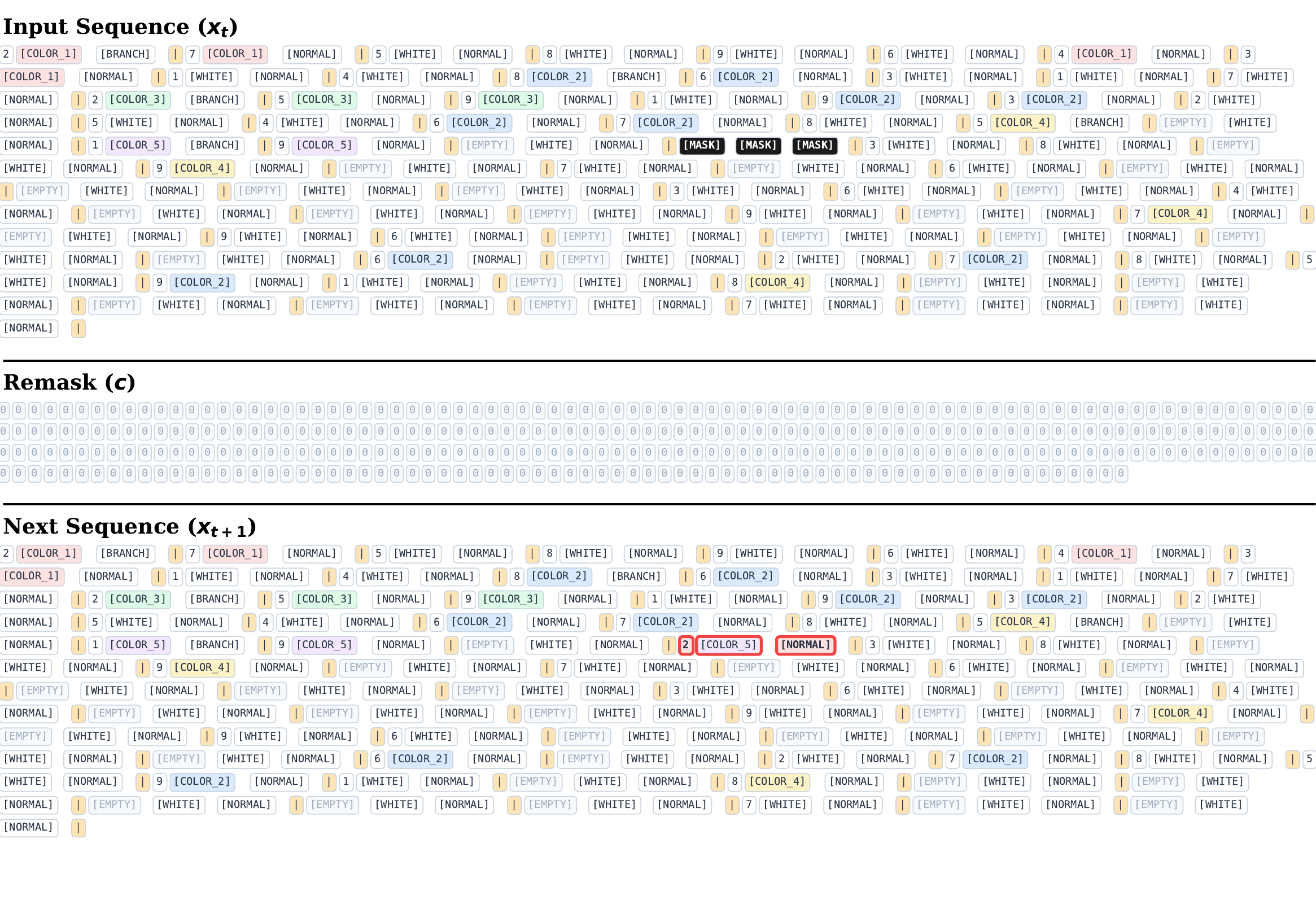}}

	\subfigure[]{\includegraphics[width=1.0\textwidth]{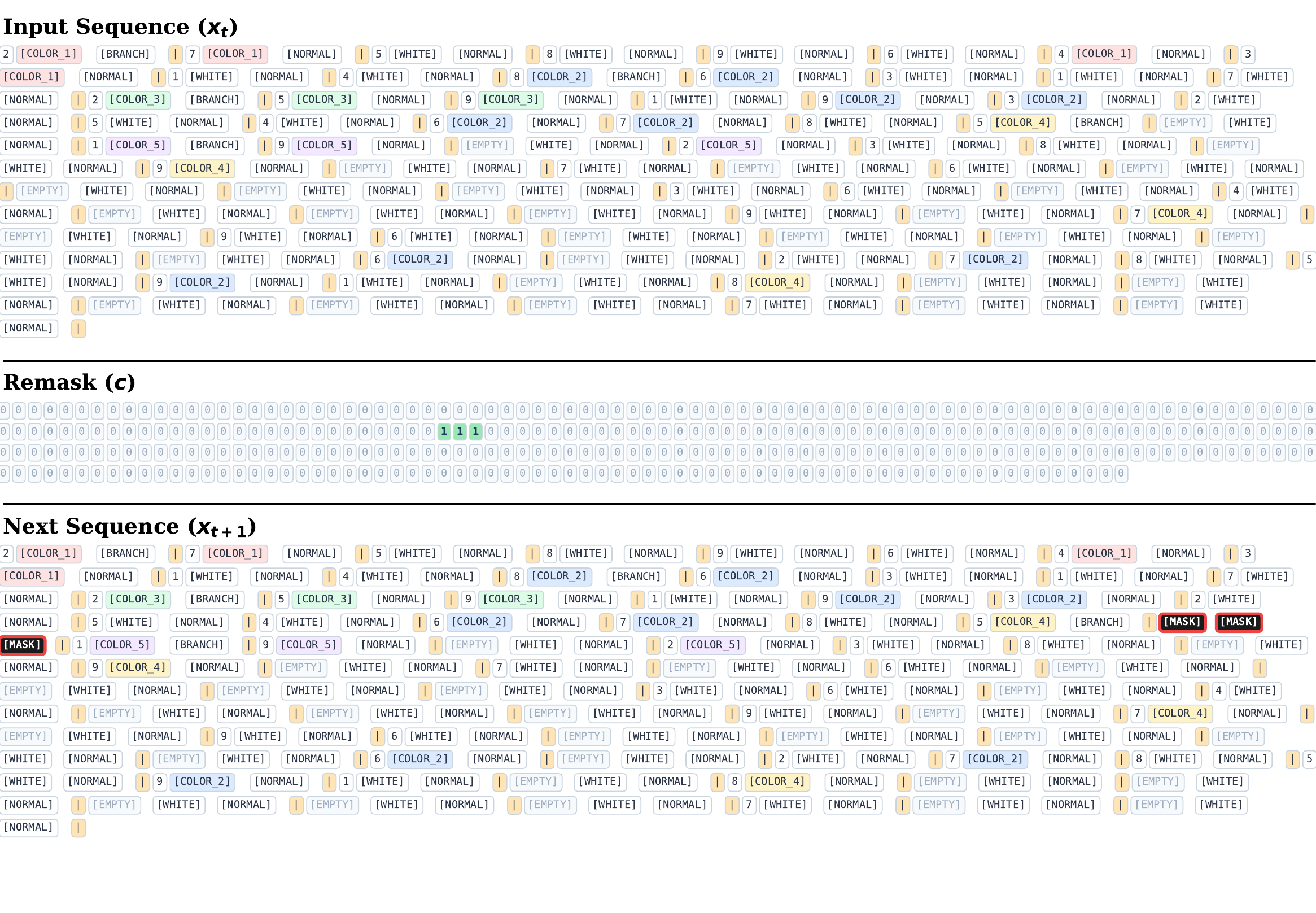}}
	\caption{Demonstration of value assignment.}
	\label{fig:sudoku_assign_branch}
\end{figure} 

\begin{figure}[t]
	\centering
    \ContinuedFloat
	\subfigure[]{\includegraphics[width=1.0\textwidth]{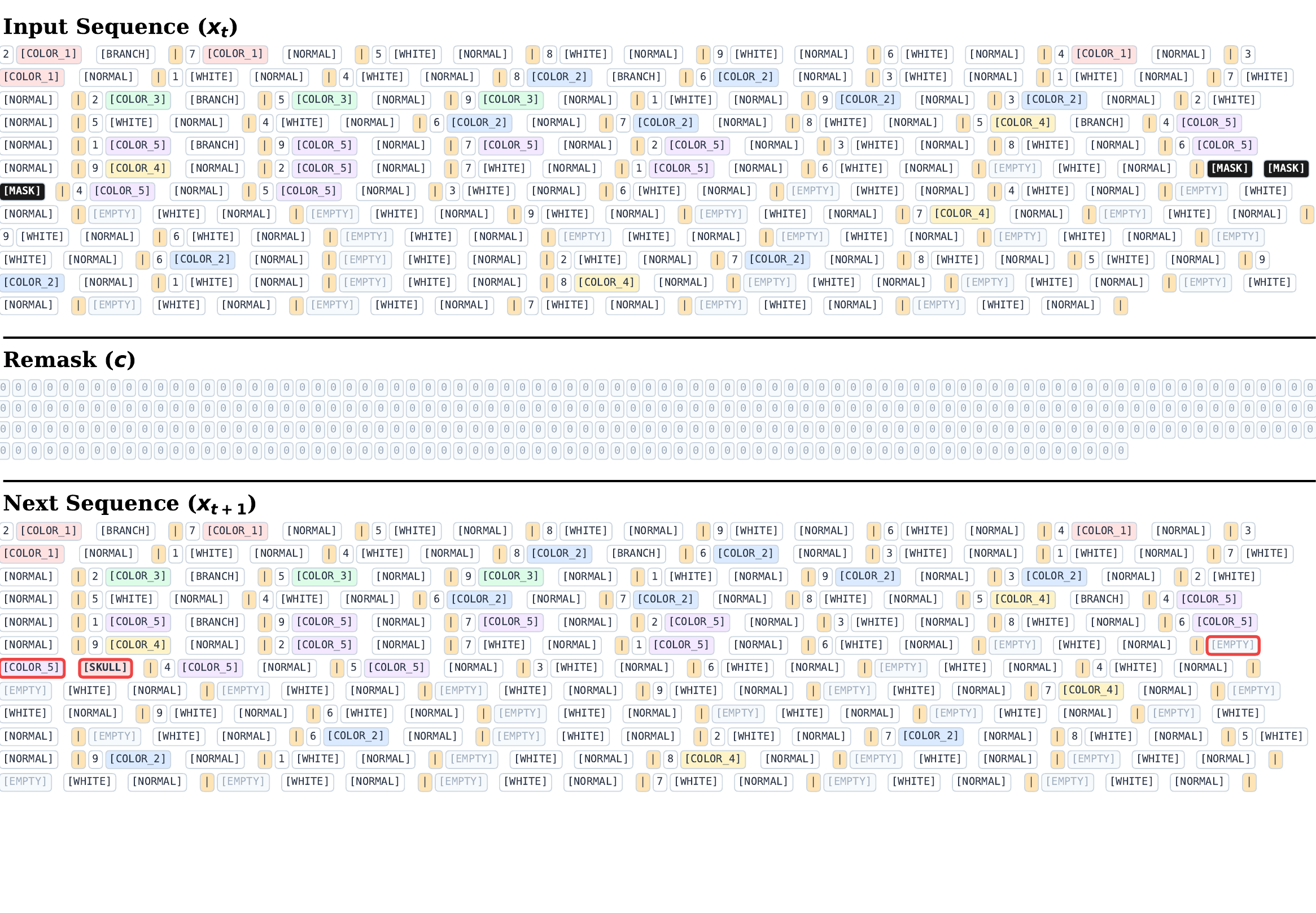}}
	
	\subfigure[]{\includegraphics[width=1.0\textwidth]{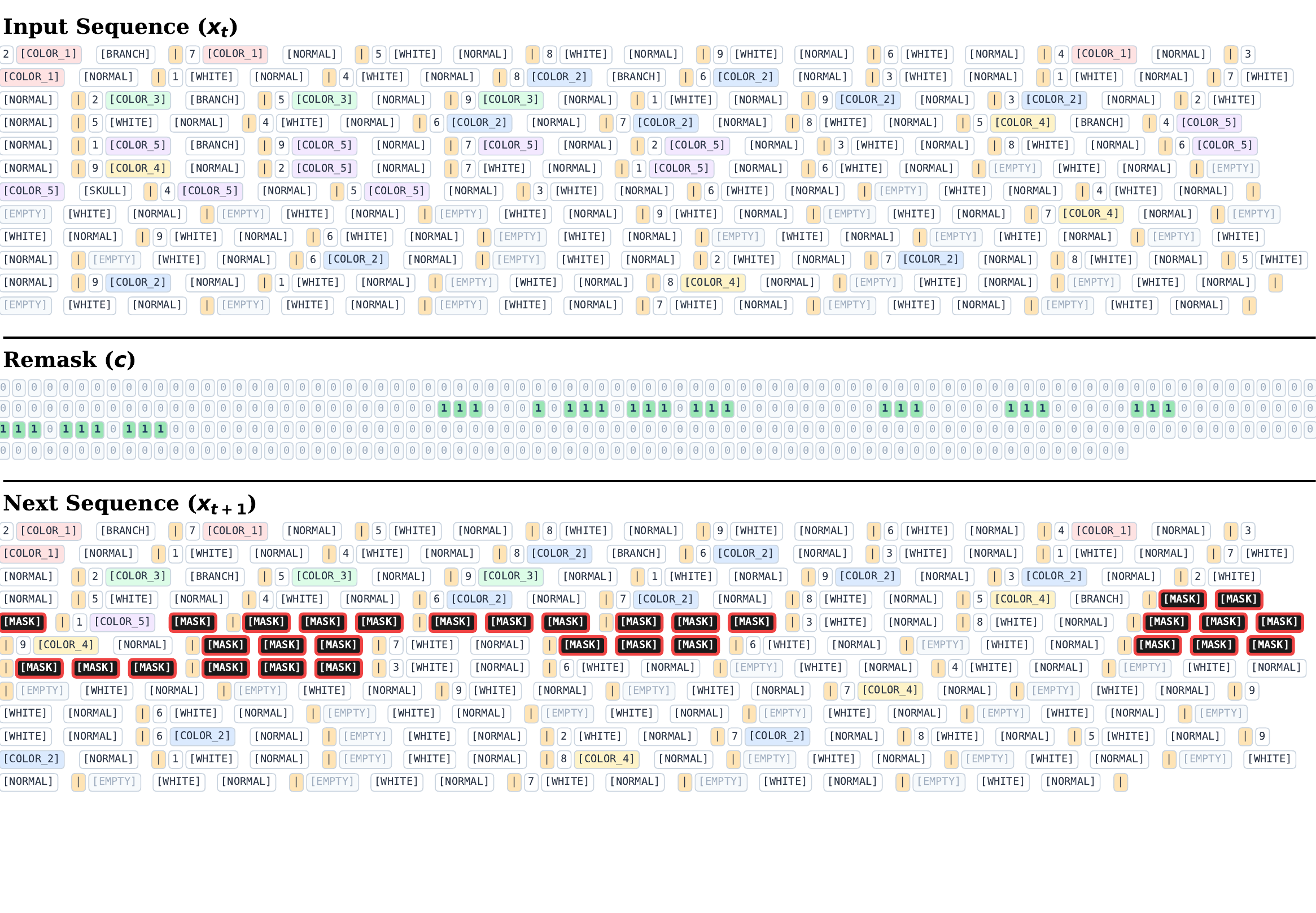}}
	\caption{Demonstration of backtracking and recovery from failure operations (Part 1). (Continued on next page)}
	\label{fig:sudoku_other_ops}
\end{figure} 

\begin{figure}[p]
	\ContinuedFloat
	\centering
	\subfigure[]{\includegraphics[width=1.0\textwidth]{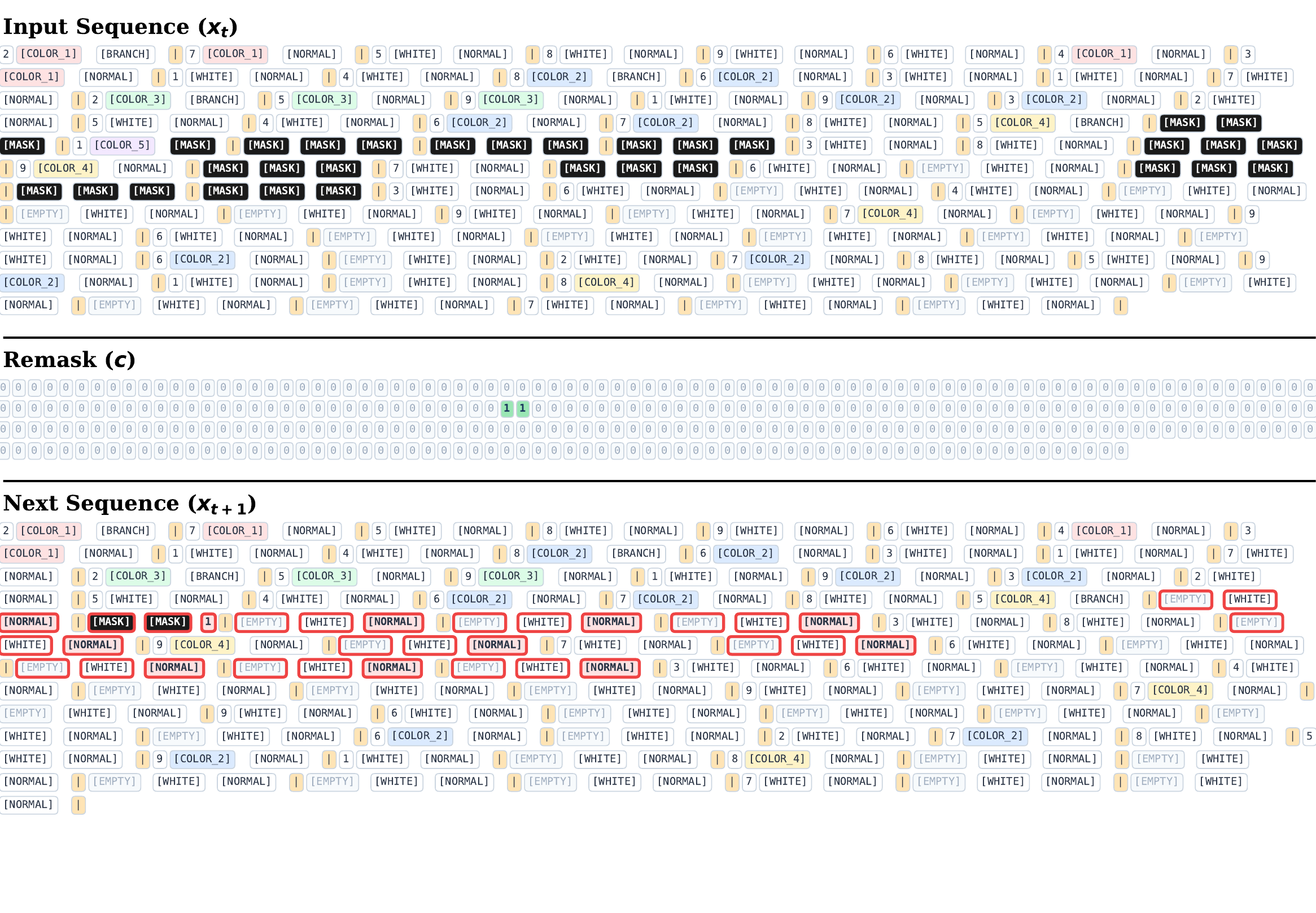}}
	
	\subfigure[]{\includegraphics[width=1.0\textwidth]{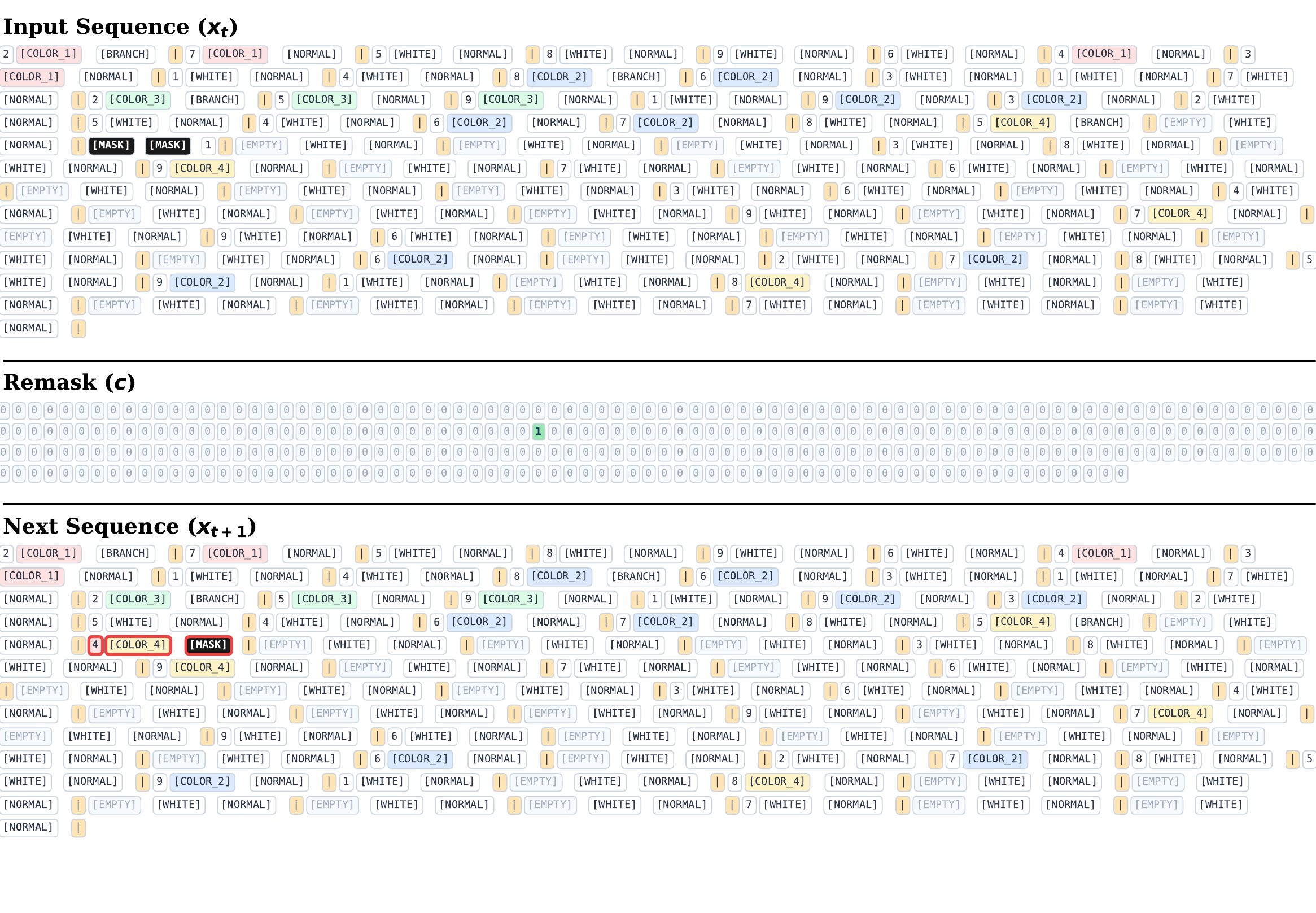}}
	\caption{(Continued) Part 2.}
\end{figure} 

\begin{figure}[p]
	\ContinuedFloat
	\centering
	\subfigure[]{\includegraphics[width=1.0\textwidth]{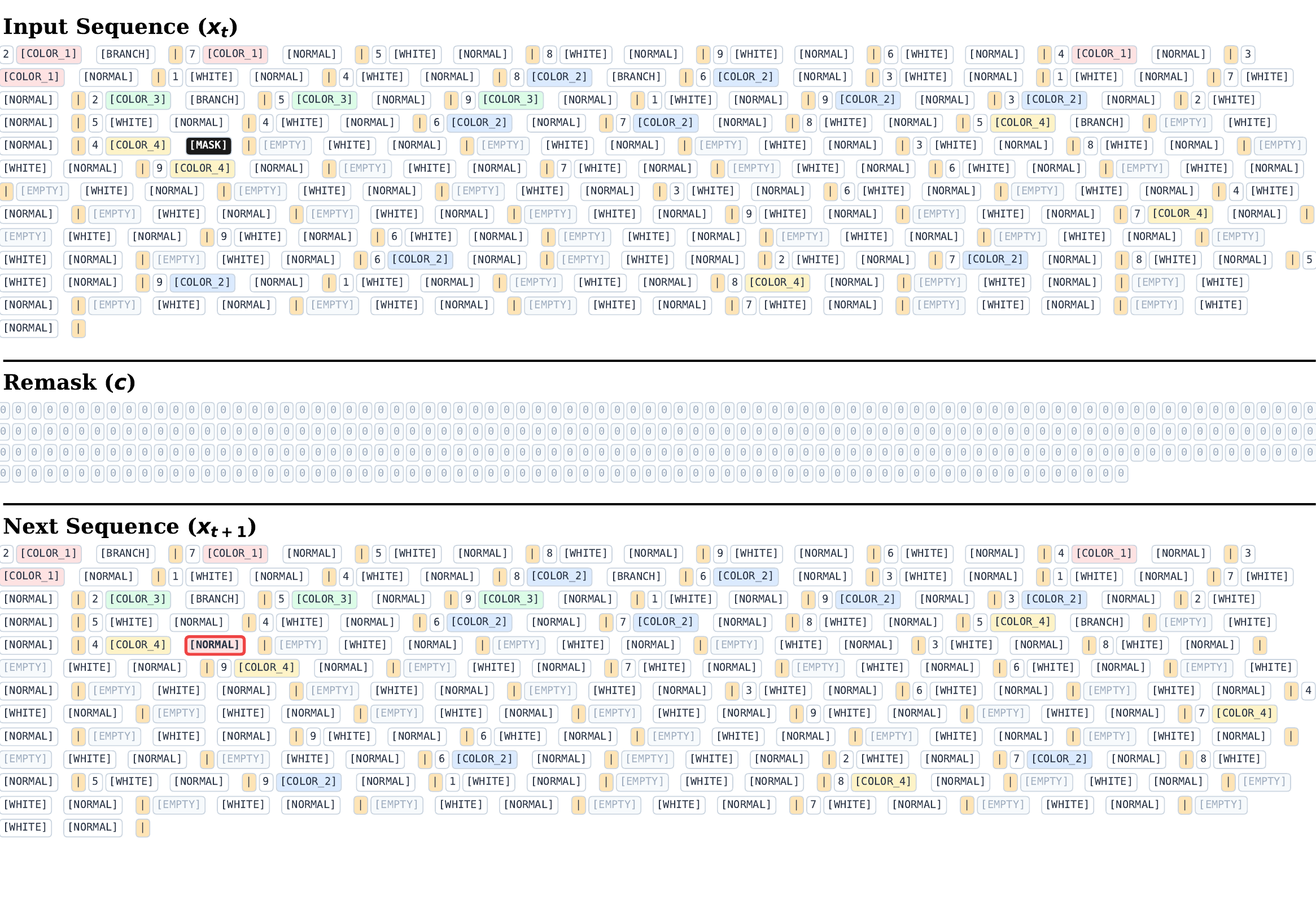}}
	
	\subfigure[]{\includegraphics[width=1.0\textwidth]{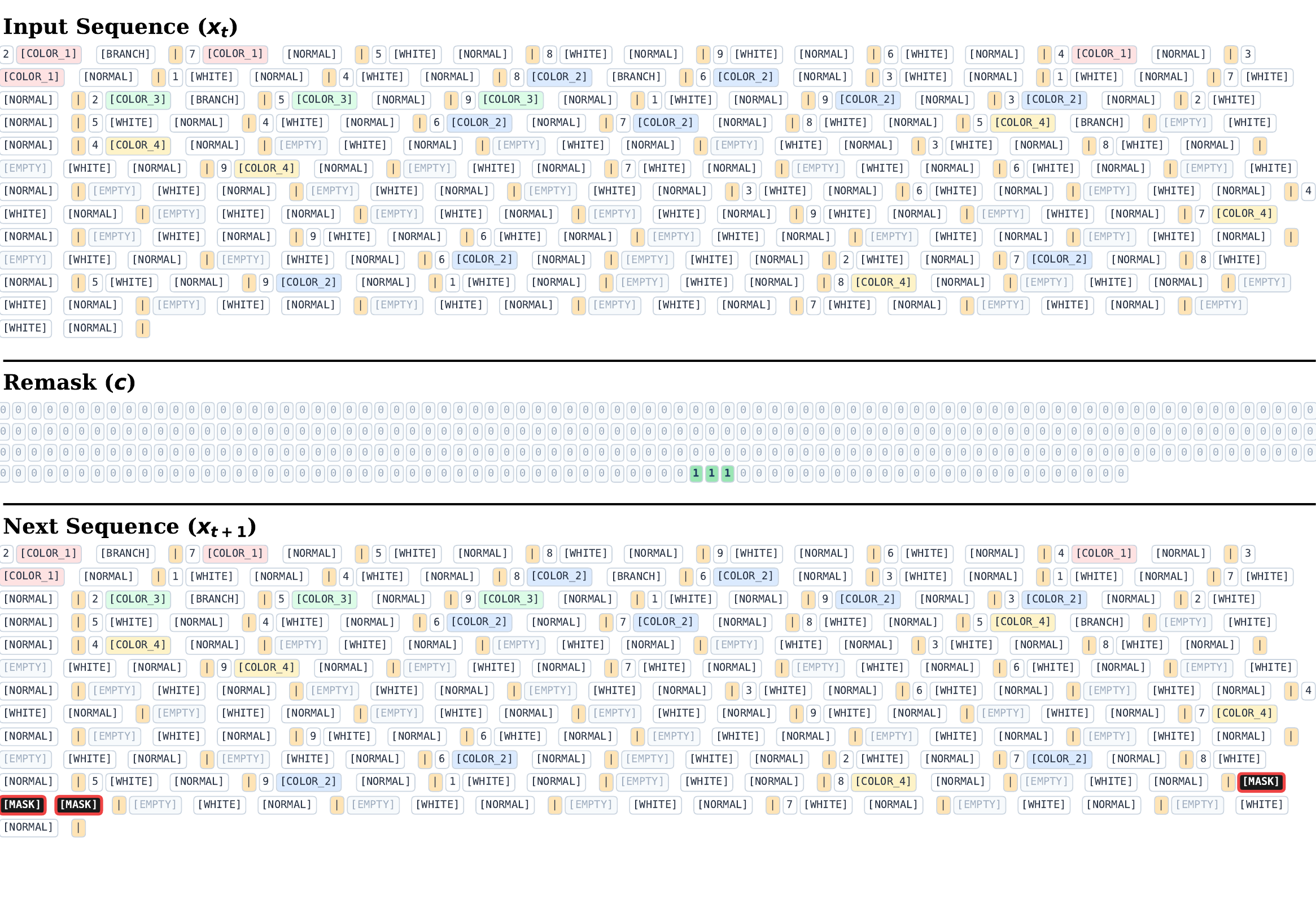}}
	\caption{(Continued) Part 3.}
\end{figure} 

\clearpage

\section{Experiment: Graph Generation} \label{appendix:examples_graph}

We consider a graph editing task that requires computing the minimum edge set to disconnect two specified nodes. Formally, given a directed graph $G = (V, E)$ with unit-capacity edges and two designated nodes $s, t \in V$ (source and target), the task is to generate a modified graph $G' = (V, E')$ where $E' \subseteq E$ such that there exists no path from $s$ to $t$ in $G'$, and $|E \setminus E'|$ (the number of removed edges) is minimized. This is equivalent to computing the minimum $s$-$t$ cut, which by the max-flow min-cut theorem equals the maximum flow from $s$ to $t$. The generation process involves iteratively finding augmenting paths, modifying the graph structure by reversing edge directions, and tracking intermediate states until no more augmenting paths exist. The final output is the graph with min-cut edges removed, effectively disconnecting $s$ and $t$.

We provide detailed description of training data generation for this graph editing task based on the Edmonds-Karp algorithm. The data generation process simulates the BFS-based augmenting path search and records the intermediate algorithmic states as supervised training trajectories for AP-MDM.

\paragraph{State Representation}
A directed graph with $n$ nodes and $m$ edges is represented as a token sequence with three main components: prompt (source and target nodes), graph data (edge list with features), and node data (node list with features). Each edge is encoded with its endpoints $(u, v)$ and two feature slots tracking the edge's directional availability: slot1 represents forward direction availability (initially FB for forward-backward) and slot2 represents reverse direction availability (initially MASK indicating unavailable). Each node is encoded with its ID and two features: level (BFS layer, initially INF for unvisited or LVL0 for source) and parent (parent node in BFS tree, initially NIL). The vocabulary includes structural tokens (PROMPT, SRC, TGT, GRAPH, NODES, parentheses), edge feature tokens (FB, MASK), node feature tokens (LVL0-LVL9, INF, NIL, PAR), node IDs (0-299), and termination tokens (EOA, EOS).

\paragraph{Atomic Operations}
The Edmonds-Karp algorithm is decomposed into atomic operations, each translated into state-transition tuples for AP-MDM training. The algorithm consists of four phases:

\textbf{Feature Expansion:} Before the algorithm begins, edge and node feature slots must be initialized through a 3-step process following a expansion-then-unmask paradigm. First, MASK tokens are inserted after each edge's second node and after each node's ID ($\inser$ operation). Second, another MASK is inserted while simultaneously unmasking the first MASK to FB for edges and to the appropriate level for nodes ($\inser$ + $\unmask$ operations). Third, the second MASK is unmasked to MASK for edges and to NIL for nodes ($\unmask$ operation). See \Cref{fig:graph_feature_expansion}.

\textbf{Breadth-First Search:} The breadth-first search proceeds by discovering nodes layer by layer. When a new node is discovered through an edge, we generate a 2-step transition: first, remask the node's level and parent features to MASK ($\remask$ operation); second, unmask these MASKs to the new level and parent ID ($\unmask$ operation). This continues until either the target node is reached (proceed to augmentation) or no new nodes can be discovered (algorithm terminates). See \Cref{fig:graph_bfs}.

\textbf{Path Augmentation:} When an augmenting path from source to target is found, we generate a 2-step transition to flip edges along the path and reset node features. First, for each edge on the path, swap its slot1 and slot2 values (reversing the direction), and simultaneously remask all node features to MASK ($\remask$ operation). Second, unmask all node MASKs back to their initial values: INF for non-source nodes and LVL0 for the source node, with all parents set to NIL ($\unmask$ operation). After augmentation, the algorithm returns to BFS phase to search for the next augmenting path. See \Cref{fig:graph_augmentation}.

\textbf{Termination and Structural Editing:} When no more augmenting paths exist, the final BFS identifies two disjoint sets $S$ and $T$ where $S$ contains the source and $T$ contains the target. The min-cut edges are those directed from nodes in $S$ to nodes in $T$ in the original graph, and these edges must be removed to disconnect source and target. This is accomplished through a 3-step process. First, remask all tokens representing the min-cut edges to MASK ($\remask$ operation). Second, delete these edges while simultaneously expanding a MASK token after EOA ($\delete$ + $\inser$ operations). Third, unmask the expanded MASK to EOS ($\unmask$ operation), marking algorithm completion. See \Cref{fig:graph_termination}.


\paragraph{ARM Baseline}
For comparison with AP-MDM, we train an autoregressive baseline that learns to generate the complete solving trajectory as a single sequence. The ARM baseline takes the initial graph state as input and must generate the full sequence of operations needed to solve the task. To enable this, we convert the AP-MDM training data into an ARM-compatible format by representing each operation as a triplet $(p, o, v)$ where $p$ is the position index (POSE0-POSE399), $o$ is the operation type (REMASK, UNMASK, INSERT, DELETE), and $v$ is the value (or NONE for operations without values). The sequence format is: $[\text{initial state } \mathbf{x}_0]$ STEP $[\text{operations}_1]$ STEP $[\text{operations}_2]$ STEP $\cdots$ ANSWER $[\text{final state } \mathbf{x}_{\text{final}}]$, where each STEP separates consecutive state transitions and ANSWER marks the beginning of the final output. For ARM training, we use the standard next-token prediction objective with teacher forcing, where the model learns to autoregressively generate the entire operation sequence given the initial state.

\begin{table}[t]
	\centering
\caption{Sequence length statistics for graphs of different sizes. AP-MDM sequences contain state-transition tuples, while ARM sequences enumerate all operations explicitly.}
\label{tab:graph_sequence_stats}
\resizebox{\textwidth}{!}{
\begin{tabular}{c|c|cc|cc}
\toprule
\multirow{2}{*}{\textbf{\# Nodes}} & \multirow{2}{*}{\textbf{\# Edges}} & \multicolumn{2}{c|}{\textbf{AP-MDM}} & \multicolumn{2}{c}{\textbf{ARM}} \\
& & \textbf{Avg. Seq. Length} & \textbf{Max Seq. Length} & \textbf{Avg. Seq. Length} & \textbf{Max Seq. Length} \\
\midrule
4 & 12 & 56 & 65 & 932 & 932 \\
5 & 17 & 81 & 88 & 1,375 & 1,403 \\
6 & 23 & 114 & 129 & 2,083 & 2,140 \\
7 & 29 & 148 & 170 & 2,687 & 2,874 \\
8 & 36 & 189 & 217 & 3,529 & 3,865 \\
9 & 43 & 236 & 252 & 5,586 & 6,597 \\
10 & 50 & 270 & 305 & 4,915 & 5,392 \\
\bottomrule
\end{tabular}
}
\end{table}

As shown in \Cref{tab:graph_sequence_stats}, the sequence length grows with graph size for both AP-MDM and ARM, but ARM requires significantly longer sequences due to the explicit operation enumeration. 

\begin{figure}[t]
	\centering
	\setcounter{subfigure}{0}
	\subfigure[]{\includegraphics[width=1.0\textwidth]{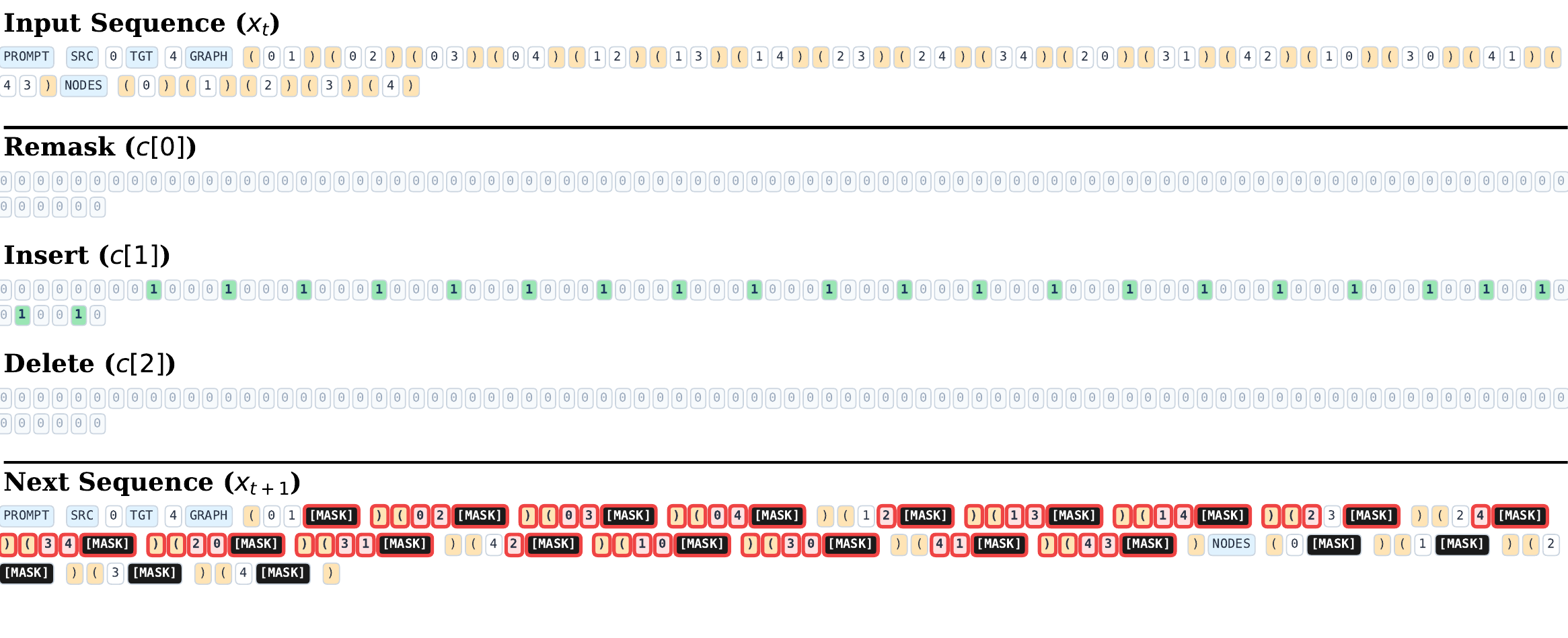}}
	
	\vspace{0.2cm}
	
	\subfigure[]{\includegraphics[width=1.0\textwidth]{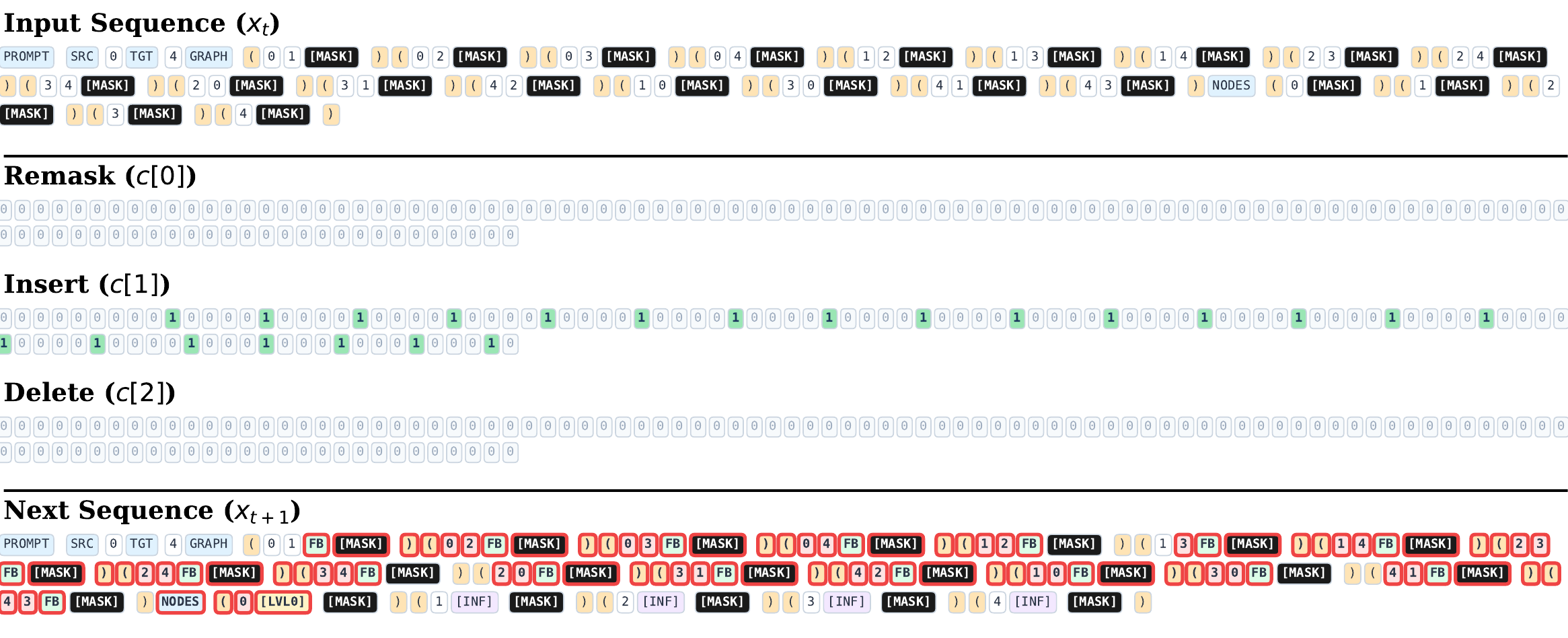}}
	
	\vspace{0.2cm}
	
	\subfigure[]{\includegraphics[width=1.0\textwidth]{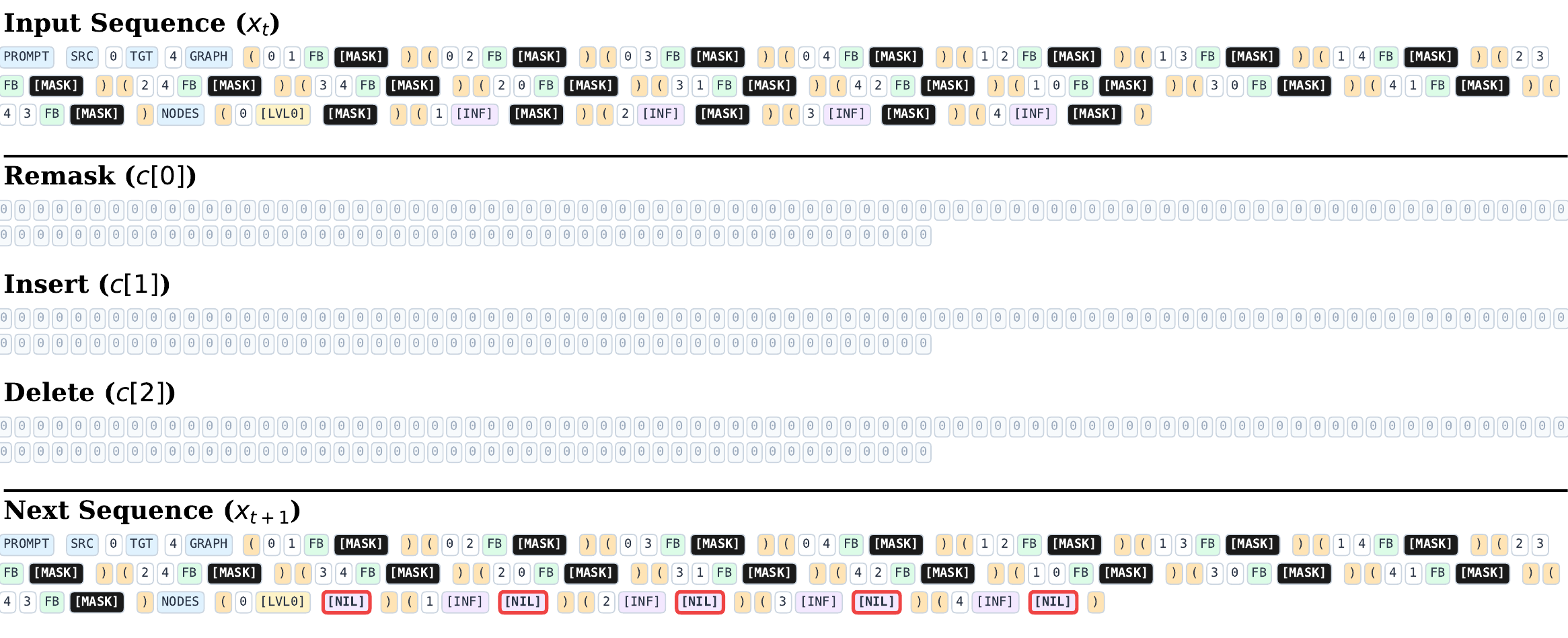}}
	\caption{Feature Expansion phase in graph generation, showing the initialization process for edge and node feature slots using $\inser$ and $\unmask$ operations.}
	\label{fig:graph_feature_expansion}
\end{figure} 

\begin{figure}[t]
	\centering
	\setcounter{subfigure}{0}
	\subfigure[]{\includegraphics[width=1.0\textwidth]{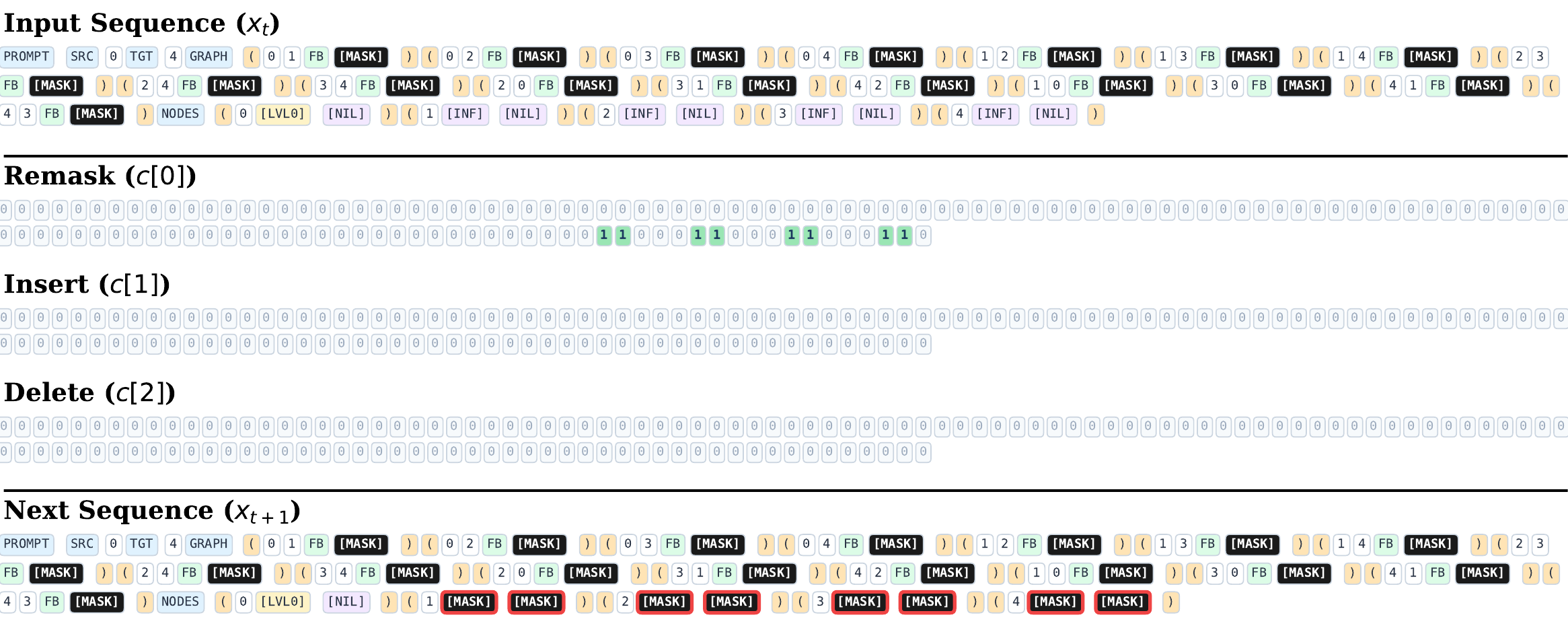}}
	
	\vspace{0.2cm}
	
	\subfigure[]{\includegraphics[width=1.0\textwidth]{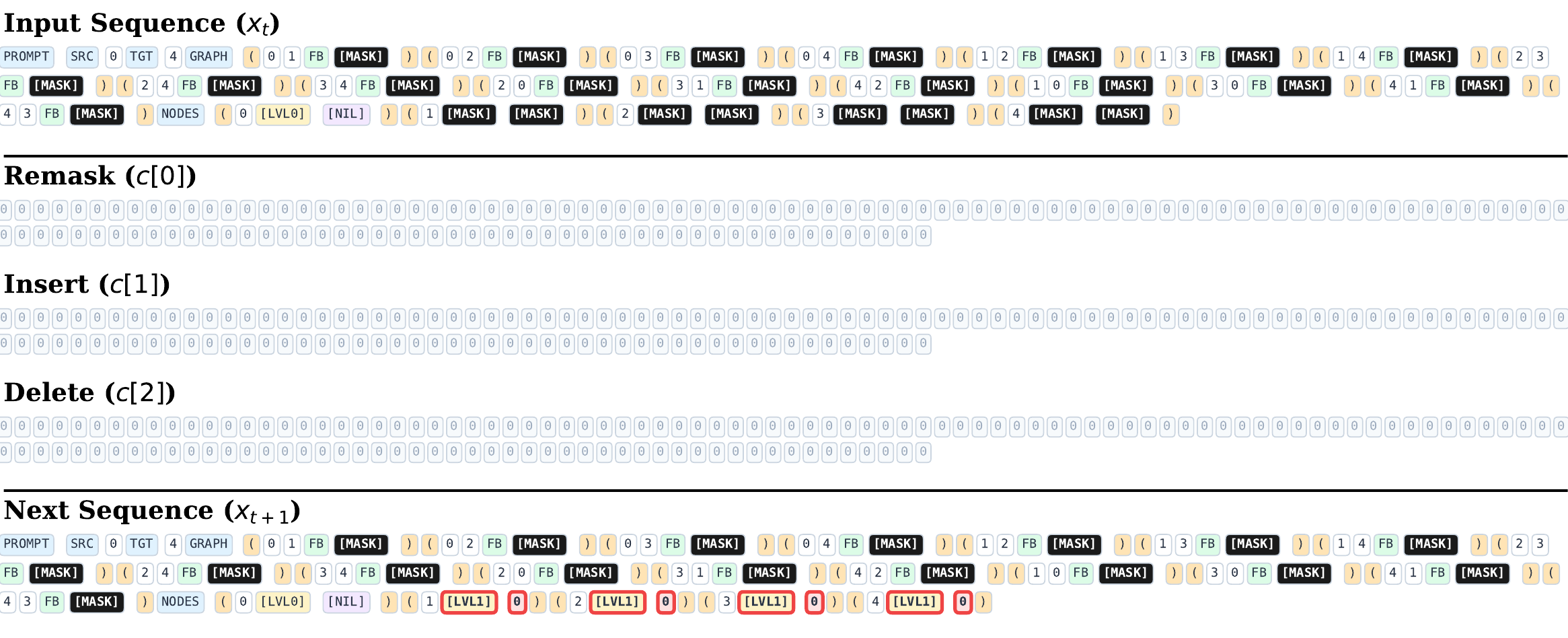}}
	\caption{Parallelized BFS phase in graph generation, showing layer-by-layer node discovery with parallel processing using $\remask$ and $\unmask$ operations.}
	\label{fig:graph_bfs}
\end{figure}

\begin{figure}[t]
	\centering
	\setcounter{subfigure}{0}
	\subfigure[]{\includegraphics[width=1.0\textwidth]{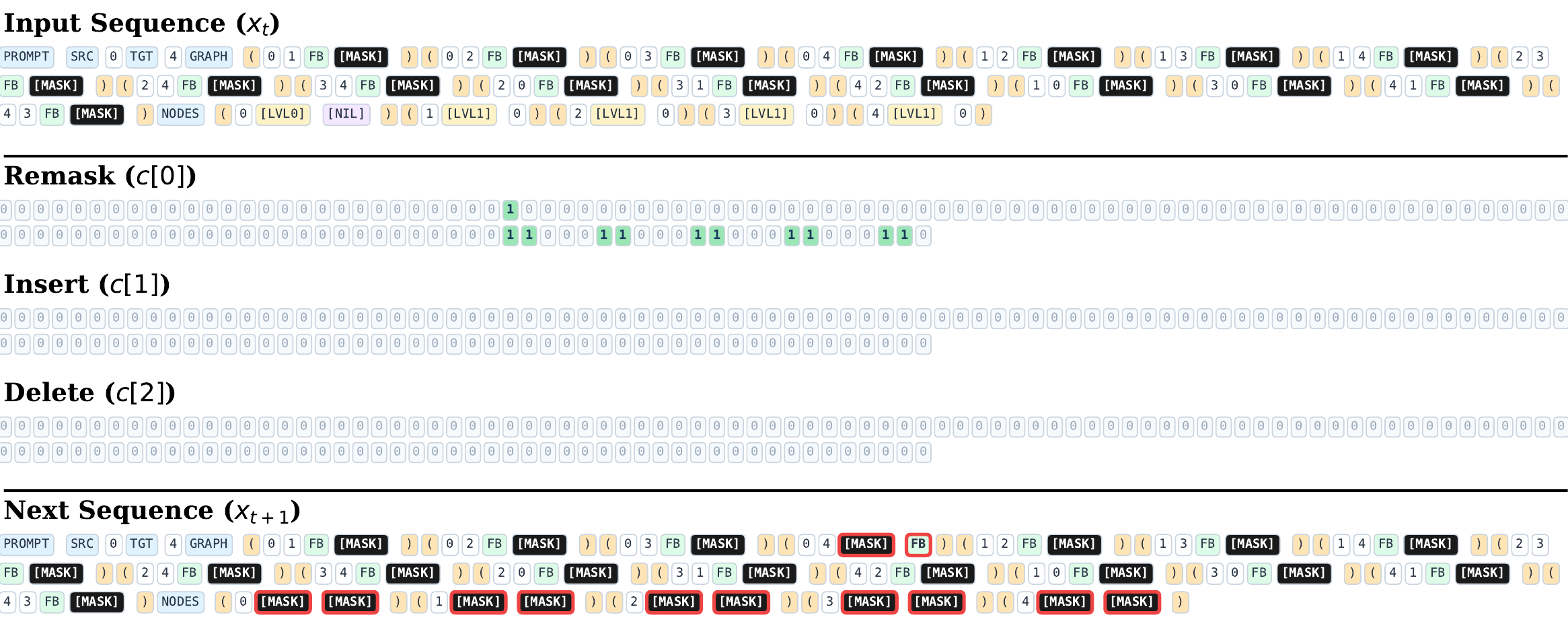}}
	
	\vspace{0.2cm}
	
	\subfigure[]{\includegraphics[width=1.0\textwidth]{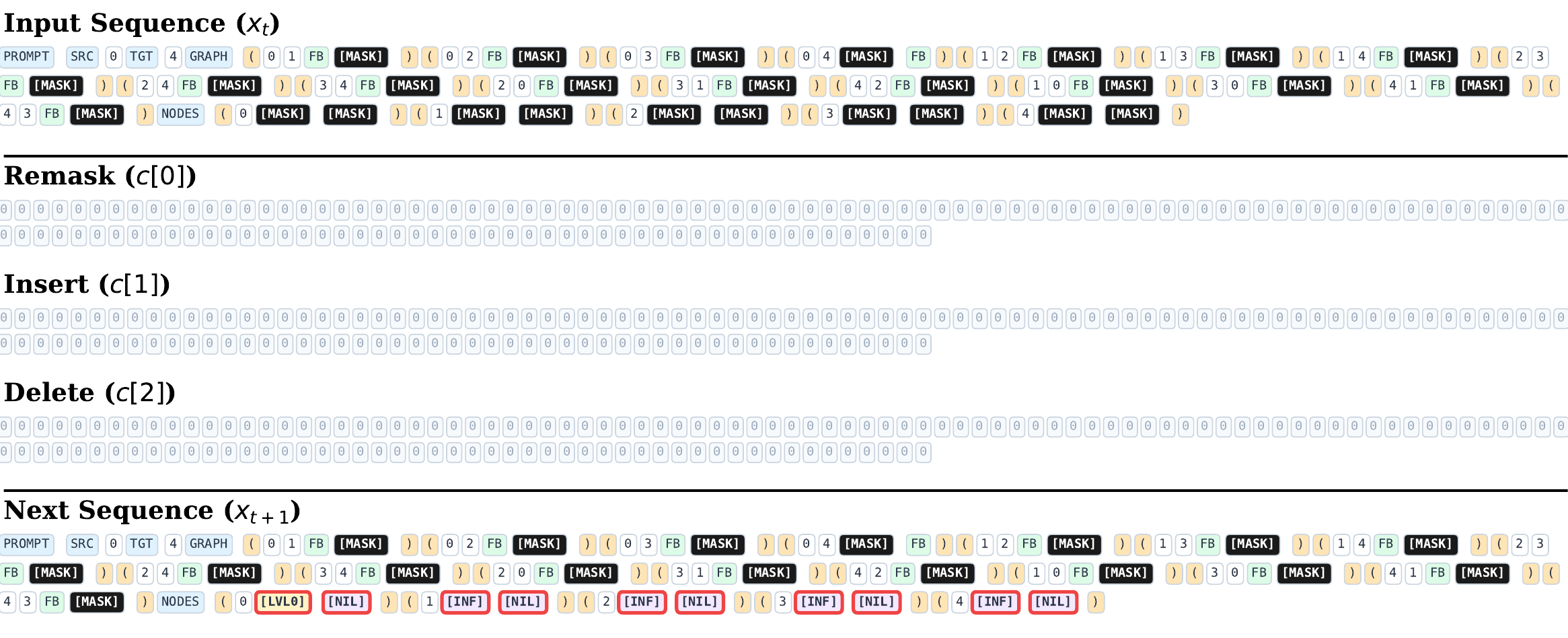}}
	\caption{Path Augmentation phase in graph generation, showing edge reversal and node feature reset after finding an augmenting path using $\remask$ and $\unmask$ operations.}
	\label{fig:graph_augmentation}
\end{figure}

\begin{figure}[t]
	\centering
	\setcounter{subfigure}{0}
	\subfigure[]{\includegraphics[width=1.0\textwidth]{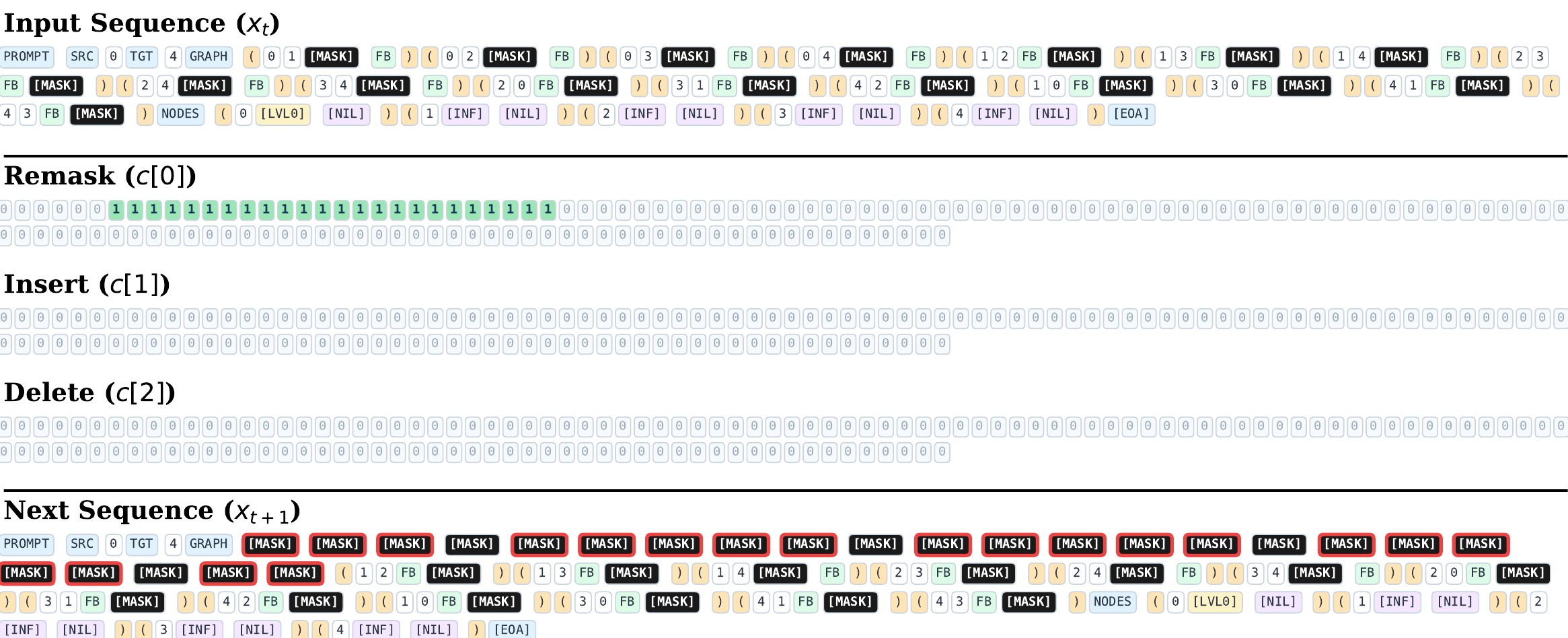}}
	
	\vspace{0.2cm}
	
	\subfigure[]{\includegraphics[width=1.0\textwidth]{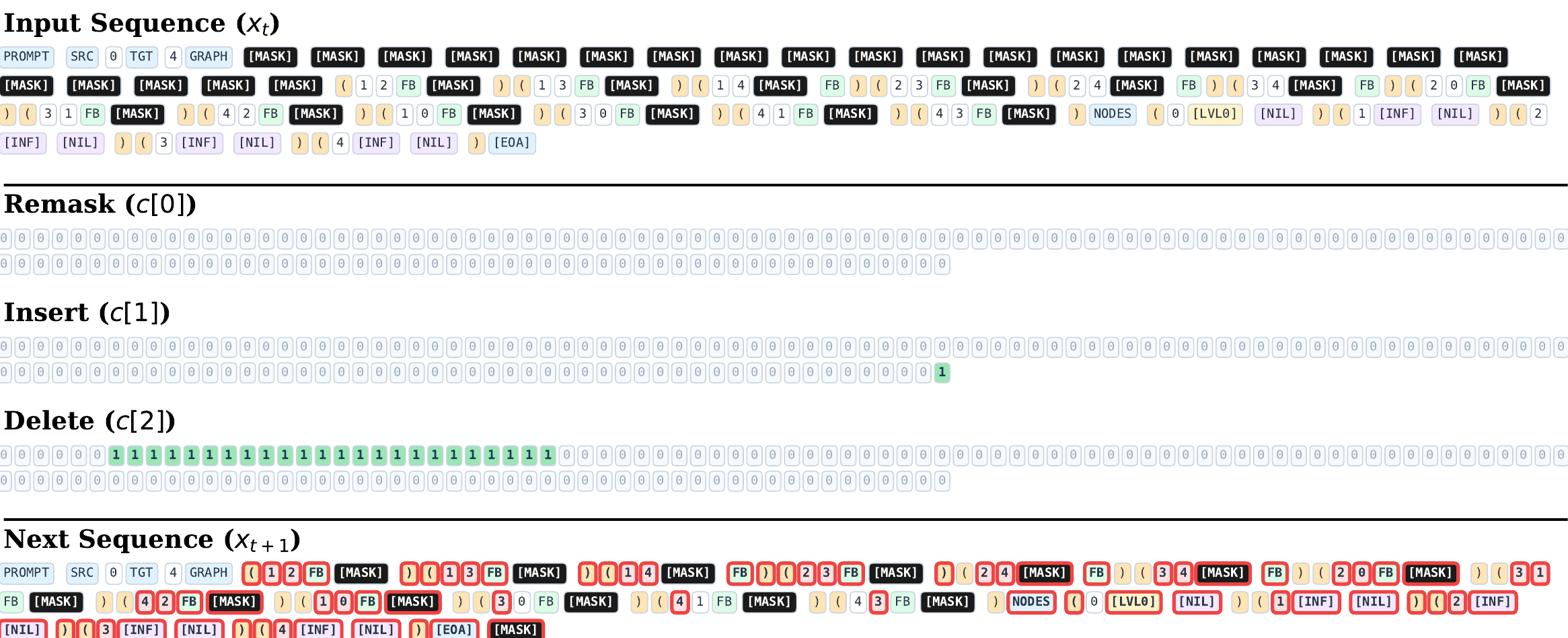}}
    \vspace{0.2cm}
	
	\subfigure[]{\includegraphics[width=1.0\textwidth]{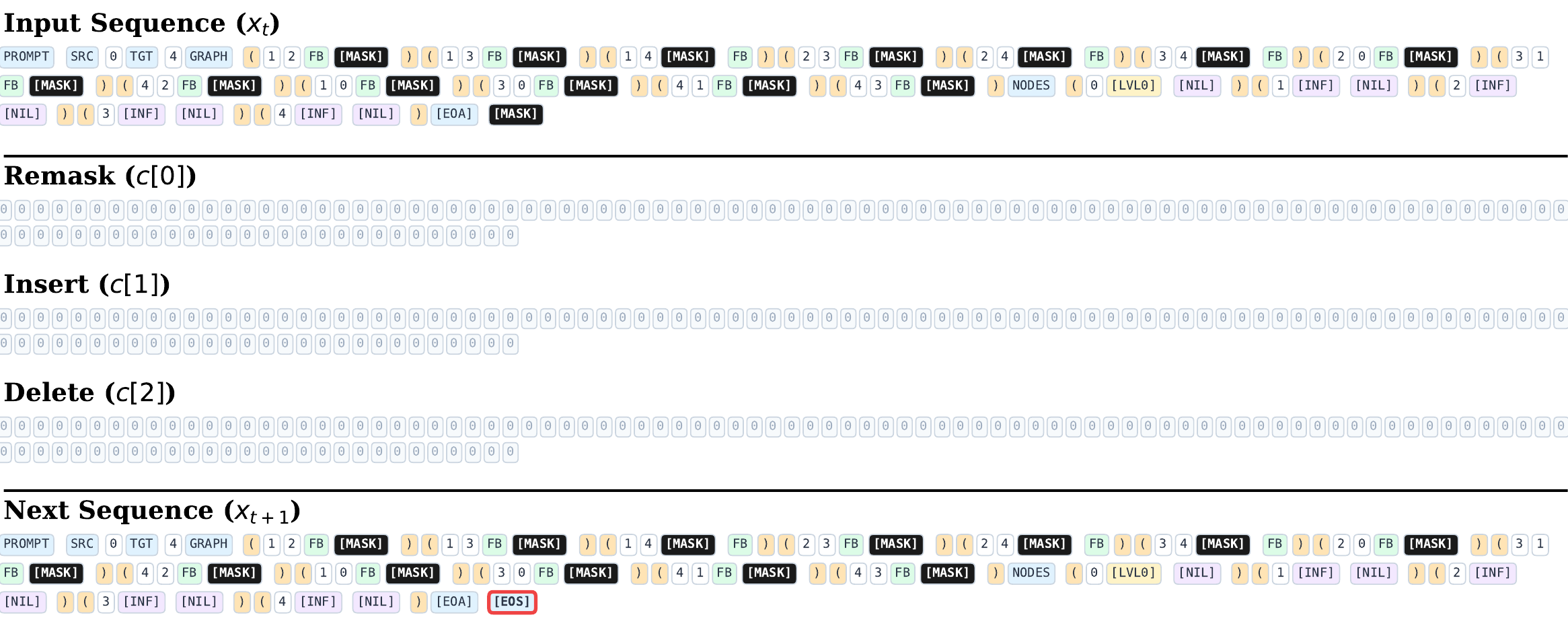}}
	\caption{Termination and Structural Editing phase, showing the deletion of min-cut edges and algorithm completion using $\remask$ and $\delete$ operations.}
	\label{fig:graph_termination}
\end{figure}

\section{Experiment: Parity}  \label{appendix:examples_parity}

The parity task requires determining whether a binary sequence contains an even or odd number of 1s. Formally, given an input sequence $\mathbf{x} = (x_1, x_2, \ldots, x_n) \in \{0, 1\}^n$, the task is to compute $\bigoplus_{i=1}^n x_i$ (XOR of all bits), outputting 0 for even parity and 1 for odd parity. 

\paragraph{Algorithm and Data Generation}
For AP-MDM training, we implement an elimination algorithm that mimics how humans naturally solve parity: repeatedly remove pairs of identical elements until only the result remains. The vocabulary consists of 5 tokens: BOS (sequence start), EOS (sequence end), MASK, digit 0, and digit 1. The elimination process follows these rules: when encountering 0s in the sequence, convert them to MASK; when encountering a pair of 1s, convert both to MASK; then delete the MASK tokens. This process repeats until only BOS remains (for even parity) or BOS followed by a single 1 remains (for odd parity). Each elimination step generates a state-transition tuple: converting tokens to MASK is a $\remask$ operation, and removing MASKs is a $\delete$ operation.

\paragraph{Data}
The training data consists of only 4 instances that cover all possible elimination patterns. Each sample is a single-step state transition demonstrating one atomic operation. The test set contains 1,000 randomly generated binary sequences with variying lengths. The test sequences have approximately equal distribution of even and odd parities.

\paragraph{ARM Baseline}
For the ARM baseline, we train autoregressive models with chain-of-thought reasoning where the model generates intermediate cumulative XOR values at each position before outputting the final result. The sequence structure is: BOS $[x_1 \, x_2 \, \cdots \, x_n]$ EOP $[s_1 \, s_2 \, \cdots \, s_n \, \text{result}]$ EOS, where $s_i = \bigoplus_{j=1}^i x_j$ represents the cumulative XOR up to position $i$, and result is True (for odd parity) or False (for even parity). Only the content after EOP is used for computing the training loss. We train ARM models with up to 10K training instances at various fixed lengths (e.g., length 2, 10, 50, 100) and evaluate their ability to generalize to longer unseen lengths.

\end{document}